\newtheorem{defn}{Definition}
\newtheorem{theorem}{Theorem}
\newtheorem{lemma}{Lemma}
\newtheorem{corollary}{Corollary}
\newtheorem{proposition}{Proposition}
\newtheorem{conject}{Conjecture}
\newcommand{\f}{f}
\newcommand{\g}{f^{-1}}
\renewcommand{\geq}{\geqslant}
\renewcommand{\leq}{\leqslant}
\newcommand{\bm}[1]{\boldsymbol #1}
\newcommand{\btheta}{\boldsymbol \theta}
\renewcommand{\Box}{\hspace*{\fill}~$\square$ \\}
\newcommand{\balpha}{\boldsymbol{\alpha}}
\newcommand{\RealR}{\mbox{$\mathbb{R}$}}
\newcommand{\Real}{\mbox{$\mathbb{R}$}}
\begin{document}

\title{Analytic Network Learning}
\date{}
\author{Kar-Ann Toh \\
{\footnotesize School of Electrical and Electronic Engineering}\\*[-1mm] {\footnotesize
Yonsei University, Seoul, Korea\ 03722}  } \maketitle

\begin{abstract}
    Based on the property that solving the system of linear matrix equations via
    the column space and the row space projections boils down to an approximation in
    the least squares error sense, a formulation for learning the weight
    matrices of the multilayer network can be derived. By exploiting into the vast
    number of feasible solutions of these interdependent weight matrices,
    the learning can be performed analytically layer
    by layer without needing of gradient computation after an initialization. Possible
    initialization schemes include utilizing the data matrix as initial weights
    and random initialization.
    The study is followed by an investigation into the representation capability and
    the output variance of the learning scheme. An extensive experimentation
    on synthetic and real-world data sets validates its numerical feasibility.
\end{abstract}
{\bf Keywords: } Multilayer Neural Networks, Least Squares Error, Linear Algebra, Deep
Learning.

\section{Introduction}

\subsection{Background}

Attributed to its high learning capacity with good prediction capability, the deep neural
network has found its advantage in wide areas of science and engineering applications.
Such an observation has sparked a surge of investigations into the architectural and
learning aspects of the deep network for targeted applications.

The main ground for realizing the high learning capacity and predictivity comes from
several major advancements in the field which include the processing platform, the
learning regimen, and the availability of big data size. In terms of the processing
platform, the advancement in Graphics Processing Units (GPUs) has facilitated parallel
processing of complex network learning within accessible time. Together with the
relatively low cost of the hardware, the large number of public high level open source
libraries has enabled a crowdsourcing mode of learning architectural exploration. Based
on such a learning platform, several learning regimens such as the Convolutional Neural
Network (CNN or LeNet-5) \cite{LeCun3}, the AlexNet \cite{AlexK1}, the GoogLeNet or
Inception \cite{Szegedy1}, the Visual Geometry Group Network (VGG Net)
\cite{Simonyan14verydeep}, the Residual Network (ResNet) \cite{HeKaiming1} and the
DenseNet \cite{HuangGao1} have stretched the network learning in terms of the network
depth and prediction capability way beyond the known boundary established by the
conventional statistical methods.

Without sufficiently convincing explanation in theory, the advancement of deep learning
has been grounded upon `big' data, powerful machinery and crowd efforts to achieve at
`breakthrough' results that were not possible before. Such a swarming phenomenon has
pushed forward the demand of hardware as well as middle ware, but at the expense of
masking the importance of fundamental results available in statistical decision theory.
The research scene has arrived at such a state of deeming results unacceptable without
working directly on or comparing them with `big' data which implicitly relies on powerful
machinery.

\subsection{Motivation and Contributions}

Despite the great success in applications, understanding of the underneath learning
mechanism towards the representation and generalization properties gained from the
network depth is being far fetched and becoming imperative. From the perspective of
nonlinearity incurred by the activation functions in each layer, analyzing the learning
properties of the network becomes extremely difficult.

In terms of the optimality of network learning, several investigations in the literature
can be found. For the two-layer linear network, back in the year 1988, Baldi and Hornik
\cite{Hornik4,Baldi1} showed that the network was convex in each of its two weight
matrices and every local minimum was a global minimum. In 2012, Baldi and Lu
\cite{Baldi2} extended the result of convexity to deep linear network while conjecting
that every local minimum was a global minimum. Recently, Kawaguchi \cite{Kawaguchi1}
showed that the loss surface of deep linear networks was non-convex and non-concave,
every local minimum was a global minimum, every critical point that was not a global
minimum was a saddle point, and the rank of weight matrix at saddle points. These
observations, represented in the form of Directed Acyclic Graph (DAG), were carried
forward to the nonlinear networks with fewer unrealistic assumptions than that in
\cite{LeCun4,LeCun5} which were inspired from the Hamiltonian of the spherical spin-glass
model. Subsequently, Lu and Kawaguchi \cite{Kawaguchi2} showed that depth in linear
networks created no bad local minima. More recently, Yun et  al. \cite{YunCH1} provided
sufficient conditions for global optimality in deep nonlinear networks when the
activation functions satisfied certain invertibility, differentiability and bounding
conditions.

From the geometrical view of the loss function, Dinh et  al. \cite{YBengio2} argued that
an appropriate definition and use of minima in terms of their sharpness could help in
understanding the generalization behavior. With the understanding that the difficulty of
deep search was originated from the proliferation of saddle points and not local minima,
Dauphin et al. \cite{YBengio3} attempted a saddle-free Newton's method (see e.g.,
\cite{ChenKe2}) to search for the minima in deep networks. In \cite{LiHao1}, a
visualization technique based on filter normalization was introduced for studying the
loss landscape. By comparing networks with and without skip connections, the study showed
that the smoothness of loss surface depended highly on the network architecture. The
landscape of the empirical risk for deep learning of convolutional neural networks was
investigated in \cite{Poggio7}. By extending the case of linear networks in
\cite{Gunasekar1} to the case of nonlinear networks, Poggio et al. \cite{Poggio10} showed
that deep networks did not usually overfit the classification error for low-noise data
sets and the solution corresponded to margin maximization.

From the view of having a theoretical bound for the estimation error, in 2002, Langford
and Caruana \cite{Langford1} investigated into generalization in terms of the true error
rate bound of a distribution over a set of neural networks using the PAC-Bayes bound.
This approach had sparked off several follow-ups (see e.g.,
\cite{Dziugaite1,Neyshabur1,Neyshabur2}). Recently, Bartlett et al. \cite{Bartlett1}
proposed a margin-based generalization bound which was spectrally-normalized. Apart from
an empirical investigation of several capacity bounds based on the $\ell_2$,
$\ell_1$-path, $\ell_2$-path and the spectral norms \cite{Neyshabur3}, in
\cite{Neyshabur2}, the margin-based perturbation bound was combined with the PAC-Bayes
analysis to derive the generalization bound. Using the gap between the expected risk and
the empirical risk, Kawaguchi et al. \cite{Kawaguchi3} provided generalization bounds for
deep models which do not have explicit dependency on the number of weights, the depth and
the input dimensionality. In view of the lack of an analytic structure in learning, many
of these analyses were hinged upon the Stochastic Gradient Descent (SGD, see
e.g.,\cite{Brunelli11,Patrick11,ChenKe1}) search towards certain stationary points (e.g.,
\cite{Bartlett1,Poggio7,Dziugaite1,Neyshabur1}). Moreover, many of the analyses of the
linear network or the nonlinear one treated the network as a Directed Acyclic Graph (DAG)
model (see e.g., \cite{Gunasekar1,Neyshabur1,Kawaguchi3}) where it turned into the
complicated problem of topological sorting.

In \cite{Toh97,Toh98}, we have established a gradient-free learning approach that results
in an analytic learning of layered networks of fully connected structure. This simplifies
largely the analytical aspects of network learning. In this work, we further explore the
analytic learning on networks with receptive field. Particularly, the contributions of
current study include (1) establishment of an analytic learning framework where its goal
is to find a mapping matrix such that the target matrix falls within its range space. In
other words, a representative mapping is sought after based on the data matrix and its
Moore-Penrose inverse; (2) proposal of an analytic learning algorithm which utilizes the
sparse structural information and showcase two initialization possibilities; (3)
establishment of conditions for network representation on a finite set of data; (4)
investigation of the network output variance with respect to the network depth; and (5)
an extensive numerical study to validate the proposal.

\subsection{Organization}

The article is organized as follows. An introduction to several related concepts is given
in Section~\ref{sec_prelim} to pave the way for our development. Particularly, the
Moore-Penrose inverse and its relation to least squares error approximation is defined
and stated, together with the network structure of interest.
Section~\ref{sec_net_learning} presents the main results of network learning in analytic
form. Here, apart from a random initialization scheme, a deterministic initialization
scheme based on the data matrix is introduced. This sheds some lights on the underlying
learning mechanism. In Section~\ref{sec_net_representation}, the issue of network
representation is shown in view of a finite set of training data. The network
generalization is further explored via the output variance of the linear structure. In
Section~\ref{sec_synthetic}, two cases of synthetic data are studied to observe the
fitting behavior and the representation property of the proposed learning. In
Section~\ref{sec_expts}, the proposed learning is experimented on 42 data sets taken from
the public domain to validate its numerical feasibility. Finally, some concluding remarks
are given in Section~\ref{sec_conclusion}.

\section{Preliminaries} \label{sec_prelim}

\subsection{Moore-Penrose Inverse and Least Squares Solution}

\begin{defn} (see e.g., \cite{SLCampbell1,Albert1,Adi1,MacAusland1})
    A least squares solution to the system ${\bf A}{\btheta}={\bf y}$, where ${\bf
        A}\in\Real^{m\times d}$, ${\btheta}\in\Real^{d\times 1}$ and ${\bf y}\in\Real^{m\times
        1}$, is a vector $\hat{\btheta}$ such that
    \begin{equation}\label{eqn_LS_defn}
    \|\hat{\bf e}\| = \|{\bf A}\hat{\btheta}-{\bf y}\|
    \leq \|{\bf A}{\btheta}-{\bf y}\|
    =\|{\bf e}\|,
    \end{equation}
    where $\|\cdot\|$ denotes the $\ell^2$-norm of ``$\cdot$''.
\end{defn}

\begin{defn} (see e.g., \cite{Adi1,SLCampbell1,Albert1,MacAusland1}) \label{defn_pseudoinverse}
    If ${\bf A}$ (square or rectangular) is a matrix of real or complex elements, then there
    exits a unique matrix ${\bf A}^{\dag}$, known as the Moore-Penrose inverse or the
    pseudoinverse of ${\bf A}$, such that (i) ${\bf A}{\bf A}^{\dag}{\bf A}={\bf A}$, (ii)
    ${\bf A}^{\dag}{\bf A}{\bf A}^{\dag}={\bf A}^{\dag}$, (iii) $({\bf A}{\bf
        A}^{\dag})^*={\bf A}{\bf A}^{\dag}$ and (iv) $({\bf A}^{\dag}{\bf A})^*={\bf
        A}^{\dag}{\bf A}$ where $^*$ denotes the conjugate transpose.
\end{defn}

\begin{lemma} \label{thm_LSapprox}
    (see e.g., \cite{Albert1,SLCampbell1,Adi1,MacAusland1}) $\hat{\btheta}={\bf A}^{\dag}{\bf y}$ is the best
    approximate solution of ${\bf A}{\btheta}={\bf y}$.
\end{lemma}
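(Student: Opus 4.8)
The plan is to establish two things: first, that $\hat{\btheta} = {\bf A}^{\dag}{\bf y}$ is a least squares solution in the sense of \eqref{eqn_LS_defn}, and second, that among all least squares solutions it is the one of minimum $\ell^2$-norm — which is what ``best approximate solution'' should mean. For the first part, I would write any candidate residual as ${\bf A}{\btheta} - {\bf y} = ({\bf A}{\btheta} - {\bf A}\hat{\btheta}) + ({\bf A}\hat{\btheta} - {\bf y})$ and show the two summands are orthogonal, so that by the Pythagorean identity $\|{\bf A}{\btheta} - {\bf y}\|^2 = \|{\bf A}({\btheta} - \hat{\btheta})\|^2 + \|{\bf A}\hat{\btheta} - {\bf y}\|^2 \geq \|{\bf A}\hat{\btheta} - {\bf y}\|^2$. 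The orthogonality is exactly where the Moore--Penrose axioms enter: the residual at $\hat{\btheta}$ is ${\bf A}{\bf A}^{\dag}{\bf y} - {\bf y} = ({\bf A}{\bf A}^{\dag} - {\bf I}){\bf y}$, and using axiom (i) together with the symmetry axiom (iii), one checks that ${\bf A}^*({\bf A}{\bf A}^{\dag} - {\bf I}) = ({\bf A}{\bf A}^{\dag}{\bf A})^* - {\bf A}^* = {\bf A}^* - {\bf A}^* = {\bf 0}$, so $({\bf A}({\btheta}-\hat{\btheta}))^*({\bf A}\hat{\btheta}-{\bf y}) = ({\btheta}-\hat{\btheta})^*{\bf A}^*({\bf A}{\bf A}^{\dag}-{\bf I}){\bf y} = 0$.

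For the minimum-norm claim, I would let $\btheta$ be any least squares solution, so that ${\bf A}\btheta = {\bf A}\hat{\btheta}$ (both achieve the minimal residual, and since the residual is an affine function whose squared norm is strictly convex along directions not in the kernel of ${\bf A}$, equality of residual norms forces equality of ${\bf A}\btheta$ and ${\bf A}\hat\btheta$ — alternatively this also follows from the orthogonality decomposition above). Then $\btheta - \hat{\btheta} \in \ker {\bf A}$. I would show $\hat{\btheta} = {\bf A}^{\dag}{\bf y} = {\bf A}^{\dag}{\bf A}\hat{\btheta} \in \mathrm{range}({\bf A}^{\dag}{\bf A})$, and that $\mathrm{range}({\bf A}^{\dag}{\bf A}) \perp \ker {\bf A}$: indeed ${\bf A}^{\dag}{\bf A}$ is self-adjoint by axiom (iv) and idempotent by axiom (i) (since $({\bf A}^{\dag}{\bf A})({\bf A}^{\dag}{\bf A}) = {\bf A}^{\dag}({\bf A}{\bf A}^{\dag}{\bf A}) = {\bf A}^{\dag}{\bf A}$), hence an orthogonal projector, and its range is the orthogonal complement of its kernel, which is $\ker {\bf A}$. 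So $\hat{\btheta} \perp (\btheta - \hat{\btheta})$, and Pythagoras gives $\|\btheta\|^2 = \|\hat{\btheta}\|^2 + \|\btheta - \hat{\btheta}\|^2 \geq \|\hat{\btheta}\|^2$, with equality only when $\btheta = \hat{\btheta}$.

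The main obstacle is not any single hard step but making sure the Moore--Penrose axioms are deployed in the right combination: axiom (iii) for the orthogonality of the optimal residual to the column space, and axioms (i) and (iv) together for the fact that ${\bf A}^{\dag}{\bf A}$ is the orthogonal projector onto the row space (equivalently $(\ker {\bf A})^{\perp}$). Once those two structural facts are in place, both inequalities are immediate applications of the Pythagorean theorem. I would also note that uniqueness of ${\bf A}^{\dag}$ is already granted by Definition~\ref{defn_pseudoinverse}, so the ``best approximate solution'' is well defined without further argument. $\blacksquare$
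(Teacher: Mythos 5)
Your proof is correct, and it is worth noting that it is more than the paper itself supplies: the paper states Lemma~\ref{thm_LSapprox} with citations only, and the surrounding text merely motivates it in the two full-rank special cases, where ${\bf A}^{\dag}=({\bf A}^T{\bf A})^{-1}{\bf A}^T$ (full column rank, so ${\bf A}^{\dag}{\bf A}={\bf I}$) or ${\bf A}^{\dag}={\bf A}^T({\bf A}{\bf A}^T)^{-1}$ (full row rank, so ${\bf A}{\bf A}^{\dag}={\bf I}$), deferring the general case to the cited references. Your argument, by contrast, works for an arbitrary matrix directly from the four Penrose axioms: axioms (i) and (iii) give ${\bf A}^*({\bf A}{\bf A}^{\dag}-{\bf I})={\bf 0}$, hence orthogonality of the residual to the column space and the least-squares inequality by Pythagoras; axioms (i), (ii) and (iv) make ${\bf A}^{\dag}{\bf A}$ an orthogonal projector with $\hat{\btheta}$ in its range, giving the minimum-norm property, which is exactly the sense in which ``best approximate solution'' is meant (and which the paper relies on again in Lemma~\ref{lemma_LS_matrix}). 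This is the standard textbook route (e.g., Ben-Israel and Greville), and it buys a self-contained, rank-free justification that the paper's informal discussion does not provide. The only cosmetic gap is that you assert $\mathrm{range}({\bf A}^{\dag}{\bf A})=(\ker{\bf A})^{\perp}$ via ``its kernel is $\ker{\bf A}$'' without the one-line check: $\ker{\bf A}\subseteq\ker({\bf A}^{\dag}{\bf A})$ trivially, and conversely ${\bf A}^{\dag}{\bf A}{\bf x}={\bf 0}$ implies ${\bf A}{\bf x}={\bf A}{\bf A}^{\dag}{\bf A}{\bf x}={\bf 0}$ by axiom (i); adding that sentence closes the argument completely.
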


If the linear system of equations ${\bf A}{\btheta}={\bf y}$ is an over-determined one
(i.e., $m>d$) and the columns of ${\bf A}$ are linearly independent, then ${\bf
    A}^{\dag}=({\bf A}^T{\bf A})^{-1}{\bf A}^T$ and ${\bf A}^{\dag}{\bf A}={\bf I}$. The
solution to the system can be derived via pre-multiplying ${\bf A}^{\dag}$ to both sides
of ${\bf A}{\btheta}={\bf y}$ giving ${\bf A}^{\dag}{\bf A}\hat{\btheta}={\bf
    A}^{\dag}{\bf y}$ which leads to the result in Lemma~\ref{thm_LSapprox} since ${\bf
    A}^{\dag}{\bf A}={\bf I}$. On the other hand, if ${\bf A}{\btheta}={\bf y}$ is an
under-determined system (i.e., $m<d$) and the rows of ${\bf A}$ are linearly independent,
then ${\bf A}^{\dag}={\bf A}^T({\bf A}{\bf A}^T)^{-1}$ and ${\bf A}{\bf A}^{\dag}={\bf
    I}$. Here, the solution to the system can be derived by substituting $\hat{\btheta}={\bf
    A}^{\dag}\hat{\balpha}$ (a projection of $d$ space onto the $m$ subspace) into the system
and then solve for $\hat{\balpha}\in\Real^m$ which results in $\hat{\balpha}={\bf y}$.
The result in Lemma~\ref{thm_LSapprox} is again obtained by substituting
$\hat{\balpha}={\bf y}$ into $\hat{\btheta}={\bf A}^{\dag}\hat{\balpha}$. In practice,
the full rank condition may not be easily achieved for data with large dimension without
regularization. For such cases, the Moore-Penrose inverse in
Definition~\ref{defn_pseudoinverse} exists and is unique. This means that for any matrix
${\bf A}$, there is precisely one matrix ${\bf A}^{\dag}$ that satisfies the four
properties of the Penrose equations in Definition~\ref{defn_pseudoinverse}. In general,
if ${\bf A}$ has a full rank factorization such that ${\bf A}={\bf F}{\bf G}$, then ${\bf
    X}^{\dag}={\bf G}^*({\bf G}{\bf G}^*)^{-1}({\bf F}^*{\bf F})^{-1}{\bf F}^*$ satisfies the Penrose
equations \cite{Adi1}.

The above algebraic relationships show that \emph{learning of a linear system in the
    sense of least squares error minimization can be achieved by manipulating its kernel or
    the range space via the Moore-Penrose inverse operation} (i.e., multiplying the
pseudoinverse of a system matrix to both sides of the equation boils down to an implicit
least squares error minimization, the readers are referred to \cite{Toh98} for greater
details regarding the least error and the least norm properties). For linear systems with
multiple ($q$) output columns, the following result is observed.

\begin{lemma} \label{lemma_LS_matrix} \cite{Toh97}
    Solving for $\bm{\Theta}$ in the system of linear equations of the form
    \begin{equation}\label{eqn_LinearEqn_matrix}
    {\bf A}\bm{\Theta} = {\bf Y}, \ \ \ {\bf A}\in\Real^{m\times d},\ \bm{\Theta}\in\Real^{d\times q},
    \ {\bf Y}\in\Real^{m\times q}
    \end{equation}
    in the column space (range) of ${\bf A}$ or in the row space (kernel) of ${\bf A}$ is
    equivalent to minimizing the sum of squared errors given by
    \begin{equation}\label{eqn_squared_error_distance_trace}
    \textup{SSE} = trace\left( ({\bf A}\bm{\Theta}-{\bf Y})^T({\bf A}\bm{\Theta}-{\bf Y}) \right).
    \end{equation}
    Moreover, the resultant solution $\hat{\bm{\Theta}}={\bf A}^{\dag}{\bf Y}$ is unique with a minimum-norm value
    in the sense that $\|\hat{\bm{\Theta}}\|^2_2\leq\|\bm{\Theta}\|^2_2$ for all feasible
    $\bm{\Theta}$.
\end{lemma}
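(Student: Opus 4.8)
The plan is to reduce the matrix system \eqref{eqn_LinearEqn_matrix} to $q$ independent vector systems of the form treated in Lemma~\ref{thm_LSapprox}, exploit the column-wise structure of the Frobenius norm, and then invoke the Moore-Penrose properties of Definition~\ref{defn_pseudoinverse} to pin down uniqueness and minimality. First I would write $\bm{\Theta}=[\btheta_1,\dots,\btheta_q]$ and $\bf{Y}=[\bf{y}_1,\dots,\bf{y}_q]$ column by column, and observe that
\begin{equation}\label{eqn_trace_split}
\mathrm{SSE} = trace\!\left(({\bf A}\bm{\Theta}-{\bf Y})^T({\bf A}\bm{\Theta}-{\bf Y})\right)
= \sum_{j=1}^{q}\|{\bf A}\btheta_j - {\bf y}_j\|^2,
\end{equation}
since the diagonal entries of $({\bf A}\bm{\Theta}-{\bf Y})^T({\bf A}\bm{\Theta}-{\bf Y})$ are exactly the squared $\ell^2$-norms of the residual columns. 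This decouples the objective: the SSE is minimized precisely when each summand $\|{\bf A}\btheta_j-{\bf y}_j\|^2$ is individually minimized, which by the Definition of least squares solution and Lemma~\ref{thm_LSapprox} is achieved by $\hat{\btheta}_j = {\bf A}^{\dag}{\bf y}_j$. Stacking these columns gives $\hat{\bm{\Theta}} = {\bf A}^{\dag}{\bf Y}$, establishing the SSE-minimization claim.

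Next I would argue that solving ``in the column space or row space of ${\bf A}$'' is the same operation. Writing a full-rank factorization ${\bf A}={\bf F}{\bf G}$ as in the excerpt, pre-multiplying ${\bf A}\bm{\Theta}={\bf Y}$ by the left factor's pseudoinverse handles the range-space projection, while substituting $\bm{\Theta}={\bf A}^{\dag}\bm{\Lambda}$ handles the kernel (row-space) projection; in both routes the Penrose identities (i)--(iv) collapse the composite map to ${\bf A}^{\dag}$, so the two readings of the lemma produce the same $\hat{\bm{\Theta}}={\bf A}^{\dag}{\bf Y}$. This mirrors exactly the scalar discussion following Lemma~\ref{thm_LSapprox}, just applied columnwise, so no new computation is needed beyond citing those Penrose properties.

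For uniqueness and the minimum-norm property, I would take any feasible $\bm{\Theta}$ (i.e. any matrix attaining the minimal SSE) and show columnwise that $\btheta_j$ must satisfy ${\bf A}^T{\bf A}\btheta_j = {\bf A}^T{\bf y}_j$ (the normal equations), hence $\btheta_j = {\bf A}^{\dag}{\bf y}_j + {\bf z}_j$ with ${\bf z}_j \in \ker {\bf A}$ and ${\bf A}^{\dag}{\bf y}_j \perp \ker {\bf A}$ (the latter because the row space of ${\bf A}$, which contains ${\bf A}^{\dag}{\bf y}_j$ by property (ii) written as ${\bf A}^{\dag}={\bf A}^{\dag}{\bf A}{\bf A}^{\dag}$, is orthogonal to $\ker {\bf A}$). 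By the Pythagorean identity $\|\btheta_j\|^2 = \|{\bf A}^{\dag}{\bf y}_j\|^2 + \|{\bf z}_j\|^2 \geq \|{\bf A}^{\dag}{\bf y}_j\|^2$, with equality iff ${\bf z}_j={\bf 0}$; summing over $j$ gives $\|\bm{\Theta}\|_2^2 \geq \|\hat{\bm{\Theta}}\|_2^2$ with equality only at $\hat{\bm{\Theta}}$, which is both the minimum-norm statement and uniqueness of the minimum-norm least-squares solution. The main obstacle, such as it is, is being careful about what ``feasible'' means and keeping the orthogonal-decomposition bookkeeping clean: the result is really $q$ copies of the classical vector fact, so the work is in justifying the trace-to-column-sum split \eqref{eqn_trace_split} and the orthogonality ${\bf A}^{\dag}{\bf y}_j \perp \ker {\bf A}$ cleanly from the Penrose equations rather than from coordinates.
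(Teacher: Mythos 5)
Your proposal is correct, and its first half is essentially the paper's own argument: both write $\bm{\Theta}$ and ${\bf Y}$ column by column and use the fact that the trace equals the sum of the squared residual norms $\sum_j\|{\bf A}\btheta_j-{\bf y}_j\|^2$, so that the matrix problem decouples into $q$ copies of the vector least-squares problem. Where you genuinely diverge is in the second half. The paper expresses the minimizer via the explicit formulas $({\bf A}^T{\bf A})^{-1}{\bf A}^T{\bf Y}$ (column space) and ${\bf A}^T({\bf A}{\bf A}^T)^{-1}{\bf Y}$ (row space), which implicitly assume full column or row rank, and it settles the ``moreover'' clause by citing Duda et al.\ together with the loose remark that the column and row spaces are independent. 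You instead work directly with ${\bf A}^{\dag}$, characterize every feasible $\bm{\Theta}$ through the normal equations, decompose $\btheta_j={\bf A}^{\dag}{\bf y}_j+{\bf z}_j$ with ${\bf z}_j\in\ker{\bf A}$, and obtain both the minimum-norm inequality and uniqueness from the orthogonality of the row space to $\ker{\bf A}$ (via the Penrose identities) and Pythagoras. This buys you rank-free generality and a self-contained proof of exactly the part the paper leaves to citation, at the cost of a little more bookkeeping. Two minor points to keep clean: the norm $\|\cdot\|_2^2$ in the lemma must be read as the Frobenius norm (sum of squared entries) for the columnwise summation to deliver the stated inequality, which is what you tacitly do and what the paper intends; and the orthogonality step uses property (ii) together with the symmetry of ${\bf A}^{\dag}{\bf A}$ from property (iv), so cite both rather than (ii) alone.
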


\begin{proof}
    Equation \eqref{eqn_LinearEqn_matrix} can be re-written as a set of multiple linear
    systems as follows:
    \begin{equation}\label{eqn_LinearEqn_matrix_decomposed}
    {\bf A}[\btheta_1,\cdots,\btheta_q] = [{\bf y}_1,\cdots,{\bf y}_q].
    \end{equation}
    Since the trace of $({\bf A}\bm{\Theta}-{\bf Y})^T({\bf A}\bm{\Theta}-{\bf Y})$ is equal
    to the sum of the squared lengths of the error vectors ${\bf A}\btheta_i-{\bf y}_i$,
    $i=1,2,...,q$, the unique solution $\hat{\bm{\Theta}}=({\bf A}^T{\bf A})^{-1}{\bf
        A}^T{\bf Y}$ in the column space of ${\bf A}$ or that $\hat{\bm{\Theta}}={\bf A}^T({\bf
        A}{\bf A}^T)^{-1}{\bf Y}$ in the row space of ${\bf A}$, not only minimizes this sum, but
    also minimizes each term in the sum \cite{Duda1}. Moreover, since the column and the row
    spaces are independent, the sum of the individually minimized norms is also minimum.
\end{proof}

Based on these observations, the inherent least squares error approximation property of
algebraic manipulation utilizing the Moore-Penrose inverse shall be exploited in the
following section to derive an analytic solution for multilayer network learning that is
gradient-free.

\subsection{Feedforward Neural Network}

We consider a \emph{multilayer feedforward network} of $n$ layers. This network can
either be fully connected or with a \emph{receptive field} setting (also known as a
receptive field network, we abbreviated it as RFN hereon) as shown in
Fig.~\ref{fig_DN_NLayer}. When a full receptive field is set for all layers, the network
is a fully connected one \cite{Toh97,Toh98}. Unlike conventional networks, the bias term
in each layer is excluded except for the inputs in this representation.

Mathematically, the $n$-layer network model of $h_1$-$h_1$-$\cdots$-$h_{n-1}$-$h_n$
structure can be written in matrix form as
\begin{equation}\label{eqn_Nlayer_net1}
{\bf G} = \f_n\left(\f_{n-1}(\cdots\f_2(\f_1({\bf
    X}{\bf W}_1){\bf W}_2)\cdots{\bf W}_{n-1}){\bf W}_n \right),
\end{equation}
where ${\bf X}=[{\bf 1},\bm{X}]\in\Real^{m\times(d+1)}$ is the augmented input data
matrix (i.e., input with bias), ${\bf W}_1\in\Real^{(d+1)\times h_1}$, ${\bf
    W}_2\in\Real^{h_1\times h_2}$, $\cdots$, ${\bf W}_{n-1}\in\Real^{h_{n-2}\times h_{n-1}}$,
and ${\bf
    W}_{n}\in\Real^{h_{n-1}\times q}$ ($h_n=q$ is the output dimension) are the weight matrices without bias at each layer, and
${\bf G}\in\Real^{m\times q}$ is the network output of $q$ dimension. $f_k$,
$k=1,\cdots,n$ are activation functions which operate elementwise on its matrix domain.
When the $k^{th}$ layer of the network has a limited receptive field, then its weight
matrix has sparse elements in each column. For example, if the first layer has receptive
field of $3$ (denoted as $r_3$), then, when $h_1=d$, ${\bf W}_1$ can be written as a
circulant matrix \cite{Davis1,Golub1} as follows:
\begin{eqnarray}\label{eqn_weight_sparse1}
&& \hspace{2.3cm} h_1=d \nonumber \\
{\bf W}_1 &=& \overbrace{\left[\begin{array}{ccccc}
    w_{1,1} & 0       & 0       & \cdots  & 0 \\
    w_{2,1} & w_{2,2} & 0       & \cdots  & 0 \\
    w_{3,1} & w_{3,2} & w_{3,3} & \cdots  & 0 \\
    0       & w_{4,2} & w_{4,3} & \cdots  & 0 \\
    0       & 0       & w_{5,3} & \cdots  & 0 \\
    0       & 0       & 0       & \cdots  & 0 \\
    0       & 0       & 0       & \ddots  & w_{d-1,d}\\
    0       & 0       & 0       & \cdots  & w_{d,d}\\
    0       & 0       & 0       & \cdots  & w_{d+1,d}\\
    \end{array}\right]} .
\end{eqnarray}

For simplicity, the activation functions in each layer are taken to be the same (i.e.,
$f_k=f$, $k=1,\cdots,n$) in our development. Also, unless stated otherwise, all
activation functions are taken to operate elementwise on its matrix domain. We shall
denote this network with receptive field as one with structure
$h_1^{r_3}$-$h_2$-$\cdots$-$h_{n-1}$-$h_{n}$ which carries the common activation function
in each layer of $f(h_1^{r_3})$-$f(h_2)$-$\cdots$-$f(h_{n-1})$-$f(h_{n})$.

\begin{figure}[hhh]
    \begin{center}
        \epsfxsize=10.88cm
        \epsffile[64    48  1064   612]{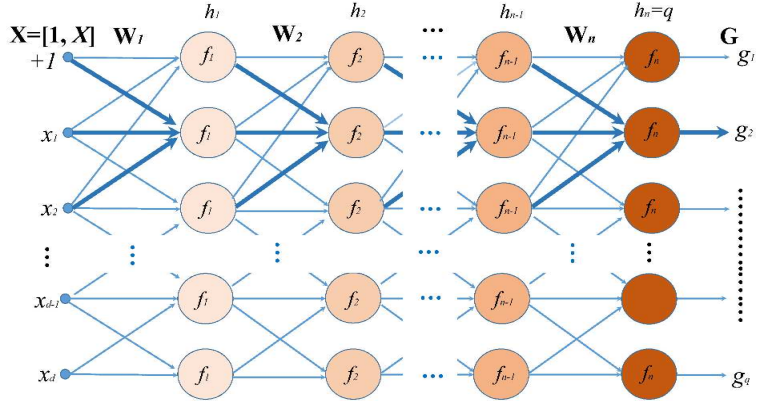}
        \caption{An $n$-layer feedforward network without inner weight biases. }
        \label{fig_DN_NLayer}
    \end{center}
\end{figure}

\subsection{Invertible Function}

In some circumstances, we may need to invert the network over the activation function for
solution seeking. For such a case, an inversion is performed through a functional
inversion. The inverse function is defined as follows.

\begin{defn}
    Consider a function $f$ which maps $x\in\Real$ to $y\in\Real$, i.e., $y=f(x)$. Then the
    inverse function for $f$ is such that $f^{-1}(y)=x$.
\end{defn}

A good example for invertible function is the \emph{softplus} function given by
$y=f(x)=\log(1+e^{x})$ and its inverse given by $\log(e^{y}-1)$. Although other
invertible functions can be used, this function and its modified versions, shall be
adopted in all our experiments for illustration purpose.

\section{Network Learning} \label{sec_net_learning}

The indexing of $f_k$, $k=1,\cdots,n$ for each layer of \eqref{eqn_Nlayer_net1} is kept
here for clarity purpose and suppose $f_k=f$ is an invertible function. Then, the network
can learn a given target matrix ${\bf Y}\in\Real^{m\times q}$ by putting ${\bf G}={\bf
    Y}$ where each layer of the network can be inverted (via taking functional inverse and
pseudoinverse) as follows:
\begin{eqnarray}
&& {\bf Y} = \f_n\left(\f_{n-1}(\f_{n-2}(\cdots\f_2(\f_1({\bf X}{\bf W}_1){\bf W}_2)
\cdots){\bf W}_{n-1}){\bf W}_n \right) \nonumber\\
&\Rightarrow& \g_{n}({\bf Y}) = \f_{n-1}(\f_{n-2}(\cdots\f_2(\f_1({\bf X}{\bf W}_1){\bf
W}_2)
\cdots){\bf W}_{n-1}){\bf W}_n \label{eqn_G1} \\
&\Rightarrow& \g_{n}({\bf Y}){\bf W}_n^{\dag} = \f_{n-1}(\f_{n-2}(\cdots\f_2(\f_1({\bf
X}{\bf W}_1){\bf W}_2)
\cdots){\bf W}_{n-1})  \nonumber\\
&\Rightarrow& \g_{n-1}({\g_{n}({\bf Y})\bf W}_n^{\dag}) = \f_{n-2}(\cdots\f_2(\f_1({\bf
X}{\bf W}_1){\bf W}_2)
\cdots){\bf W}_{n-1}  \label{eqn_G2} \\
&\Rightarrow& \vdots  \nonumber\\
&\Rightarrow& \g_{2}(\cdots\g_{n-1}(\g_{n}({\bf Y}){\bf W}_n^{\dag}){\bf
W}_{n-1}^{\dag}\cdots) =
\f_1({\bf X}{\bf W}_1){\bf W}_2  \label{eqn_G3} \\
&\Rightarrow& \g_{1}(\g_{2}(\cdots\g_{n-1}(\g_{n}({\bf Y}){\bf W}_n^{\dag}){\bf
W}_{n-1}^{\dag}\cdots){\bf W}_2^{\dag}) = {\bf X}{\bf W}_1 .\label{eqn_G4}
\end{eqnarray}
Based on \eqref{eqn_G1} to \eqref{eqn_G4}, we can express the weight matrices
respectively as
\begin{eqnarray}
{\bf W}_{1} &=& {\bf X}^{\dag}\g_{1}(\g_{2}(\g_{3}(\cdots\g_{n-1}(\g_{n}({\bf Y}){\bf
W}_n^{\dag}){\bf W}_{n-1}^{\dag}\cdots){\bf W}_3^{\dag}){\bf W}_2^{\dag})
\label{eqn_GW4} \\
{\bf W}_{2} &=& [\f_1({\bf X}{\bf W}_1)]^{\dag} \g_{2}(\g_{3}(\cdots\g_{n-1}(\g_{n}({\bf
Y}){\bf W}_n^{\dag}){\bf W}_{n-1}^{\dag}\cdots){\bf W}_3^{\dag})
\label{eqn_GW3} \\
&\vdots& \nonumber\\
{\bf W}_{n-1} &=& [\f_{n-2}(\cdots\f_2(\f_1({\bf X}{\bf W}_1){\bf W}_2) \cdots{\bf
W}_{n-2})]^{\dag} \g_{n-1}(\g_{n}({\bf Y}){\bf W}_n^{\dag})
\nonumber \\
\label{eqn_GW2} \\
{\bf W}_n &=& [\f_{n-1}(\f_{n-2}(\cdots\f_2(\f_1({\bf X}{\bf W}_1){\bf W}_2) \cdots){\bf
W}_{n-1})]^{\dag} \g_{n}({\bf Y}) .\label{eqn_GW1}
\end{eqnarray}

\noindent With these derivations, the follow result is stated formally.

\begin{theorem}
    Given $m$ data samples of $d$ dimension which are packed as ${\bf X}\in\Real^{m\times(d+1)}$ with augmentation, and consider a multilayer network of the form in \eqref{eqn_Nlayer_net1} with $f_k=f$,
    $\forall k=1,\cdots,n$ being invertible. Then, learning of the network towards the target
    ${\bf Y}\in\Real^{m\times q}$ in the sense of least squares error approximation boils
    down to solving the set of equations given by \eqref{eqn_GW4} through \eqref{eqn_GW1}.
\end{theorem}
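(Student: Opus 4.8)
The plan is to verify that the sequence of algebraic manipulations leading from the network equation \eqref{eqn_Nlayer_net1} to the system \eqref{eqn_GW4}--\eqref{eqn_GW1} is internally consistent, and that each pseudoinverse step is precisely the least-squares-optimal choice guaranteed by Lemma~\ref{lemma_LS_matrix}. First I would set ${\bf G}={\bf Y}$ in \eqref{eqn_Nlayer_net1} and peel off the outermost activation $f_n$ by applying its functional inverse $f^{-1}=\g_n$ elementwise, which is legitimate precisely because $f$ is invertible; this gives \eqref{eqn_G1}. Next, the outermost factor on the right of \eqref{eqn_G1} is a linear system ${\bf B}_{n-1}{\bf W}_n=\g_n({\bf Y})$ in the ``unknown'' right-hand factor, where ${\bf B}_{n-1}=\f_{n-1}(\cdots)$; applying Lemma~\ref{lemma_LS_matrix} with ${\bf A}={\bf B}_{n-1}$, $\bm\Theta={\bf W}_n$, ${\bf Y}=\g_n({\bf Y})$ identifies the minimum-norm least-squares solution ${\bf W}_n={\bf B}_{n-1}^{\dag}\g_n({\bf Y})$, which is exactly \eqref{eqn_GW1}. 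Symmetrically, viewing the same equation as a system for the \emph{left} factor, one multiplies by ${\bf W}_n^{\dag}$ on the right to obtain \eqref{eqn_G2}, and the least-squares interpretation again comes from Lemma~\ref{lemma_LS_matrix} (now solving in the row space). I would then iterate this peel-and-project step inductively: at stage $k$ from the top, apply $\g_{n-k}$ to undo $f_{n-k}$, then right-multiply by ${\bf W}_{n-k}^{\dag}$ to expose the next nested layer, yielding \eqref{eqn_G2}--\eqref{eqn_G4}; reading off the left pseudoinverse at each stage gives \eqref{eqn_GW2}--\eqref{eqn_GW4}, with the base case ${\bf W}_1={\bf X}^{\dag}(\cdots)$ following from \eqref{eqn_G4} by left-multiplying ${\bf X}^{\dag}$.

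The bookkeeping to make this an induction is routine: I would let ${\bf Z}_n\define\g_n({\bf Y})$ and recursively ${\bf Z}_{k}\define\g_{k}({\bf Z}_{k+1}{\bf W}_{k+1}^{\dag})$ for $k=n-1,\dots,1$, so that \eqref{eqn_G1}--\eqref{eqn_G4} become the compact statements ${\bf Z}_{k}=\f_{k}(\cdots\f_1({\bf X}{\bf W}_1)\cdots{\bf W}_{k-1}){\bf W}_k$ for each $k$, with the convention that the innermost factor is ${\bf X}{\bf W}_1$. The weight formulas \eqref{eqn_GW4}--\eqref{eqn_GW1} are then precisely ${\bf W}_k=[\f_{k-1}(\cdots)]^{\dag}{\bf Z}_k$ (and ${\bf W}_1={\bf X}^{\dag}{\bf Z}_1$), and the least-squares-error claim of the theorem is just the assertion that each ${\bf W}_k$ so defined is the minimizer of the corresponding residual $\mathrm{SSE}$ in \eqref{eqn_squared_error_distance_trace} with $\bm\Theta={\bf W}_k$ — which is exactly what Lemma~\ref{lemma_LS_matrix} delivers, together with the minimum-norm property.

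The main obstacle, and the point I would be most careful about, is that these equations are \emph{mutually interdependent}: the matrix $[\f_{k-1}(\cdots\f_1({\bf X}{\bf W}_1)\cdots)]^{\dag}$ appearing in the formula for ${\bf W}_k$ itself depends on ${\bf W}_1,\dots,{\bf W}_{k-1}$, while the right-hand factor ${\bf Z}_k$ depends on ${\bf W}_{k+1},\dots,{\bf W}_n$. So the theorem should be read not as giving a closed-form solution but as characterizing a \emph{fixed point} of the system \eqref{eqn_GW4}--\eqref{eqn_GW1}; correspondingly the proof only needs to establish the equivalence ``$({\bf W}_1,\dots,{\bf W}_n)$ satisfies \eqref{eqn_Nlayer_net1} with ${\bf G}={\bf Y}$ in the least-squares sense layer-by-layer $\iff$ it satisfies \eqref{eqn_GW4}--\eqref{eqn_GW1},'' which follows from the reversibility of each step (functional inversion is a genuine bijection; pseudoinverse multiplication is the least-squares projection per Lemma~\ref{lemma_LS_matrix}). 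A secondary technical nuance is that when a layer's inner matrix is rank-deficient, right-multiplying by ${\bf W}_n^{\dag}$ in passing from \eqref{eqn_G1} to \eqref{eqn_G2} does not exactly invert the preceding left-multiplication by ${\bf W}_n$, so the implications \eqref{eqn_G1}$\Rightarrow$\eqref{eqn_G2} etc. are only recovering the \emph{least-squares} (range/kernel-space) solution rather than an exact one; I would state explicitly that this is precisely the approximation asserted in the theorem and cite Lemma~\ref{lemma_LS_matrix} (and the remarks preceding it) for the identification of that approximation with $\mathrm{SSE}$ minimization at each layer.
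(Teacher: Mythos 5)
Your proposal is correct and follows essentially the same route as the paper's own proof: peel off each invertible activation by functional inversion, apply the Moore--Penrose inverse at each layer, and invoke Lemma~\ref{lemma_LS_matrix} (and Lemma~\ref{thm_LSapprox}) to identify each pseudoinverse step with the least-squares projection onto the column or row space. Your explicit remarks that the resulting equations \eqref{eqn_GW4}--\eqref{eqn_GW1} are interdependent (a fixed-point characterization rather than a closed form) and that the pseudoinverse steps are projections rather than exact inversions are exactly the points the paper treats implicitly, e.g.\ via the back-substitution ${\bf G}=\f_n({\bf A}{\bf A}^{\dag}\g_n({\bf Y}))$, so no gap remains.
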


\begin{proof}
    Since the Moore-Penrose inverse exists uniquely for any complex or real matrix and the
    activation functions are invertible, the results follow directly the derivations in
    \eqref{eqn_G1} through \eqref{eqn_GW4}. The assertion of least squares error
    approximation comes from the observation in Lemma~\ref{lemma_LS_matrix} where each step
    of manipulating each equation (in \eqref{eqn_G1} through \eqref{eqn_GW4}) by taking the
    pseudoinverse implies a projection onto either the column space or the row space.
    Mathematically, suppose ${\bf A}=\f_{n-1}(\f_{n-2}(\cdots\f_2(\f_1({\bf X}{\bf W}_1){\bf
        W}_2)\cdots){\bf W}_{n-1})$. Then \eqref{eqn_GW1} can be rewritten as ${\bf W}_n={\bf
        A}^{\dag}\g_n({\bf Y})$. The least squares approximation property can be easily
    visualized by substituting \eqref{eqn_GW1} into \eqref{eqn_Nlayer_net1} to get ${\bf
        G}=\f_n({\bf A}{\bf W}_n)=\f_n({\bf A}{\bf A}^{\dag}\g_n({\bf Y}))$. By taking $\g_n$ to
    both sides of the equation, we have $\g_n({\bf G})={\bf A}{\bf A}^{\dag}\g_n({\bf Y})$.
    Considering each column of the equation given by $\g_n({\bf g})={\bf A}{\bf
        A}^{\dag}\g_n({\bf y})$, we observe the analogue to $\hat{\bf y}={\bf
        A}\hat{\btheta}={\bf A}{\bf A}^{\dag}{\bf y}$ of Lemma~\ref{thm_LSapprox} by putting
    ${\bf g}=\hat{\bf y}$ where the invertible transform of the target $\g_n({\bf y})$ can be
    recovered uniquely. Hence the proof.
\end{proof}

The above result shows that the least squares solution to the network weights comes in a
cross-coupling form where the weight matrices are interdependent. This poses difficulty
towards an analytical estimation. Fortunately, due to the high redundancy of the network
weights that are constrained within an array structure, the solutions to network learning
are vastly available (see section~\ref{sec_depth} for a detailed analysis). The following
result is obtained by having the weight matrices ${\bf W}_k$, $k=2,\cdots,n$ arbitrarily
initialized as non-trivial matrices ${\bf R}_k$, $k=2,\cdots,n$ (e.g., a matrix which has
zero value for all its elements is considered a trivial matrix).

\begin{corollary} \label{cor_soln1}
    Consider $m$ data samples of $d$ dimension which are packed as ${\bf X}\in\Real^{m\times(d+1)}$ with augmentation.
    Given non-trivial matrices ${\bf R}_i$, $i=2,\cdots,n$ where their dimensions matches those of ${\bf
        W}_j$, $j=2,\cdots,n$ in \eqref{eqn_Nlayer_net1} and suppose the activation functions are invertible.
    Then, learning of the network towards the target ${\bf Y}\in\Real^{m\times q}$
    admits the following solution to the least squares error approximation:
    \begin{eqnarray}
    {\bf W}_{1} &=& {\bf X}^{\dag}\g_{1}(\g_{2}(\g_{3}(\cdots\g_{n-1}(\g_{n}({\bf Y}){\bf R}_n^{\dag}){\bf R}_{n-1}^{\dag}\cdots){\bf R}_3^{\dag}){\bf R}_2^{\dag})
    \label{eqn_R1} \\
    {\bf W}_{2} &=& [\f_1({\bf X}{\bf W}_1)]^{\dag}
    \g_{2}(\g_{3}(\cdots\g_{n-1}(\g_{n}({\bf Y}){\bf R}_n^{\dag}){\bf R}_{n-1}^{\dag}\cdots){\bf R}_3^{\dag})
    \label{eqn_R2} \\
    &\vdots& \nonumber\\
    {\bf W}_{n-1} &=& [\f_{n-2}(\cdots\f_2(\f_1({\bf X}{\bf W}_1){\bf W}_2)
    \cdots)]^{\dag} \g_{n-1}(\g_{n}({\bf Y}){\bf R}_n^{\dag})
    \label{eqn_R3} \\
    {\bf W}_n &=& [\f_{n-1}(\f_{n-2}(\cdots\f_2(\f_1({\bf X}{\bf W}_1){\bf W}_2)
    \cdots){\bf W}_{n-1})]^{\dag} \g_{n}({\bf Y})
    .\label{eqn_R4}
    \end{eqnarray}
\end{corollary}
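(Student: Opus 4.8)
The plan is to reduce Corollary~\ref{cor_soln1} to the Theorem by observing that fixing $\mathbf{W}_k = \mathbf{R}_k$ for $k = 2,\dots,n$ is merely a special (restricted) instance of the cross-coupled system \eqref{eqn_GW4}--\eqref{eqn_GW1}, and then checking that each remaining equation is still an exact least-squares projection step in the sense of Lemma~\ref{lemma_LS_matrix}. First I would substitute $\mathbf{W}_k = \mathbf{R}_k$ (for $k = 2,\dots,n$) into the inversion chain \eqref{eqn_G1}--\eqref{eqn_G4}. The right-hand side of \eqref{eqn_G4} becomes $\mathbf{X}\mathbf{W}_1$, and its left-hand side is now a fully determined matrix, call it $\mathbf{B}_1 \triangleq \g_1(\g_2(\cdots\g_{n-1}(\g_n(\mathbf{Y})\mathbf{R}_n^{\dag})\mathbf{R}_{n-1}^{\dag}\cdots)\mathbf{R}_2^{\dag})$, since the $\mathbf{R}_k$ are known and $f$ is invertible. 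Applying Lemma~\ref{lemma_LS_matrix} to $\mathbf{X}\mathbf{W}_1 = \mathbf{B}_1$ gives the unique minimum-norm least-squares solution $\mathbf{W}_1 = \mathbf{X}^{\dag}\mathbf{B}_1$, which is exactly \eqref{eqn_R1}.

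Next I would argue that the remaining equations \eqref{eqn_R2}--\eqref{eqn_R4} are obtained by a forward pass that re-derives each subsequent weight so as to keep every intermediate equation exact. Concretely, once $\mathbf{W}_1$ is fixed, define $\mathbf{A}_2 \triangleq \f_1(\mathbf{X}\mathbf{W}_1)$; then \eqref{eqn_G3} reads $\mathbf{A}_2 \mathbf{W}_2 = \mathbf{C}_2$ where $\mathbf{C}_2 \triangleq \g_2(\g_3(\cdots\g_{n-1}(\g_n(\mathbf{Y})\mathbf{R}_n^{\dag})\mathbf{R}_{n-1}^{\dag}\cdots)\mathbf{R}_3^{\dag})$ is again fully determined, so Lemma~\ref{lemma_LS_matrix} yields $\mathbf{W}_2 = \mathbf{A}_2^{\dag}\mathbf{C}_2$, i.e. \eqref{eqn_R2}. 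Iterating, at layer $k$ with $\mathbf{A}_k \triangleq \f_{k-1}(\cdots\f_1(\mathbf{X}\mathbf{W}_1)\cdots\mathbf{W}_{k-1})$ the relevant equation is $\mathbf{A}_k\mathbf{W}_k = \mathbf{C}_k$ with $\mathbf{C}_k$ the appropriate nested inverse-function/pseudoinverse expression in $\mathbf{Y}$ and $\mathbf{R}_{k+1},\dots,\mathbf{R}_n$, and Lemma~\ref{lemma_LS_matrix} delivers $\mathbf{W}_k = \mathbf{A}_k^{\dag}\mathbf{C}_k$. The last step, $k = n$, gives \eqref{eqn_R4}, and the final least-squares interpretation follows from the same substitution argument used in the proof of the Theorem: writing $\mathbf{A} = \mathbf{A}_n$, we get $\g_n(\mathbf{G}) = \mathbf{A}\mathbf{A}^{\dag}\g_n(\mathbf{Y})$, which is a column-space projection of the invertibly-transformed target.

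One subtlety I would make explicit: the equations \eqref{eqn_R1}--\eqref{eqn_R4} are written with $\mathbf{W}_2,\dots,\mathbf{W}_n$ appearing on the right-hand sides of later equations, not $\mathbf{R}_2,\dots,\mathbf{R}_n$. So the corollary is not literally ``plug in $\mathbf{R}_k$ everywhere'' — rather, the $\mathbf{R}_k$ are used only to resolve the ambiguity on the target side (the nested $\g_k(\cdot\,\mathbf{R}_\ell^{\dag}\cdot)$ expressions that propagate $\mathbf{Y}$ backward), after which $\mathbf{W}_1,\mathbf{W}_2,\dots$ are recomputed in a forward sweep so that each layer's equation holds exactly. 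I would note that this is consistent: the system \eqref{eqn_GW4}--\eqref{eqn_GW1} has many feasible solutions, and choosing the backward-propagated target via arbitrary non-trivial $\mathbf{R}_k$ and then solving forward produces one such feasible solution; the non-triviality of the $\mathbf{R}_k$ is what guarantees $\g_k(\,\cdot\,\mathbf{R}_\ell^{\dag})$ is not identically zero and the resulting $\mathbf{A}_k$ are non-degenerate enough for the projections to be meaningful.

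The main obstacle I anticipate is not any deep computation but rather the bookkeeping of the nested expressions: one has to verify that at each layer the quantity playing the role of ``$\mathbf{Y}$'' in Lemma~\ref{lemma_LS_matrix} is genuinely a fixed, known matrix (depending only on $\mathbf{Y}$, the $\mathbf{R}_\ell$ for $\ell > k$, and the already-determined $\mathbf{W}_1,\dots,\mathbf{W}_{k-1}$), so that the lemma applies cleanly without any remaining coupling. A secondary point worth a sentence is that invertibility of $f$ is used both to form $\g_k$ on the target side and to push $\g_n$ through the final equation; and that, as in the Theorem's proof, the least-squares claim is a statement about each output column of $\g_n(\mathbf{G}) = \mathbf{A}\mathbf{A}^{\dag}\g_n(\mathbf{Y})$ being the orthogonal projection of $\g_n(\mathbf{y}_i)$ onto the column space of $\mathbf{A}$. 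Once these are spelled out, the corollary follows from the Theorem and Lemma~\ref{lemma_LS_matrix} with essentially no new content.
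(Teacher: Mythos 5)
Your proposal is correct and follows essentially the same route as the paper: the paper's (very terse) proof likewise argues that fixing ${\bf R}_2,\dots,{\bf R}_n$ decouples the cross-coupled system of the Theorem so that the weights can be estimated sequentially from \eqref{eqn_R1} to \eqref{eqn_R4}, each step being a pseudoinverse (least-squares) projection in the sense of Lemma~\ref{lemma_LS_matrix}. Your explicit bookkeeping of the forward sweep and the observation that the ${\bf R}_k$ enter only on the back-propagated target side while later equations use the re-estimated ${\bf W}_j$ is exactly the point the paper itself makes in Remark~1, so you have simply spelled out the details the paper leaves implicit.
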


\begin{proof}
    By initializing the weights ${\bf W}_j$, $j=2,\cdots,n$ using matrices ${\bf R}_i$,
    $i=2,\cdots,n$, the inter-dependency of weights can be decoupled and the weights can be
    estimated sequentially from \eqref{eqn_R1} to \eqref{eqn_R4}. Hence the proof.
\end{proof}

\begin{corollary} \label{cor_soln2}
    Consider $m$ data samples of $d$ dimension which are packed as ${\bf X}\in\Real^{m\times(d+1)}$ with augmentation.
    Given non-trivial matrices ${\bf R}_i$, $i=1,2,4,5\cdots,n$ where their dimensions matches those of
    ${\bf W}_j$, $j=1,2,4,5\cdots,n$ in \eqref{eqn_Nlayer_net1} and suppose the activation functions are invertible. Then, learning of the network towards the target ${\bf Y}\in\Real^{m\times q}$ admits the
    following solution to the least squares error approximation:
    \begin{eqnarray}
    {\bf W}_{3} &=& [\f_2(\f_1({\bf X}{\bf R}_1){\bf R}_2)]^{\dag}
    \g_{3}(\g_{4}(\g_{5}(\cdots\g_{n-1}(\g_{n}({\bf Y}){\bf R}_n^{\dag}){\bf R}_{n-1}^{\dag}\cdots){\bf R}_5^{\dag}){\bf R}_4^{\dag})
    \nonumber \\
    \label{eqn_RR3} \\
    {\bf W}_{4} &=& [\f_3(\f_2(\f_1({\bf X}{\bf R}_1){\bf R}_2){\bf W}_3)]^{\dag}
    \g_{4}(\g_{5}(\cdots\g_{n-1}(\g_{n}({\bf Y}){\bf R}_n^{\dag}){\bf R}_{n-1}^{\dag}\cdots){\bf R}_5^{\dag})
    \nonumber\\ \label{eqn_RR4} \\
    &\vdots& \nonumber \\
    {\bf W}_{n-1} &=& [\f_{n-2}(\cdots\f_3(\f_2(\f_1({\bf X}{\bf R}_1){\bf R}_2)
    {\bf W}_3)\cdots{\bf W}_{n-2})]^{\dag} \g_{n-1}(\g_{n}({\bf Y}){\bf R}_n^{\dag})
    \label{eqn_RRn1} \\
    {\bf W}_{1} &=& {\bf X}^{\dag}\g_{1}(\g_{2}(\g_{3}(\cdots\g_{n-1}(\g_{n}({\bf Y}){\bf R}_n^{\dag}){\bf W}_{n-1}^{\dag}\cdots){\bf W}_3^{\dag}){\bf R}_2^{\dag})
    \label{eqn_RR1} \\
    {\bf W}_{2} &=& [\f_1({\bf X}{\bf W}_1)]^{\dag}
    \g_{2}(\g_{3}(\cdots\g_{n-1}(\g_{n}({\bf Y}){\bf R}_n^{\dag}){\bf W}_{n-1}^{\dag}\cdots){\bf W}_3^{\dag})
    \label{eqn_RR2} \\
    {\bf W}_n &=& [\f_{n-1}(\f_{n-2}(\cdots\f_3(\f_2(\f_1({\bf X}{\bf W}_1){\bf W}_2){\bf W}_3)
    \cdots){\bf W}_{n-1})]^{\dag} \g_{n}({\bf Y})
    .\label{eqn_RRn}
    \end{eqnarray}
\end{corollary}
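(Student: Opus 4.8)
The plan is to mirror the argument already used for Corollary~\ref{cor_soln1}, but with the ``anchor layer'' moved from layer~$1$ to layer~$3$. The key observation is that the chain of inversions in \eqref{eqn_G1}--\eqref{eqn_G4} does not have to be unwound all the way down to $\x\bW_1$; one can stop at any intermediate equation. Concretely, starting from \eqref{eqn_Nlayer_net1} with $\bG=\bY$ and applying $\g_n$, then $\bW_n^{\dag}$, then $\g_{n-1}$, and so on exactly as in \eqref{eqn_G1}--\eqref{eqn_G3}, one peels off the top $n-3$ layers and arrives at an equation of the form
\begin{equation*}
\g_3\!\left(\g_4(\cdots\g_{n-1}(\g_n(\bY)\bW_n^{\dag})\bW_{n-1}^{\dag}\cdots)\bW_4^{\dag}\right) = \f_2(\f_1(\x\bW_1)\bW_2)\,\bW_3 .
\end{equation*}
This is the ``bottom half'' identity that plays the role \eqref{eqn_G4} played before; the ``top half'' identities \eqref{eqn_G1}--\eqref{eqn_G3} are retained verbatim.

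Next I would carry out the decoupling. Substituting the initializations $\bW_1\gets\bR_1$, $\bW_2\gets\bR_2$ into the right-hand side of the bottom-half identity and solving for $\bW_3$ by left-multiplying with $[\f_2(\f_1(\x\bR_1)\bR_2)]^{\dag}$ gives \eqref{eqn_RR3}; this is legitimate provided $\f_2(\f_1(\x\bR_1)\bR_2)$ is non-trivial, which holds because $\bR_1,\bR_2$ are non-trivial and $f$ is invertible (hence never identically maps a non-trivial matrix to zero on the relevant domain). With $\bW_3$ now fixed, I move \emph{up} the network: equation \eqref{eqn_G3}-type identity for layer~$4$ reads $\f_3(\f_2(\f_1(\x\bW_1)\bW_2)\bW_3)\bW_4 = \g_4(\cdots)$, and substituting $\bW_1\gets\bR_1$, $\bW_2\gets\bR_2$, the just-computed $\bW_3$, and $\bW_5,\dots,\bW_n\gets\bR_5,\dots,\bR_n$ on the appropriate sides, then left-multiplying by the pseudoinverse of $\f_3(\f_2(\f_1(\x\bR_1)\bR_2)\bW_3)$, yields \eqref{eqn_RR4}. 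Iterating this upward sweep through layers $5,\dots,n-1$ produces \eqref{eqn_RRn1}, each step using the previously determined $\bW_3,\dots,\bW_{k-1}$ and the still-frozen $\bR$'s above. At this point $\bW_2,\dots,\bW_{n-1}$ are all determined (the upper ones from $\bR$'s, the lower-middle ones $\bW_3,\dots,\bW_{n-1}$ computed), so I can now recover the genuine $\bW_1$ from \eqref{eqn_G4} via $\bW_1=\x^{\dag}\g_1(\cdots)$ with $\bW_2\gets\bR_2$ retained, giving \eqref{eqn_RR1}; then re-solve \eqref{eqn_G3}-for-layer-$2$ using this new $\bW_1$ to get \eqref{eqn_RR2}; and finally close the loop by recomputing $\bW_n$ from \eqref{eqn_GW1} using the updated $\bW_1,\bW_2$ and the computed $\bW_3,\dots,\bW_{n-1}$, which is \eqref{eqn_RRn}. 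The least-squares-approximation claim attached to each equation is inherited, step for step, from Lemma~\ref{lemma_LS_matrix} and the argument in the proof of the Theorem: each left-multiplication by a pseudoinverse $\bA^{\dag}$ realizes the projection $\bA\bA^{\dag}$ onto the column space of the corresponding layer-activation matrix, i.e.\ a least-squares fit.

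The main obstacle, such as it is, is bookkeeping rather than mathematics: one must verify that at every stage the matrix whose pseudoinverse is taken is genuinely non-trivial so that the manipulation is well defined, and that the substitution pattern is \emph{consistent}---each $\bW_k$ on a right-hand side must be either a frozen $\bR_k$ or an already-computed value, never a yet-undetermined one. This is why the recovery order $\bW_3\to\bW_4\to\cdots\to\bW_{n-1}\to\bW_1\to\bW_2\to\bW_n$ matters and must be stated explicitly. Once that ordering is fixed, the proof is essentially one sentence: the inter-dependency of the weights is broken by the initialization $\bR_i$, $i=1,2,4,5,\dots,n$, so that the weights can be estimated sequentially from \eqref{eqn_RR3} through \eqref{eqn_RRn}, with the least-squares property at each step following from Lemma~\ref{lemma_LS_matrix}, exactly as in Corollary~\ref{cor_soln1}.
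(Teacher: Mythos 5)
Your proposal is correct and follows essentially the same route as the paper: decouple the interdependent weight equations by freezing ${\bf R}_1,{\bf R}_2,{\bf R}_4,\dots,{\bf R}_n$, estimate the remaining weights sequentially in the stated order, and invoke the pseudoinverse/least-squares property (Lemma~\ref{lemma_LS_matrix}) at each step, with ${\bf W}_n$ necessarily computed last. The paper's own proof is simply a terser statement of the same observation, so your additional bookkeeping is elaboration rather than a different argument.
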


\begin{proof}
    The order of learning sequence is immaterial for any of the inner layer weights ${\bf
        W}_i$, $i=1,2,4,\cdots,n-1$, but the output layer weight ${\bf W}_n$ should be lastly
    estimated. This is because ${\bf W}_n$ is the weight which links up all hidden layer
    weights and provides a direct least squares error approximation (range space projection)
    to the target ${\bf Y}$. In other words, the estimation of ${\bf W}_n$ \eqref{eqn_RRn} at
    the output layer is complete when all the hidden weights have been determined. Hence the
    proof.
\end{proof}

Apart from the above solution based on random initialization, the following result shows
an example of non-random initialization. This is particularly useful when the activation
functions are non-invertible.

\begin{corollary} \label{cor_soln3}
    Given $m$ data samples of $d$ dimension which are packed as ${\bf X}\in\Real^{m\times(d+1)}$ with augmentation.
    Then, learning of the network \eqref{eqn_Nlayer_net1} towards the target ${\bf Y}\in\Real^{m\times q}$ admits the following solution to the least squares error approximation:
    \begin{eqnarray}
    {\bf W}_{1} &=& {\bf X}^{\dag}
    \label{eqn_C3R1} \\
    {\bf W}_{2} &=& [\f_1({\bf X}{\bf W}_1)]^{\dag}
    \label{eqn_C3R2} \\
    &\vdots& \nonumber\\
    {\bf W}_{n-1} &=& [\f_{n-2}(\cdots\f_2(\f_1({\bf X}{\bf W}_1){\bf W}_2)
    \cdots)]^{\dag}
    \label{eqn_C3R3} \\
    {\bf W}_n &=& [\f_{n-1}(\f_{n-2}(\cdots\f_2(\f_1({\bf X}{\bf W}_1){\bf W}_2)
    \cdots){\bf W}_{n-1})]^{\dag} \g_{n}({\bf Y})
    .\label{eqn_C3R4}
    \end{eqnarray}
\end{corollary}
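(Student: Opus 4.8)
The plan is to regard the solution \eqref{eqn_C3R1}--\eqref{eqn_C3R4} as a directly constructed \emph{forward} cascade rather than as a specialisation of the coupled system of Theorem~1, and then to check that it realises the least squares error approximation at the output layer. Concretely, I would first show the weights are well defined layer by layer, then substitute them into \eqref{eqn_Nlayer_net1} and extract the least squares property from the projector identity \({\bf A}{\bf A}^{\dag}\) together with Lemma~\ref{thm_LSapprox} and Lemma~\ref{lemma_LS_matrix}.

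Step one: well-definedness. By Definition~\ref{defn_pseudoinverse} the Moore--Penrose inverse exists and is unique for every real matrix, so \({\bf W}_1={\bf X}^{\dag}\) is fixed by the data alone. Given \({\bf W}_1\), the post-activation matrix \(\f_1({\bf X}{\bf W}_1)\) is determined and hence so is \({\bf W}_2=[\f_1({\bf X}{\bf W}_1)]^{\dag}\); proceeding by induction, each hidden weight \({\bf W}_k=[\f_{k-1}(\cdots\f_1({\bf X}{\bf W}_1)\cdots{\bf W}_{k-1})]^{\dag}\) is a function of \({\bf X}\) and \({\bf W}_1,\dots,{\bf W}_{k-1}\) only --- in particular, it is independent of the target \({\bf Y}\). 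The one place a functional inverse enters is \(\g_n({\bf Y})\) in \eqref{eqn_C3R4}, which needs \(f_n\) invertible (the running softplus example satisfies this); for every hidden layer only forward evaluations of \(f\) appear, which is exactly why the scheme tolerates non-invertible inner activations and is structurally different from Corollaries~\ref{cor_soln1} and \ref{cor_soln2}. I would also flag in passing the implicit dimensional bookkeeping forced by \({\bf W}_1={\bf X}^{\dag}\) (the hidden widths are driven to \(m\)), though this does not affect the argument.

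Step two: the least squares claim. Write \({\bf A}=\f_{n-1}(\f_{n-2}(\cdots\f_2(\f_1({\bf X}{\bf W}_1){\bf W}_2)\cdots){\bf W}_{n-1})\) for the last hidden-layer output produced by the cascade. By \eqref{eqn_C3R4} we have \({\bf W}_n={\bf A}^{\dag}\g_n({\bf Y})\), so substituting into \eqref{eqn_Nlayer_net1} gives \({\bf G}=\f_n({\bf A}{\bf A}^{\dag}\g_n({\bf Y}))\), and applying \(\g_n\) to both sides, \(\g_n({\bf G})={\bf A}{\bf A}^{\dag}\g_n({\bf Y})\). The Penrose equations in Definition~\ref{defn_pseudoinverse} make \({\bf A}{\bf A}^{\dag}\) idempotent and self-adjoint, hence the orthogonal projector onto the column space of \({\bf A}\); reading the identity column by column, \(\g_n({\bf g})={\bf A}{\bf A}^{\dag}\g_n({\bf y})\) is precisely the picture \(\hat{\bf y}={\bf A}\hat{\btheta}={\bf A}{\bf A}^{\dag}{\bf y}\) of Lemma~\ref{thm_LSapprox}, with \({\bf A}\hat{\btheta}\) the least squares approximant of \(\g_n({\bf y})\); Lemma~\ref{lemma_LS_matrix} then additionally yields uniqueness and the minimum-norm property of \({\bf W}_n\). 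This establishes that the constructed weights attain the least squares error approximation at the output layer, which is the sense in which the statement is to be read, completing the proof.

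The obstacle is conceptual rather than computational. Unlike the schemes of Corollaries~\ref{cor_soln1} and \ref{cor_soln2}, here the fitting of \({\bf Y}\) takes place \emph{only} at layer \(n\); the hidden layers merely manufacture the feature matrix \({\bf A}\) through a chain of data-driven pseudoinverse projections, so the ``least squares error approximation'' is relative to that feature matrix and is not propagated back through the nonlinearities. The single genuine verification is that \({\bf A}{\bf A}^{\dag}\g_n({\bf Y})\) is the least squares projection onto the column space of \({\bf A}\), which reduces to the idempotent and self-adjoint properties of \({\bf A}{\bf A}^{\dag}\) and is immediate from Definition~\ref{defn_pseudoinverse}; the remainder is a careful unwinding of the recursion \eqref{eqn_C3R1}--\eqref{eqn_C3R4}.
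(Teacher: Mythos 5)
Your proof is correct, but it follows a different route from the paper's. The paper proves Corollary~\ref{cor_soln3} in two lines by specialising Corollary~\ref{cor_soln1}: since the ${\bf R}_i$, $i=2,\cdots,n$ and the composed inverse-activation terms acting on ${\bf Y}$ are arbitrary, the entire target-dependent factors in \eqref{eqn_R1}--\eqref{eqn_R3} may be chosen as the identity matrix, yielding \eqref{eqn_C3R1}--\eqref{eqn_C3R3}, while the term $\g_n({\bf Y})$ in \eqref{eqn_R4} survives because it involves no arbitrary matrix; the least-squares property is then inherited from Corollary~\ref{cor_soln1} (and ultimately Theorem~1). You instead construct the cascade directly: well-definedness and target-independence of the hidden weights by induction on the uniqueness of the Moore--Penrose inverse, then the least-squares claim by substituting ${\bf W}_n={\bf A}^{\dag}\g_n({\bf Y})$ into \eqref{eqn_Nlayer_net1} and using the projector properties of ${\bf A}{\bf A}^{\dag}$ together with Lemmas~\ref{thm_LSapprox} and \ref{lemma_LS_matrix} --- essentially re-running the argument of Theorem~1 at the output layer. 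What your route buys is worth noting: the paper's derivation passes through Corollary~\ref{cor_soln1}, whose formulas invoke $\g_k$ at every layer and thus nominally presuppose invertible activations, whereas Corollary~\ref{cor_soln3} is precisely the scheme advertised (in Remark~1) as usable with non-invertible inner activations; your forward construction makes explicit that only forward evaluations of $f$ appear in the hidden layers and that invertibility is needed only for $\g_n$ (and not even there if the output layer is linear), so it justifies the corollary under the weaker hypotheses it is actually meant to cover. The paper's proof, in exchange, is shorter and makes transparent that \eqref{eqn_C3R1}--\eqref{eqn_C3R4} is one member of the infinite family of solutions of Corollary~\ref{cor_soln1}. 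Your closing observation that the least-squares approximation is relative to the feature matrix ${\bf A}$ manufactured by the hidden layers, and is not propagated back through the nonlinearities, is an accurate reading of the sense in which the paper uses the phrase.
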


\begin{proof}
    Since ${\bf R}_i$, $i=2,\cdots,n$ in Corollary~\ref{cor_soln1} are arbitrarily assigned
    matrices that matches the dimensions of the corresponding weight matrices, they, together
    with $\g_i$, $i=1,\cdots,n$ on ${\bf Y}$ are indeed arbitrary, and they all can be chosen
    as identity matrix as one particular solution. Since the estimation of the output weight
    matrix ${\bf W}_n$ in \eqref{eqn_R4} of Corollary~\ref{cor_soln1} does not involve
    assignment of the arbitrary matrix ${\bf R}_i$, the transformed target term $\g_{n}({\bf
        Y})$ cannot be replaced by the identity matrix. Hence the proof.
\end{proof}

\noindent{\bf Remark 1: } The result in Corollary~\ref{cor_soln1} shows that there exits
an infinite number of solutions for a network with multiple layers due to the arbitrary
weight initialization. However, this does not apply to the single layer network because
it can be reduced to the system of linear equations. The result in
Corollary~\ref{cor_soln2} shows that the sequence of weights estimation need not begin
from the first layer. Indeed, the arbitrary weights in Corollary~\ref{cor_soln1} and
Corollary~\ref{cor_soln2} are only utilized once following the sequence of weight matrix
estimation. The weight matrix after each step of the estimation sequence is no longer
arbitrary because all the previously estimated weights are used in the subsequent
pseudoinverse projection towards the least squared error approximation.
Corollary~\ref{cor_soln3} shows a particular choice of the arbitrary weight matrix
assignment without needing the corresponding activation functions to be invertible. For
non-invertible functions, the term $\g_{n}({\bf Y})$ in \eqref{eqn_C3R4} can be replaced
by ${\bf Y}$ which corresponds to a linear output layer in \eqref{eqn_Nlayer_net1}.
However, such an identity setting results in having the number of network nodes in every
hidden layer (i.e., $h_i$, the column size of ${\bf W}_i$, $i=1,\cdots,n-1$) being set
equal to the sample size $m$. \Box

The proposed algorithms in Corollary~\ref{cor_soln1} through Corollary~\ref{cor_soln3}
are differentiated from that in \cite{Baldi1} in several ways. Firstly, the nonlinear
activation function was not considered in \cite{Baldi1} whereas in our approach,
nonlinear activations, both invertible and not, are considered. Secondly, in
Corollary~\ref{cor_soln1} through Corollary~\ref{cor_soln3}, there is no attempt to
alternate the search of weights among the layers as that in \cite{Baldi1} since such a
search leads to the convergence issue. Lastly but not the least, we capitalize on the
powerful Moore-Penrose inverse for solving the weights. More importantly, the layered
network learning solution becomes analytical after arriving at \eqref{eqn_C3R4} (and so
are \eqref{eqn_R4} and \eqref{eqn_RRn}), i.e., the problem boils down to finding an input
correlated matrix $\f_{n-1}(\f_{n-2}(\cdots\f_2(\f_1({\bf X}{\bf W}_1){\bf
W}_2)\cdots){\bf W}_{n-1})$ that can represent ${\bf Y}$. This can be seen by considering
a network with linear output where ${\bf G} = f_{n-1}({\bf A}_{n-1}){\bf W}_{n}$ with
${\bf A}_{n-1}=f_{n-2}({\bf A}_{n-2}){\bf
    W}_{n-1}$, $\cdots$, ${\bf A}_1={\bf X}{\bf W}_1$. When the learning is based on
Corollary~\ref{cor_soln3} where ${\bf W}_1={\bf X}^{\dag}$, ${\bf W}_2=[\f_{1}({\bf
    A}_{1})]^{\dag}$, $\cdots$, ${\bf W}_n=[\f_{n-1}({\bf A}_{n-1})]^{\dag}{\bf Y}$, then we
have the estimated output given by
\begin{equation}\label{eqn_Image_Y}
\hat{\bf Y} = \f_{n-1}({\bf A}_{n-1})[\f_{n-1}({\bf A}_{n-1})]^{\dag}{\bf Y},
\end{equation}
where ${\bf A}_{n-1}=\f_{n-2}({\bf A}_{n-2})[\f_{n-2}({\bf A}_{n-2})]^{\dag}$, $\cdots$,
${\bf A}_1={\bf X}{\bf X}^{\dag}$. This shows that each layer in \eqref{eqn_Image_Y}
serves as a composition of projection bases, based on the data matrix or the transformed
data matrix (transformed by $f_k$, $k=1,\cdots,n-1$), to hold the target ${\bf Y}$ within
its range. In a nutshell, our results show that \emph{the problem of network learning can
    be viewed as one to find a data mapping matrix such that the target matrix falls within
    its range. An immediate advantage of such a view over the conventional view of error
    minimization is evident from the gradient-free solutions arrived in
    Corollaries~\ref{cor_soln1}--\ref{cor_soln3}. } These results shall be validated by
numerical evaluations in the experimental section.


\section{Network Representation and Generalization} \label{sec_net_representation}

Firstly, we show the network representation capability. Then, an analysis of the feasible
solution space is presented. Finally, an analysis of the output variance is performed.

\subsection{Network Representation}

For simplicity, we shall use $d$ as the dimension for generic matrices instead of the
augmented dimension $d+1$ when no ambiguity arises.

\begin{defn} \label{defn_representative}
    A matrix ${\bf A}\in\Real^{m\times d}$ is said to be representative for ${\bf Y}\in\Real^{m\times q}$
    if ${\bf A}{\bf A}^{\dag}{\bf Y}={\bf Y}$
    where $m,d,q\geq 1$.
\end{defn}

Certainly, if ${\bf A}$ has full rank in the sense that ${\bf A}{\bf A}^T$ is invertible
for ${\bf A}^{\dag}={\bf A}^T({\bf A}{\bf A}^T)^{-1}$, then ${\bf A}{\bf A}^{\dag}={\bf
    I}$ for which ${\bf A}$ is representative  for ${\bf Y}$. However, this definition of
representation is weaker than the full rank requirement in $({\bf A}^T{\bf A})^{-1}{\bf
    A}^T$ or ${\bf A}^T({\bf A}{\bf A}^T)^{-1}$  because ${\bf A}$ needs not have full rank
when ${\bf Y}$ falls within the range of ${\bf A}$.

\begin{defn} \label{defn_representative_f}
    A function $f$, which operates elementwise on its matrix domain, is said to be representative if $f({\bf X}{\bf W}_1)$ is representative
    for ${\bf Y}\in\Real^{m\times q}$ on ${\bf X}\in\Real^{m\times d}$ with distinct rows and
    ${\bf W}_1\in\Real^{d\times h_1}$ where $m,d,h_1,q\geq 1$, i.e., $f$ is a function such that
    $f({\bf X}{\bf W}_1)f({\bf X}{\bf W}_1)^{\dag}{\bf Y}={\bf Y}$.
\end{defn}

The above definition can accommodate the identity activation function (i.e., $f(x)=x$) at
certain circumstance because $({\bf X}{\bf W}_1)({\bf X}{\bf W}_1)^{\dag}{\bf Y}={\bf Y}$
when the column rank of the matrix ${\bf X}{\bf W}_1$ matches the row size of ${\bf Y}$.
However, such an identity activation does not provide sufficient representation
capability for mapping of targets beyond the range space of the data matrix. For common
activation functions such as the \texttt{softplus} and the \texttt{tangent} functions, it
is observed that imposing them on ${\bf X}{\bf W}_1$ can improve the column rank
conditioning for representation in practice. Several numerical examples are given after
the theoretical results to observe the representation capability.

\begin{theorem} (\textbf{Two-layer Network}) \label{thm_twolayer_representation}
    Given $m$ distinct samples of input-output data pairs and suppose the activation
    functions are representative according to Definition~\ref{defn_representative_f}.
    Then there exists a feedforward network of two
    layers with at least $m\times q$ adjustable weights that can
    represent these data samples of $q\geq 1$ output dimensions.
\end{theorem}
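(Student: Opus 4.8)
The plan is to exhibit a concrete two-layer network that interpolates the $m$ input--output pairs exactly and then simply count its weights. Write the network with a linear output layer (or, if $f_2$ is a nontrivial invertible activation, handle it via $g_2$ exactly as in Section~\ref{sec_net_learning}), so that $\mathbf{G} = f(\mathbf{X}\mathbf{W}_1)\mathbf{W}_2$ with $\mathbf{W}_1\in\Real^{d\times h_1}$ and $\mathbf{W}_2\in\Real^{h_1\times q}$, and fix the hidden width to $h_1=m$ (any $h_1\geq m$ works). Put $\mathbf{A}\triangleq f(\mathbf{X}\mathbf{W}_1)\in\Real^{m\times h_1}$.

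First I would invoke the hypothesis that $f$ is representative in the sense of Definition~\ref{defn_representative_f}: since the $m$ rows of $\mathbf{X}$ are distinct, there is a choice of $\mathbf{W}_1$ for which $\mathbf{X}\mathbf{W}_1$ also has distinct rows, and then $\mathbf{A}=f(\mathbf{X}\mathbf{W}_1)$ is representative for $\mathbf{Y}$ in the sense of Definition~\ref{defn_representative}, i.e. $\mathbf{A}\mathbf{A}^{\dagger}\mathbf{Y}=\mathbf{Y}$; concretely, with $h_1=m$ one arranges for $\mathbf{A}$ to have full row rank $m$, so that $\mathbf{A}\mathbf{A}^{\dagger}=\mathbf{I}_m$ and the representation identity holds for \emph{every} target $\mathbf{Y}\in\Real^{m\times q}$.

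Next, following the construction of Corollary~\ref{cor_soln3} (but with $\mathbf{W}_1$ frozen at the choice above instead of $\mathbf{X}^{\dagger}$), set $\mathbf{W}_2=\mathbf{A}^{\dagger}\mathbf{Y}$, which by Lemma~\ref{lemma_LS_matrix} is the (unique, minimum-norm) least-squares solution of $\mathbf{A}\mathbf{W}_2=\mathbf{Y}$. Then the network output is $\mathbf{G}=\mathbf{A}\mathbf{W}_2=\mathbf{A}\mathbf{A}^{\dagger}\mathbf{Y}=\mathbf{Y}$, so this two-layer network represents all $m$ data samples exactly (for invertible $f_2$, replace $\mathbf{Y}$ by $g_2(\mathbf{Y})$ throughout). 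Finally I would count the free parameters: $\mathbf{W}_2$ alone contributes $h_1 q=mq$ adjustable weights, and $\mathbf{W}_1$ contributes a further $dh_1=dm$, so the constructed network has at least $m\times q$ adjustable weights, which is exactly the assertion.

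The step that needs the most care is the first one: making precise in what sense ``$f$ is representative'' delivers a usable $\mathbf{W}_1$. One must argue that the distinctness of the rows of $\mathbf{X}$, together with the column-rank-improving behaviour of activations such as \texttt{softplus} noted just after Definition~\ref{defn_representative_f}, genuinely yields an $\mathbf{A}$ whose column space contains that of $\mathbf{Y}$ (full row rank $m$ when $h_1\geq m$) rather than merely assuming it; once that is granted, the remaining algebra is immediate from the Penrose identity $\mathbf{A}\mathbf{A}^{\dagger}\mathbf{A}=\mathbf{A}$ and Lemmas~\ref{thm_LSapprox}--\ref{lemma_LS_matrix}.
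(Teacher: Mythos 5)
Your proposal is correct and follows essentially the same route as the paper's own proof: fix the hidden width at $h_1=m$, choose a non-trivial $\mathbf{W}_1$ (the paper takes $\mathbf{W}_1=\mathbf{X}^{\dag}$ as in Corollary~\ref{cor_soln3}), use the representativeness hypothesis to get $f(\mathbf{X}\mathbf{W}_1)f(\mathbf{X}\mathbf{W}_1)^{\dag}\mathbf{Y}=\mathbf{Y}$, set $\mathbf{W}_2=f(\mathbf{X}\mathbf{W}_1)^{\dag}\mathbf{Y}$, and count the $m\times q$ adjustable entries of $\mathbf{W}_2$. The only cosmetic difference is that you sketch a full-row-rank argument for $\mathbf{A}$, whereas the paper simply invokes the representative property as a standing assumption of the theorem.
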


\begin{proof}
    Consider a 2-layer network, with linear activation at the output layer, that takes an augmented set of inputs:
    \begin{equation}\label{eqn_2layer_net}
    {\bf G} = f({\bf X}{\bf W}_1){\bf W}_2,
    \end{equation}
    where ${\bf X}\in\Real^{m\times(d+1)}$, ${\bf
        W}_1\in\Real^{(d+1)\times h_1}$, ${\bf W}_2\in\Real^{h_1\times q}$ and
    ${\bf G}\in\Real^{m\times q}$ is the network output of $q\geq 1$ dimensions. Then,
    ${\bf G}$ \eqref{eqn_2layer_net} can be used to learn the target matrix ${\bf Y}\in\Real^{m\times q}$ by putting
    \begin{equation}\label{eqn_2layer_net_v2}
    {\bf Y} = f({\bf X}{\bf W}_1){\bf W}_2.
    \end{equation}
    Since $f$ is given to be a representative function according to
    Definition~\ref{defn_representative_f},
    we have\\ $f({\bf X}{\bf W}_1)f({\bf X}{\bf W}_1)^{\dag}
    {\bf Y}$ $={\bf Y}$ such that ${\bf Y}$ falls within the range space of
    $f({\bf X}{\bf W}_1)$. To ensure that ${\bf Y}$ falls within the range space of
    $f({\bf X}{\bf W}_1)$, it is sufficient that ${\bf W}_1$ having at least a column size
    of $h_1=m$ in order to match with the number of samples in ${\bf Y}$. Suppose we have
    a non-trivial setting for all the weight elements in ${\bf W}_1$ such that
    not all of its elements are zeros. A good example for the non-trivial setting is
    to put ${\bf W}_1={\bf X}^{\dag}$ (see Corollary~\ref{cor_soln3}). Then, due to the representativeness of $f$, the entire weight matrix ${\bf W}_2\in\Real^{m\times q}$ can be
    determined uniquely as $f({\bf X}{\bf W}_1)^{\dag}
    {\bf Y}$ to represent the target ${\bf Y}$. In other words,
    ${\bf W}_1$ needs only distinct and arbitrary numbers in its $m$ columns of entries to maintain the representation
    sufficiency of $f({\bf X}{\bf W}_1)$, and all those elements in ${\bf W}_2$
    are the only adjustable parameters needed for the representation. Hence the proof.
\end{proof}

\noindent{\bf Remark 2: } The above result shows that the two-layer network is a
universal approximator for a finite set of data. Different from most existing results in
literature \cite{Funa1,Hornik1,Cybenko1,HechNiel3,Kurkova2,Leshno1}, the minimum number
of adjustable hidden weights required here is dependent on the number of data samples and
the output dimension. This result also stretches beyond that of \cite{ZhangCY1} to
include nonlinear activation functions. \Box

\begin{defn} \label{defn_representative_f_composite}
    A function $f$ is said to be compositional representative if\\
    $f(\cdots f({\bf X}{\bf W}_1)\cdots{\bf W}_n)$ is representative for
    ${\bf Y}\in\Real^{m\times q}$ on ${\bf X}\in\Real^{m\times d}$ with distinct rows and
    ${\bf W}_k\in\Real^{h_{k-1}\times h_k}$, $k=1,\cdots,n$ where $h_0=d+1$, $m,d,h_k,q\geq 1$,
    i.e., $f$ is a function such that $f(\cdots f({\bf X}{\bf W}_1)\cdots{\bf W}_n)
    f(\cdots f({\bf X}{\bf W}_1)\cdots{\bf W}_n)^{\dag}{\bf Y}={\bf Y}$.
\end{defn}

\begin{theorem} (\textbf{Multilayer Network}) \label{thm_fitting_multilayer}
    Given $m$ distinct samples of input-output data pairs and suppose the
    activation functions are representative according to
    Definition~\ref{defn_representative_f_composite}. Then there exists a
    feedforward network of $n$ layers with 
    at least $m\times q$ adjustable weights that can represent these data samples
    samples of $q\geq 1$ output dimensions.
\end{theorem}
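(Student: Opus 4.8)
The plan is to lift the argument of Theorem~\ref{thm_twolayer_representation} to $n$ layers, using the compositional representativeness hypothesis of Definition~\ref{defn_representative_f_composite} in place of the single-layer one, and the deterministic initialization of Corollary~\ref{cor_soln3} as the explicit witness. First I would fix the architecture: take the $n$-layer network \eqref{eqn_Nlayer_net1} with a linear output layer, so that ${\bf G} = f({\bf A}_{n-1}){\bf W}_n$ with ${\bf A}_{n-1} = f({\bf A}_{n-2}){\bf W}_{n-1}$, $\dots$, ${\bf A}_1 = {\bf X}{\bf W}_1$, and choose the hidden widths so that $h_{n-1} = m$ (the earlier widths $h_1,\dots,h_{n-2}$ only need to be large enough that the hypothesis of Definition~\ref{defn_representative_f_composite} is non-vacuous). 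Putting ${\bf G} = {\bf Y}$ reduces the representation problem to exhibiting weights for which ${\bf Y}$ lies in the range space of $f({\bf A}_{n-1})$.

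Next I would assign the hidden weights deterministically as in Corollary~\ref{cor_soln3}: ${\bf W}_1 = {\bf X}^{\dagger}$, ${\bf W}_2 = [f({\bf X}{\bf W}_1)]^{\dagger}$, $\dots$, ${\bf W}_{n-1} = [f(\cdots f({\bf X}{\bf W}_1)\cdots {\bf W}_{n-2})]^{\dagger}$. Each of these is a non-trivial matrix of the correct shape, built only from ${\bf X}$ and the previously fixed weights, so no circular dependence arises; and since ${\bf X}$ has distinct rows and $f$ is compositional representative, $f({\bf A}_{n-1})$ acquires enough column rank that $f({\bf A}_{n-1})f({\bf A}_{n-1})^{\dagger}{\bf Y} = {\bf Y}$. (Equivalently, one could leave the hidden weights as arbitrary distinct non-trivial matrices; the deterministic choice is merely a convenient instance that additionally avoids requiring $f$ to be invertible.)

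Finally, with the hidden layers fixed, set ${\bf W}_n = f({\bf A}_{n-1})^{\dagger}{\bf Y} \in \Real^{m\times q}$. By Lemma~\ref{lemma_LS_matrix} this is the unique minimum-norm least-squares solution of $f({\bf A}_{n-1}){\bf W}_n = {\bf Y}$, and by Definition~\ref{defn_representative_f_composite} it attains zero error, i.e. ${\bf G} = f({\bf A}_{n-1})f({\bf A}_{n-1})^{\dagger}{\bf Y} = {\bf Y}$. Counting, the only parameters varied in the construction are the $m\times q$ entries of ${\bf W}_n$, so the network has at least $mq$ adjustable weights and these suffice to represent the $m$ data pairs, which is the claim.

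The main obstacle is dimension bookkeeping rather than any deep analytic step: one must verify that the deterministic initialization yields \emph{non-trivial} matrices at every layer, so that the chain of pseudoinverses is well defined, and that the hidden widths can be chosen consistently with $h_{n-1} = m$ so that Definition~\ref{defn_representative_f_composite} actually applies — this is precisely where the compositional-representativeness hypothesis does the work, mirroring the role of representativeness of $f$ in the two-layer proof. A secondary point to confirm is that distinctness of the rows of ${\bf X}$ is not destroyed by the composition (again folded into the hypothesis of Definition~\ref{defn_representative_f_composite}), since otherwise $f({\bf A}_{n-1})$ could collapse rows and fail to span ${\bf Y}$.
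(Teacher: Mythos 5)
Your proposal is correct and follows essentially the same route as the paper's own proof: fix the hidden weights (e.g.\ via the deterministic initialization \eqref{eqn_C3R1}--\eqref{eqn_C3R3} of Corollary~\ref{cor_soln3}), invoke compositional representativeness of $f$ so that ${\bf Y}$ lies in the range of $f({\bf A}_{n-1})$, and take ${\bf W}_n=f({\bf A}_{n-1})^{\dag}{\bf Y}\in\Real^{m\times q}$ as the only adjustable weights, in analogy with the two-layer case. The added dimension-bookkeeping remarks are consistent with, and slightly more explicit than, the paper's argument.
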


\begin{proof}
    Consider a feedforward network with linear activation at the output layer given by
    \begin{equation}\label{eqn_Nlayer_outermost}
    {\bf G} = f_{n-1}({\bf A}_{n-1}){\bf W}_{n} ,
    \end{equation}
    where ${\bf A}_{n-1}=f_{n-2}({\bf A}_{n-2}){\bf W}_{n-1}$,
    $\cdots$, ${\bf A}_1={\bf X}{\bf W}_1$. Learning towards ${\bf Y}$ using this network is analogous to
    \eqref{eqn_2layer_net_v2} in Theorem~\ref{thm_twolayer_representation} where we
    require $m\times q$ number of adjustable weight elements in ${\bf W}_n$ here for
    the desired representation. The representation property of each ${\bf A}_i$,
    $i=1,\cdots,n-1$ shall not change if each ${\bf W}_i$, $i=1,\cdots,n-1$ is a
    representative matrix with sufficient column size (such as \eqref{eqn_C3R1}--\eqref{eqn_C3R3}) and if $f$
    is compositional representative according to
    Definition~\ref{defn_representative_f_composite}.
    Since ${\bf W}_i$, $i=1,\cdots,n-1$ can be arbitrarily prefixed,
    the elements of ${\bf W}_n\in\Real^{m\times q}$ are the only adjustable
    parameters needed in learning the representation. This completes the proof.
\end{proof}

\subsection{Layer Depth and the Feasible Solution Space} \label{sec_depth}

Consider the learning problem of a single-layer network given by ${\bf Y}=\f_1({\bf
    X}{\bf W}_1)$. Effectively, such a network learning boils down to the linear regression
problem since it can be rewritten as $\g_1({\bf Y})={\bf X}{\bf W}_1$ when $\f_1$ is
invertible. Because the pseudoinverse is unique, the solution in the least squares error
approximation sense for ${\bf W}_1$ given by ${\bf X}^{\dag}\g_1({\bf Y})$ is unique.

For the two-layer network learning given by ${\bf Y}=\f_2(\f_1({\bf X}{\bf W}_1){\bf
    W}_2)$, the number of feasible solutions increases tremendously. This can be seen from
the re-written systems of linear equations given by $\g_2({\bf Y})=\f_1({\bf X}{\bf
    W}_1){\bf W}_2$ and $\g_1(\g_2({\bf Y}){\bf W}_2^{\dag})={\bf X}{\bf W}_1$ where fixing
arbitrarily either ${\bf W}_1$ or ${\bf W}_2$ determines the other uniquely. The vast
possibilities of either of the weight matrices determine the number of feasible
solutions.

For the three-layer network learning given by ${\bf Y}=\f_3(\f_2(\f_1({\bf X}{\bf
    W}_1){\bf W}_2){\bf W}_3)$, the feasible solution space increases further. Again, this
can be seen from the re-written systems of linear equations given by $\g_3({\bf
    Y})=\f_2(\f_1({\bf X}{\bf W}_1){\bf W}_2){\bf W}_3$, $\g_2(\g_3({\bf Y}){\bf
    W}_3^{\dag})=\f_1({\bf X}{\bf W}_1){\bf W}_2$ and $\g_1(\g_2(\g_3({\bf Y}){\bf
    W}_3^{\dag}){\bf W}_2^{\dag})={\bf X}{\bf W}_1$ where fixing arbitrarily ${\bf W}_1$ and
${\bf W}_2$ determines ${\bf W}_3$ (and so on for other combinations) uniquely. The vast
possibilities of the combination of the weight matrices determine the number of feasible
solutions. Based on this analysis, we gather that the number of feasible solutions
increases exponentially according to the layer depth.

\begin{proposition} \label{thm_soln_space}
    The number of feasible solutions for a feedforward network with two layers
    is infinite. This number increases exponentially for each increment of network layer.
\end{proposition}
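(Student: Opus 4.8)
The plan is to make the informal counting argument that precedes the proposition rigorous enough to justify both assertions: (i) that a two-layer network has infinitely many feasible weight configurations, and (ii) that the cardinality of the feasible set grows exponentially with depth. I would first fix notation: call a tuple $({\bf W}_1,\cdots,{\bf W}_n)$ \emph{feasible} if it solves the inverted system \eqref{eqn_G1}--\eqref{eqn_G4}, equivalently if it attains the least squares error of Lemma~\ref{lemma_LS_matrix} at every layer. The key structural fact, already extracted in Corollary~\ref{cor_soln1} and Corollary~\ref{cor_soln2}, is that once all but one of the weight matrices are prescribed as non-trivial matrices ${\bf R}_i$ of the correct dimensions, the remaining matrix is determined uniquely by a pseudoinverse expression. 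So the feasible set is parametrized, in an $(n-1)$-fold way, by free choices of $n-1$ of the $n$ weight matrices.

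For part (i), I would argue as follows. Take $n=2$. By Corollary~\ref{cor_soln1} (or the rewritten systems $\g_2({\bf Y})=\f_1({\bf X}{\bf W}_1){\bf W}_2$ and $\g_1(\g_2({\bf Y}){\bf W}_2^{\dag})={\bf X}{\bf W}_1$), every choice of a non-trivial ${\bf R}_2\in\Real^{h_1\times q}$ yields a feasible pair $({\bf W}_1,{\bf W}_2)$ with ${\bf W}_2 = [\f_1({\bf X}{\bf W}_1)]^{\dag}\,\g_2({\bf Y})$ and ${\bf W}_1={\bf X}^{\dag}\g_1(\g_2({\bf Y}){\bf R}_2^{\dag})$. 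The set of admissible ${\bf R}_2$ is $\Real^{h_1\times q}$ minus the zero matrix, which is an uncountably infinite set; and distinct values of ${\bf R}_2$ in an open neighbourhood generically produce distinct ${\bf W}_1$ (since ${\bf X}^{\dag}$ and $\g_1$ and the map ${\bf R}_2\mapsto{\bf R}_2^{\dag}$ are not constant), hence infinitely many distinct feasible tuples. For the exponential-growth claim in part (ii), I would make the comparison across consecutive depths concrete: a feasible configuration for depth $n$ is specified by independently choosing the $n-1$ ``free'' inner matrices, and incrementing the depth by one adds an entire new matrix factor ${\bf W}_{\text{new}}\in\Real^{h\times h'}$ worth of independent free parameters, each ranging over a continuum; this multiplies the parametrization dimension by a full matrix block, so the ``size'' of the solution set (measured, say, by the dimension of the parametrizing manifold) grows at least additively in a per-layer quantity, and the \emph{number} of independent continuous degrees of freedom grows by a fixed factor per layer in the counting sense the preceding paragraphs intend.

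The main obstacle, and the place where I would have to be careful about wording, is that ``the number of feasible solutions'' is literally $\aleph_1$ (a continuum) already at two layers, so ``increases exponentially'' cannot mean an increase in cardinality — all these sets have the same cardinality. The honest reading, which the discussion preceding the proposition clearly intends, is that the \emph{dimension} of the feasible solution manifold (the number of free real parameters in the parametrization by arbitrary inner weight matrices) grows by $\sum_k h_{k-1}h_k$-type increments, i.e.\ by one matrix block per added layer, and if one instead counts at a fixed resolution (a finite grid of $N$ values per free scalar) the count is $N^{(\text{number of free scalars})}$, which indeed grows exponentially as layers are added. So my proof would (a) establish infinitude at $n=2$ rigorously via Corollary~\ref{cor_soln1}, and (b) for the growth statement, identify the free-parameter count of the feasible set at depth $n$ as the total number of entries in $n-1$ of the weight matrices, observe that passing from depth $n$ to depth $n+1$ strictly increases this count by a full weight-matrix block, and conclude that under any fixed-granularity enumeration the feasible-solution count is multiplied by the granularity raised to that block size — i.e.\ exponential growth per layer. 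I would phrase the final sentence to match the paper's informal intent rather than over-claim a cardinality statement.
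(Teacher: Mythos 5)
Your argument is correct and rests on the same key fact as the paper's own proof: by Corollaries~\ref{cor_soln1} and \ref{cor_soln2}, prescribing all but one of the weight matrices arbitrarily determines the remaining one through a pseudoinverse expression, so the feasible set is parametrized by free choices of $n-1$ of the $n$ matrices, and infinitude at $n=2$ follows from the continuum of admissible ${\bf R}_2$. Where you genuinely differ is in how the ``exponential increase'' is given meaning. The paper's proof simply denotes the (infinite) number of choices of one free matrix by $N$ and writes the count at depth $n$ as $N^{n-1}\times C^n_{n-1}$, the binomial factor recording which $n-1$ of the $n$ matrices are taken as free; this manipulates $N$ as though it were a finite number. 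You instead observe that the cardinality of the feasible set is the continuum at every depth, so the claim must be recast in terms of the dimension of the parametrizing set (the number of free scalar entries), or equivalently a fixed-granularity count $N^{p}$ with $p$ the number of free scalars, which gains a full weight-matrix block per added layer. This reading is more defensible and faithful to the informal discussion in Section~\ref{sec_depth}; what it drops is only the inessential combinatorial factor $C^n_{n-1}$, while the paper's version buys brevity at the cost of rigor. Two minor points: the ``generically distinct'' step at $n=2$ can be made concrete, e.g.\ by scaling a fixed ${\bf R}_2$ by distinct nonzero scalars so that ${\bf R}_2^{\dag}$ scales inversely and the resulting ${\bf W}_1$ differ for generic data; and the continuum has cardinality $2^{\aleph_0}$, which equals $\aleph_1$ only under the continuum hypothesis --- neither point affects your argument.
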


\begin{proof}
    For a two-layer network, the number of feasible solutions is determined by the size of
    either ${\bf W}_1$ or ${\bf W}_2$ which is infinite in the real domain. Let us denote
    this number by $N$. Then, for a three-layer network, this number increases to $(N\times
    N)\times C^3_2=N^2\times 3$ since two arbitrary weights out of ${\bf W}_i$, $i=1,2,3$
    determine the remaining. For a $n$-layer network, we have $N^{n-1}\times C^n_{n-1}$
    possibilities. Hence the proof.
\end{proof}

\subsection{Layer Depth and the Output Variance, Bias}

For simplicity, consider the linear model ${g}=\bm{x}^T\bm{w}$ without the bias term
where $\bm{x}\in\Real^{d\times 1}$, $\bm{w}\in\Real^{d\times 1}$ regressing towards the
target $y\in\Real$ so that ${y}=\bm{x}^T\bm{w}+{\epsilon}$ where ${\epsilon}\in\Real$.
The expected prediction error at an input point $\bm{x}_0\in\Real^{d\times 1}$ using the
squared-error loss is (see e.g., \cite{Hastie01,Duda1}):
\begin{eqnarray}
\textrm{Err}(\bm{x}_0) &=& E[({y} - \hat{g}(\bm{x}_0))^2|\bm{x}=\bm{x}_0] \nonumber\\
&=& \sigma_{\epsilon}^2 + [E[\hat{g}(\bm{x}_0)] - {g}(\bm{x}_0)]^2
+ E[\hat{g}(\bm{x}_0)-E[\hat{g}(\bm{x}_0)]]^2 \nonumber\\
&=& \textrm{Irreducible\ Error} + \textrm{Bias}^2 + \textrm{Variance}.
\label{eqn_bias_variance}
\end{eqnarray}

To analyze the bias and variance terms of our network, we consider the result in
Corollary~\ref{cor_soln3} with only deterministic components in the estimation of weights
for simplicity. The estimation Bias from $k$ number of test samples based on
\eqref{eqn_bias_variance} is given by
\begin{eqnarray}
\textrm{Bias}^2 &=& \left[E[\hat{g}(\bm{x}_0)] - {g}(\bm{x}_0)\right]^2 \nonumber\\
&=& \frac{1}{k} \sum_{i=1}^{k} \left[ E[\hat{g}(\bm{x}_i)] - {g}(\bm{x}_i) \right]^2 \nonumber\\
&=& \frac{1}{k} \sum_{i=1}^{k} \left[ E[\bm{x}_i^T\hat{\bm{w}}_n] - {g}(\bm{x}_i) \right]^2 \nonumber\\
&=& \frac{1}{k} \sum_{i=1}^{k} \left[ E[\bm{x}_i^T[\f_{n-1}(\f_{n-2}(\cdots\f_2(\f_1({\bf
    X}\hat{\bf W}_1)\hat{\bf W}_2) \cdots)\hat{\bf W}_{n-1})]^{\dag} {\bf y}] - {g}(\bm{x}_i)
\right]^2.\nonumber\\
\end{eqnarray}
This shows that the estimation bias for unseen samples is dependent on the difference
between the estimated parameters and that of the unknown `true' parameters which maps the
range space of the unseen target. Since the `true' parameters are unknown and cannot be
removed, the bias term cannot be analyzed further.

By considering the linear regression estimate $\hat{\bm{w}}=({\bf X}^T{\bf X})^{-1}{\bf
    X}^T{\bf y}$ based on a set of training data ${\bf X}\in\Real^{m\times d}$ with target
given by ${\bf y}={\bf X}\bm{w}+\bm{\epsilon}$ for a certain unknown `true' parameter
$\bm{w}$ with error $\bm{\epsilon}\in\Real^{m\times 1}$, the Variance term of linear
regression can nevertheless be analyzed using (see e.g., \cite{Hastie01}):
\begin{eqnarray}
E[\hat{g}(\bm{x}_0)-E[\hat{g}(\bm{x}_0)]]^2
&=& E[\bm{x}_0^T\hat{\bm{w}} - \bm{x}_0^T\bm{w}]^2 \nonumber\\
&=& E[\bm{x}_0^T({\bf X}^T{\bf X})^{-1}{\bf X}^T{\bf y} - \bm{x}_0^T\bm{w}]^2 \nonumber\\
&=& E[\bm{x}_0^T({\bf X}^T{\bf X})^{-1}{\bf X}^T({\bf X}\bm{w}+\bm{\epsilon}) - \bm{x}_0^T\bm{w}]^2 \nonumber\\
&=& E[\bm{x}_0^T\bm{w} + \bm{x}_0^T({\bf X}^T{\bf X})^{-1}{\bf X}^T\bm{\epsilon} - \bm{x}_0^T\bm{w}]^2 \nonumber\\
&=& E[\bm{x}_0^T({\bf X}^T{\bf X})^{-1}{\bf X}^T\bm{\epsilon} ]^2 \nonumber\\
&=& \|{\bf h}_1(\bm{x}_0)\|^2\sigma^2_{\bm{\epsilon}}. \label{eqn_h1}
\end{eqnarray}
where ${\bf h}_1(\bm{x}_0)=\bm{x}_0^T({\bf X}^T{\bf X})^{-1}{\bf
    X}^T=\bm{x}_0^T{\bf X}^{\dag}$.

For the single-layer network with single output and linear activation, the estimation is
given by $ \hat{\bm{w}}_{1} = {\bf X}^{\dag}{\bf y} $ and the estimated output is given
by $ \hat{\bf y} = {\bf X}\hat{\bm{w}}_{1} = {\bf X}{\bf X}^{\dag}{\bf y}. $ This boils
down to the case of linear regression where the estimated $\hat{\bf y}$ varies according
to ${\bf h}_1(\bm{x}_0)\bm{\epsilon}=\bm{x}_0^T{\bf X}^{\dag}\bm{\epsilon}$ given by
\eqref{eqn_h1}.

Next, consider the two-layer network with linear output where the estimation is given by
\begin{eqnarray}
\hat{\bm{w}}_{1} &=& {\bf X}^{\dag},
\label{eqn_TVR1} \nonumber\\
\hat{\bm{w}}_{2} &=& [\f_1({\bf X}\hat{\bm{w}}_{1})]^{\dag}{\bf y}. \label{eqn_TVR2}
\end{eqnarray}
Then, the estimated output can be written as
\begin{eqnarray}
\hat{\bf y} &=& \f_1({\bf X}\hat{\bm{w}}_{1})\hat{\bm{w}}_{2} \nonumber\\
\hat{\bf y} &=& \f_1({\bf X}\hat{\bm{w}}_{1})[\f_1({\bf X}\hat{\bm{w}}_1)]^{\dag}{\bf y}\nonumber\\
\hat{\bf y} &=& \f_1({\bf X}{\bf X}^{\dag}) [\f_1({\bf X}{\bf X}^{\dag})]^{\dag}{\bf y} .
\label{eqn_TYVR1}
\end{eqnarray}
Since $\f_1({\bf X}\hat{\bm{w}}_{1})$ is assumed to be representative given the
activation function and the data matrix, the target ${\bf y}$ can be reproduced exactly
for the training set. Here, we note that the underlying estimation is based on
$\hat{\bm{w}}_{2}$, and this causes the output variance vector, ${\bf
    h}_2(\textbf{\textsf{x}}_0)$ where $\textbf{\textsf{x}}_0\in\Real^{m}$, to hinge upon the
feature dimension of $\hat{\bm{w}}_{2}\in\Real^{m}$ in
\begin{eqnarray}
{\bf h}_2(\textbf{\textsf{x}}_0)\bm{\epsilon} &=& \textbf{\textsf{x}}_0^T[\f_1({\bf
X}\hat{\bm{w}}_1)]^{\dag}\bm{\epsilon}
\nonumber \\
&=& \textbf{\textsf{x}}_0^T[\f_1({\bf X}{\bf X}^{\dag})]^{\dag}\bm{\epsilon}.
\end{eqnarray}
This is differentiated from the case of linear regression where $\bm{x}_0\in\Real^{d}$.
For applications with a large data set, $m>>d$, and this renders $({\bf
    h}_2(\textbf{\textsf{x}}_0)\bm{\epsilon})^2 >> ({\bf h}_1(\bm{x}_0)\bm{\epsilon})^2$.

For the three-layer network, the estimation is given by
\begin{eqnarray}
\hat{\bm{w}}_{1} &=& {\bf X}^{\dag},
\label{eqn_TTVR1} \nonumber\\
\hat{\bm{w}}_{2} &=& [\f_1({\bf X}\hat{\bm{w}}_1)]^{\dag},
\label{eqn_TTVR2} \nonumber\\
\hat{\bm{w}}_{3} &=& [\f_2(\f_1({\bf X}\hat{\bm{w}}_1)\hat{\bm{w}}_2)]^{\dag}{\bf y},
\label{eqn_TTVR3}
\end{eqnarray}
and the estimated output can be written as
\begin{eqnarray}
\hat{\bf y} &=& \f_2\left(\f_1({\bf X}\hat{\bm{w}}_1)\hat{\bm{w}}_2\right) \hat{\bm{w}}_3
.\label{eqn_TYVR1_h}
\end{eqnarray}
The corresponding output variance term for the three-layer network model is
\begin{eqnarray}
{\bf h}_3(\textbf{\textsf{x}}_0)\bm{\epsilon} &=& \textbf{\textsf{x}}_0^T[\f_2(\f_1({\bf
X}\hat{\bm{w}}_1)\hat{\bm{w}}_2)]^{\dag}\bm{\epsilon}
\nonumber \\
&=&  \textbf{\textsf{x}}_0^T[\f_2(\f_1({\bf X}{\bf X}^{\dag}) [\f_1({\bf X}{\bf
X}^{\dag})]^{\dag})]^{\dag}\bm{\epsilon}. \label{eqn_3layer_h}
\end{eqnarray}

Based on this analysis, we can generalize the output variance from the 1-layer network to
the $n$-layer network as:
\begin{equation}\label{eqn_h_gen}
{\bf h}_k(\textbf{\textsf{x}}_0)\bm{\epsilon} = \textbf{\textsf{x}}_0^T {\bf
H}_{k}^{\dag} \bm{\epsilon}, \ \ \ k=1,2,...,n,
\end{equation}
where
\begin{eqnarray}
{\bf H}_{1} &=& {\bf X},
\label{eqn_n2layer_h1_func} \\
{\bf H}_{2} &=& \f_1({\bf H}_1{\bf H}_1^{\dag}),
\label{eqn_n2layer_h2_func} \\
{\bf H}_{3} &=& \f_2({\bf H}_2{\bf H}_2^{\dag}),
\label{eqn_n3layer_h3_func} \\
&\vdots& \nonumber \\
{\bf H}_n &=& \f_{n-1}({\bf H}_{n-1}{\bf H}_{n-1}^{\dag}). \label{eqn_nlayer_hn_func}
\end{eqnarray}

\begin{proposition} \label{thm_output_variance}
    If $\f_{k}=f$, $k=1,\cdots,n-1$ is a function such that
    $E[({\bf h}_{2}(\textbf{\textsf{x}}_0)\bm{\epsilon})^2]$
    $>$ $E[({\bf h}_{3}(\textbf{\textsf{x}}_0)\bm{\epsilon})^2]$ for all $\textbf{\textsf{x}}_0\in\Real^m$
    and $\bm{\epsilon}\in\Real^m$, then
    $E[({\bf h}_{k}(\textbf{\textsf{x}}_0)\bm{\epsilon})^2]$ $>$
    $E[({\bf h}_{k+1}(\textbf{\textsf{x}}_0)\bm{\epsilon})^2]$,
    \ $\forall\ k=3,...,n$ in \eqref{eqn_h_gen}.
\end{proposition}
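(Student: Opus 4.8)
\noindent The plan is to argue by induction on $k$, exploiting the fact that the recursion \eqref{eqn_n2layer_h2_func}--\eqref{eqn_nlayer_hn_func} is \emph{autonomous}: for every $k\geq 1$ the matrix ${\bf H}_{k+1}$ is obtained from ${\bf H}_k$ by one and the same map $T({\bf M})=f({\bf M}{\bf M}^{\dag})$, and ${\bf M}{\bf M}^{\dag}$ is always the orthogonal projector onto the column space of ${\bf M}$, so the iteration in fact lives on the set $\Pi_m$ of $m\times m$ orthogonal projectors via $\Gamma({\bf P})=f({\bf P})f({\bf P})^{\dag}$, with ${\bf H}_k{\bf H}_k^{\dag}=\Gamma^{\,k-1}({\bf X}{\bf X}^{\dag})$ and ${\bf H}_{k+1}=f(\Gamma^{\,k-1}({\bf X}{\bf X}^{\dag}))$. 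Taking $\bm{\epsilon}$ zero-mean with covariance $\sigma^2_{\bm{\epsilon}}{\bf I}$, equations \eqref{eqn_h1} and \eqref{eqn_h_gen} give $E[({\bf h}_k(\textbf{\textsf{x}}_0)\bm{\epsilon})^2]=\sigma^2_{\bm{\epsilon}}\,\|\textbf{\textsf{x}}_0^T{\bf H}_k^{\dag}\|^2$, so the asserted chain of strict inequalities is exactly the statement that the sequence $\|\textbf{\textsf{x}}_0^T{\bf H}_k^{\dag}\|$ is strictly decreasing for $k\geq 2$, i.e. that the functional ${\bf P}\mapsto\|\textbf{\textsf{x}}_0^T f({\bf P})^{\dag}\|^2$ strictly decreases along the $\Gamma$-orbit seeded at ${\bf X}{\bf X}^{\dag}$. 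The hypothesis supplies precisely the first link of this chain: the comparison of ${\bf H}_2^{\dag}=f({\bf X}{\bf X}^{\dag})^{\dag}$ with ${\bf H}_3^{\dag}=f(\Gamma({\bf X}{\bf X}^{\dag}))^{\dag}$.

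The steps would then be: (i) record the reduction above, rephrasing the target as the norm inequality $\|\textbf{\textsf{x}}_0^T{\bf H}_k^{\dag}\|>\|\textbf{\textsf{x}}_0^T{\bf H}_{k+1}^{\dag}\|$ for all $\textbf{\textsf{x}}_0\in\Real^m$; (ii) take $k=2$ (the hypothesis) as the base case of an induction over $k\geq 2$; (iii) carry out the inductive step $k\to k+1$: since $\Gamma$ maps $\Pi_m$ into itself, the matrix fed to $f$ at layer $k+1$, namely ${\bf H}_k{\bf H}_k^{\dag}=\Gamma({\bf H}_{k-1}{\bf H}_{k-1}^{\dag})$, is again an orthogonal projector obtained from the previous one by the same $\Gamma$, so the pair $({\bf H}_k,{\bf H}_{k+1})$ stands in exactly the relation $(f({\bf P}),f(\Gamma({\bf P})))$ that the layer-$2$ pair did, and the one-layer strict decrease transfers; (iv) conclude that the inequality holds for $k=3,\dots,n$, as the proposition states.

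The hard part is step (iii): justifying that the \emph{strict} one-step decrease is \emph{inherited} rather than merely failing to be reversed. The hypothesis is imposed only at the seed projector ${\bf X}{\bf X}^{\dag}$, and for a nonlinear $f$ there is no a priori reason for ${\bf P}\mapsto\|\textbf{\textsf{x}}_0^T f({\bf P})^{\dag}\|^2$ to keep decreasing once $\Gamma$ is applied again — $\Gamma$ on $\Pi_m$ is neither evidently a contraction nor order-preserving. The cleanest way I see to close this is to extract an additional monotonicity that the orbit is forced to obey, e.g. $\mathrm{rank}\,{\bf H}_{k+1}\leq\mathrm{rank}\,{\bf H}_k$ or nestedness $\mathrm{col}({\bf H}_{k+1})\subseteq\mathrm{col}({\bf H}_k)$, either of which drives the $\Gamma$-orbit of projectors monotonically to a fixed point and thereby makes $\|\textbf{\textsf{x}}_0^T{\bf H}_k^{\dag}\|$ monotone in $k$, so the sign of the first increment dictates the sign of all of them; absent such a structural fact one would restrict to the regime $m\gg d$ used elsewhere in this section, where the rank bounds on the ${\bf H}_k$ already yield the decreasing sequence directly. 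I would therefore expect the write-up either to add a mild condition on $f$ guaranteeing this propagation or to make the projector-orbit convergence explicit, after which the induction itself is routine.
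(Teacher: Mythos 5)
Your reduction and induction scheme coincide with the skeleton of the paper's own argument: the paper likewise views the ${\bf h}_k$-versus-${\bf h}_{k+1}$ comparison as the layer-2-versus-3 comparison with ${\bf X}$ replaced by the deeper composition (your projector iteration $\Gamma$ applied to the seed ${\bf X}{\bf X}^{\dag}$), and it propagates the one-step decrease layer by layer. The difference is what happens at the step you flag as "the hard part.'' The paper closes it by reading the hypothesis as a property of the activation function itself rather than of the particular seed: since the inequality between $E[({\bf h}_{2}(\textbf{\textsf{x}}_0)\bm{\epsilon})^2]$ and $E[({\bf h}_{3}(\textbf{\textsf{x}}_0)\bm{\epsilon})^2]$ is assumed for all $\textbf{\textsf{x}}_0$ and $\bm{\epsilon}$ as a consequence of what $f$ is, the paper asserts that the same one-step comparison applies verbatim after each replacement of ${\bf X}$ by $f({\bf X}{\bf X}^{\dag})$, then by $f({\bf H}_2{\bf H}_2^{\dag})$, and so on ("the inequality maintains for each replacement of the term ${\bf X}$ within''). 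Under that reading the inductive step is immediate, and none of the auxiliary machinery you canvass --- rank monotonicity, nestedness of column spaces, or an $m\gg d$ regime --- is invoked or needed anywhere in the paper's proof.

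As submitted, therefore, your proposal does not prove the proposition: you explicitly leave the $k\to k+1$ step open and only speculate about structural facts that might close it, without establishing any of them. That said, the hole you locate is exactly the point at which the paper's proof is an assertion rather than a derivation: read literally, the displayed hypothesis constrains $f$ only at the seed projector ${\bf X}{\bf X}^{\dag}$, and nothing in the recursion \eqref{eqn_n2layer_h2_func}--\eqref{eqn_nlayer_hn_func} by itself forces the strict decrease to be inherited at later projectors unless the hypothesis is understood uniformly over the substituted matrices --- which is precisely the "mild condition on $f$ guaranteeing propagation'' you anticipated the write-up would need. To match the paper you would simply declare that the hypothesis, being a condition on the function $f$, applies with ${\bf X}$ replaced by ${\bf H}_{k-1}$ at every level, and conclude by your induction; absent that declaration (or a proved surrogate such as the monotonicity you sketch), the argument remains incomplete.
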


\begin{proof} From
    $E[(\textbf{\textsf{x}}_0^T[\f_1({\bf X}{\bf X}^{\dag})]^{\dag}\bm{\epsilon})^2]$
    $>$ $E[(\textbf{\textsf{x}}_0^T[\f_2(\f_1({\bf X}{\bf X}^{\dag})
    [\f_1({\bf X}{\bf X}^{\dag})]^{\dag})]^{\dag}\bm{\epsilon})^2]$,
    we observe that the term
    ${\bf X}$ has been replaced by $\f_1({\bf X}{\bf X}^{\dag})$. Such replacement is
    recursively performed for each additional layer of the activation function in deeper
    networks. Since the activation function is such that $E[({\bf
        h}_{2}(\textbf{\textsf{x}}_0)\bm{\epsilon})^2]$
    $>$ $E[({\bf h}_{3}(\textbf{\textsf{x}}_0)\bm{\epsilon})^2]$ for all $\textbf{\textsf{x}}_0\in\Real^m$,
    the inequality maintains for each replacement of the term ${\bf X}$ within. In other
    words, an addition of an inner layer shall not change the inequality. Hence the proof.
\end{proof}

\begin{conject} \label{conj_output_variance}
    If $\f_{k}=f$, $k=1,\cdots,n-1$ is a representative function,
    then $E[({\bf h}_{k}(\textbf{\textsf{x}}_0)\bm{\epsilon})^2]$ $>$ $E[({\bf h}_{k+1}(\textbf{\textsf{x}}_0)\bm{\epsilon})^2]$
    for all $k=2,...,n$ in \eqref{eqn_h_gen}.
\end{conject}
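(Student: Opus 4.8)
The plan is to peel off the one inequality that Proposition~\ref{thm_output_variance} does not already deliver and to obtain it from representativeness. Proposition~\ref{thm_output_variance} \emph{assumes} the step
\[
E[({\bf h}_{2}(\textbf{\textsf{x}}_0)\bm{\epsilon})^2]\ >\ E[({\bf h}_{3}(\textbf{\textsf{x}}_0)\bm{\epsilon})^2]\qquad\text{for all }\textbf{\textsf{x}}_0,\bm{\epsilon}\in\Real^m ,
\]
and from it derives $E[({\bf h}_{k}(\textbf{\textsf{x}}_0)\bm{\epsilon})^2]>E[({\bf h}_{k+1}(\textbf{\textsf{x}}_0)\bm{\epsilon})^2]$ for $k=3,\dots,n$; so the conjecture follows once this same step is established from Definition~\ref{defn_representative_f} rather than assumed. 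Adjoining that $k=2$ base case to the range $k=3,\dots,n$ returned by Proposition~\ref{thm_output_variance} fills out the full range $k=2,\dots,n$, so the entire argument concentrates on the base case.

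For the base case I would first turn the expectations into a spectral comparison. With $\bm{\epsilon}$ zero-mean and $E[\bm{\epsilon}\bm{\epsilon}^T]=\sigma^2_{\bm{\epsilon}}{\bf I}$, exactly as in the derivation of \eqref{eqn_h1}, and using the Penrose identity ${\bf H}_k^{\dag}({\bf H}_k^{\dag})^T=({\bf H}_k^T{\bf H}_k)^{\dag}$ of Definition~\ref{defn_pseudoinverse}, one has $E[({\bf h}_k(\textbf{\textsf{x}}_0)\bm{\epsilon})^2]=\sigma^2_{\bm{\epsilon}}\,\textbf{\textsf{x}}_0^T({\bf H}_k^T{\bf H}_k)^{\dag}\textbf{\textsf{x}}_0$, so the base inequality is the Loewner statement $({\bf H}_2^T{\bf H}_2)^{\dag}\succ({\bf H}_3^T{\bf H}_3)^{\dag}$ on $\Real^m$. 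Recalling ${\bf H}_2=f({\bf X}{\bf X}^{\dag})$ and ${\bf H}_3=f({\bf H}_2{\bf H}_2^{\dag})$, I would then rewrite this, via compact singular value decompositions, as the Gram inequality ${\bf H}_3^T{\bf H}_3\succeq{\bf H}_2^T{\bf H}_2$ on the relevant subspace and invert (order reversal of positive operators). The heuristic for expecting it to hold is that passing from the projector ${\bf X}{\bf X}^{\dag}$, whose spectrum is confined to $\{0,1\}$, to its elementwise image under a representative $f$ enlarges the column space and lifts the small singular values, so each further application of ${\bf H}\mapsto f({\bf H}{\bf H}^{\dag})$ improves the conditioning of ${\bf H}_k$ and shrinks $\|{\bf H}_k^{\dag}\|$.

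The hard part is exactly this Gram inequality, and for an arbitrary representative $f$ it is unlikely to hold without further structure: the map ${\bf H}\mapsto f({\bf H}{\bf H}^{\dag})$ is not Loewner-monotone, the projectors ${\bf X}{\bf X}^{\dag}$ and ${\bf H}_2{\bf H}_2^{\dag}$ need not be Loewner-comparable, and --- if representativeness is read as full row rank --- one gets ${\bf H}_2{\bf H}_2^{\dag}={\bf I}$ and hence ${\bf H}_k=f({\bf I})$ for every $k\geq 3$, so the whole content really does sit in the single comparison of ${\bf H}_2$ with ${\bf H}_3$ (and, under the weaker per-target reading of Definition~\ref{defn_representative_f}, in how the ranks of ${\bf H}_k$ climb layer by layer before they saturate). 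My route past this is to impose on $f$ the monotonicity and convexity that the \texttt{softplus} function of the experiments enjoys, establish ${\bf H}_3^T{\bf H}_3\succeq{\bf H}_2^T{\bf H}_2$ in that setting by an operator-Jensen / Schur-product estimate for entrywise maps of $\{0,1\}$-valued Gram matrices, and only then attempt to weaken the hypothesis --- which is the step I expect to genuinely resist, explaining why the statement stands as a conjecture rather than a proposition.
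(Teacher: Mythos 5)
The first thing to note is that the paper offers no proof of this statement: it is deliberately stated as a conjecture (Conjecture~\ref{conj_output_variance}) and is supported only by the Monte Carlo evidence discussed in Remark~3, with Proposition~\ref{thm_output_variance} covering $k=3,\dots,n$ only under the \emph{assumed} base inequality $E[({\bf h}_{2}(\textbf{\textsf{x}}_0)\bm{\epsilon})^2] > E[({\bf h}_{3}(\textbf{\textsf{x}}_0)\bm{\epsilon})^2]$. So there is no proof in the paper for your attempt to be measured against. Your reduction is nevertheless the right one: if that base case could be derived from representativeness alone, the conjecture would follow by adjoining it to the proposition's conclusion, and your rewriting of the base case as the spectral statement $\sigma^2_{\bm{\epsilon}}\,\textbf{\textsf{x}}_0^T({\bf H}_k^T{\bf H}_k)^{\dag}\textbf{\textsf{x}}_0$, via ${\bf H}^{\dag}({\bf H}^{\dag})^T=({\bf H}^T{\bf H})^{\dag}$ (a standard consequence of, though not literally one of, the four Penrose equations), is correct under the zero-mean, isotropic reading of $\bm{\epsilon}$ that the paper's own derivation of \eqref{eqn_h1} uses.

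Your diagnosis of the obstruction is also sound, and in one respect sharper than anything in the paper: pseudoinversion reverses the Loewner order only when ranges coincide, the map ${\bf H}\mapsto f({\bf H}{\bf H}^{\dag})$ is not operator monotone, and under the full-rank reading of representativeness one gets ${\bf H}_2{\bf H}_2^{\dag}={\bf I}$, hence ${\bf H}_k=f({\bf I})$ for all $k\geq 3$, so that ${\bf h}_3={\bf h}_4=\cdots$ and the strict inequalities claimed for $k\geq 3$ collapse to equalities --- i.e., with ``$>$'' the statement cannot hold in that regime, a point the paper does not acknowledge. Your plan to establish ${\bf H}_3^T{\bf H}_3\succeq{\bf H}_2^T{\bf H}_2$ for monotone, convex (softplus-like) activations and then invert is a plausible route to a weakened, provable version, but as written it remains a programme rather than a proof: the operator-Jensen/Schur-product estimate is not carried out, and the range condition needed for the order reversal is not verified. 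That unresolved core is precisely why the paper leaves the statement as a conjecture, so your proposal should be read as a correct reduction plus an accurate account of why it resists proof, not as a completed argument --- and, on the evidence of the $f({\bf I})$ degeneration you identified, any eventual theorem will likely need the inequality weakened to ``$\geq$'' or the hypotheses on $f$ strengthened beyond representativeness.
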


\begin{figure}[hhh]
    \begin{center}
        \epsfxsize=12cm
        \epsfysize=5.68cm
        \epsffile[86    17   872   427]{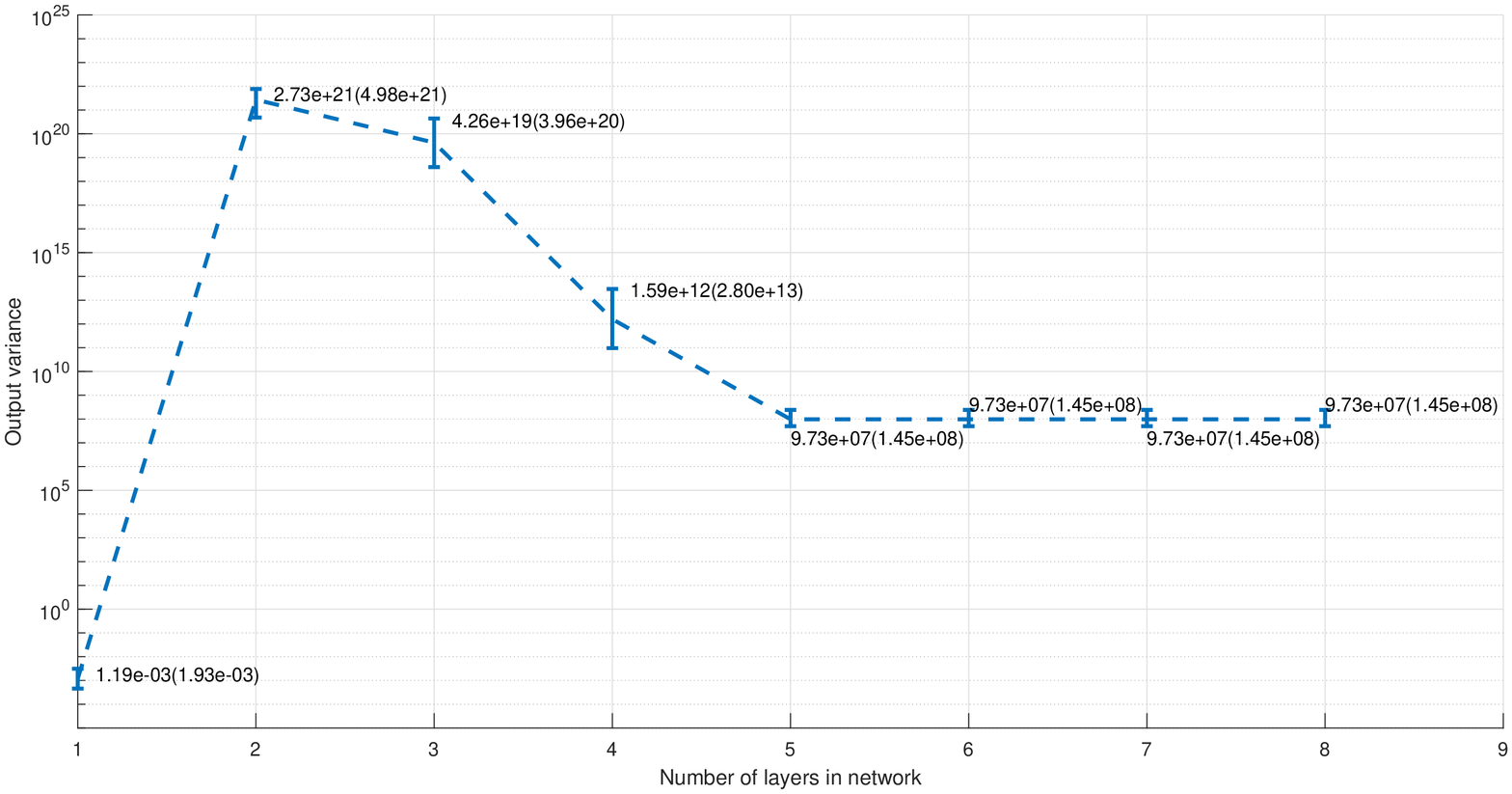}
        \caption{The average output variance of 1000 Monte Carlo simulations. The mean(std)
            values of $({\bf h}_{k}(\textbf{\textsf{x}}_0)\bm{\epsilon})^2$, $k=1,...,8$ are
            indicated for each error bar. Due to
            taking of logarithm on negative values, the lower portion of the error
            bar is a mere replica of the upper portion for visualization purpose.}
        \label{fig_output_variance}
    \end{center}
\end{figure}

\noindent{\bf Remark 3: } The analysis of output variance shows that the variance of
estimation does not diverge for deep networks when the hidden layers are estimated using
the data image (i.e., the pseudoinverse of the data matrix). This result is congruence
with the observation of good generalization obtained in deep networks in the literature.
Numerically, the result in Proposition~\ref{thm_output_variance} can be validated by a
Monte Carlo simulation as follows. To keep the outputs of deeper networks within the
visible changing range, an activation function of the exponential form (i.e.,
$e^{0.0001{\bf X}{\bf W}}$) has been adopted for each layer in this study. The weights
are computed deterministically according to \eqref{eqn_C3R1} through \eqref{eqn_C3R4}.
The elements of the input matrix ${\bf X}\in\RealR^{100\times 10}$ (100 samples of 10
dimensions) are generated randomly within $[-5,+5]$. The vector
$\textbf{\textsf{x}}_0\in\RealR^{100\times 1}$ in \eqref{eqn_h_gen} is also generated
randomly within the same range. The learning target vector ${\bf y}\in\{-1,+1\}$ contains
100 samples with half of the total samples for each category. The noise vector
$\bm{\epsilon}\in\RealR^{100\times 1}$ in \eqref{eqn_h_gen} is taken to be of one-tenth
magnitude of the input elements' range. A 1000 Monte Carlo simulations have been
conducted for the random setting. Fig.~\ref{fig_output_variance} shows the average plot
of $({\bf h}_k(\textbf{\textsf{x}}_0)\bm{\epsilon})^2$ for the 1000 random trials over
$k=1,2,\cdots,8$ which correspond to the number of layers of each network. By excluding
the linear network without the exponential activation at $k=1$ (i.e., $({\bf
    h}_1(\bm{x}_0)\bm{\epsilon})^2$ where $\bm{x}_0\in\RealR^{10\times 1}$ which has a
different dimension from that of $\textbf{\textsf{x}}_0\in\RealR^{100\times 1}$), it can
be seen that the value of the output variance $({\bf
    h}_k(\textbf{\textsf{x}}_0)\bm{\epsilon})^2$ does not diverge for networks with more than
two layers. This result may contribute to explaining the non-overfitting behavior of deep
networks.\Box

\section{Case Studies} \label{sec_synthetic}

In this section, two sets of synthetic data with known function generators are adopted to
observe the learning and representation properties of the proposed network. The first
data set studies the fitting ability in terms of training data representation with
respect to the scaling of the initial weights. The second data set studies the decision
boundary of a learned network as well as the number of nodes needed in the hidden layer
for representation.

Without loss of generality, a modified \texttt{softplus} function $f(x)=\log(0.8+e^{x})$
and its inverse given by $\g(x)=\log(e^{x}-0.8)$ are adopted in the learning algorithm in
Corollary~\ref{cor_soln1} (abbreviated as \texttt{ANNnet}) for all the numerical studies
and experiments. The computing platform is an Intel Core i7-6500U CPU at 2.59GHz with 8GB
of RAM.

\subsection{A regression problem}

In this example, we examine the effect of initial weight magnitudes (based on a scaling
factor $c$ which is multiplied to the initial weight matrices elementwise) and the effect
of network layers using a single dimensional regression function. A total of 8 training
target samples $y_i$, $i=1,2,\cdots,8$ are generated based on the function
$y=\sin(2x)/(2x)$ using $x_i\in\{1,2,\cdots,8\}$. Apart from this set of training
samples, another 10 sets (each set containing 8 samples) of training target samples are
generated by adding 20\% of noisy variations with respect to the training target range.
Our purpose is to observe the mapping capability of the network to learn these 11 curves
with different curvatures using different $c$ settings and the number of network layers.
In order to observe the fine fitting points of the network, for each curve, another set
of output samples is generated using a higher resolution input
$x_j\in\{0.90,0.91,\cdots,8.09,8.10\}$. This test set or observation set contains 721
samples.

Fig.~\ref{fig_Regression_ex}(a) and (b) respectively show the fitting results of a
2-layer \texttt{ANNnet} (of 8-1 structure) at $c=1$ and $c=0.1$. The $c$ value is a
scaling factor for each of the ${\bf R}_2$ to ${\bf R}_n$ matrices in
\eqref{eqn_R1}-\eqref{eqn_R4}. The result at $c=1$ (Fig.~\ref{fig_Regression_ex}(a) and
Table~\ref{table_effects}) shows a perfect fit for all the training data points but with
a high testing (prediction) Sum of Squared Errors (SSE) which indicate the large
difference between the output of the test samples and the underneath target (blue) curve.
The fitting result shows a `smoother' (with lower fluctuations) curve at $c=0.1$ than
that at $c=1$. In terms of the deviation of the predicted output from the target
function, the SSE for the case of $c=0.1$ shows a lower value than that of $c=1$ (see the
test SSE values in Table~\ref{table_effects}). These results demonstrate an over-fitting
case for Fig.~\ref{fig_Regression_ex}(a) and an under-fitting case for
Fig.~\ref{fig_Regression_ex}(b) in terms of the training data where the SSE of training
data is compromised.

The results for a 5-layer \texttt{ANNnet} (of 1-1-1-8-1 structure) at $c=1$ and at
$c=0.1$ show a similar trend of having a `smoother' fit in the lower $c$ value. However,
this smoother fit with lower fluctuations does not compromise the training SSE values
while maintaining a low SSE for the test data (see Table~\ref{table_effects}). This is
different from that of the 2-layer case where the smoother fit compromises in fitting
every data points. This fitting behavior can be observed from the almost zero training
SSE values for both $c$ value settings in the 5-layer network. These results show a
better generalization capability for the 5-layer network than the 2-layer network.

\begin{figure}[hhh]
    \begin{center}
        \begin{tabular}{cc}
            \epsfxsize=6cm
            \epsffile[66     9   871   441]{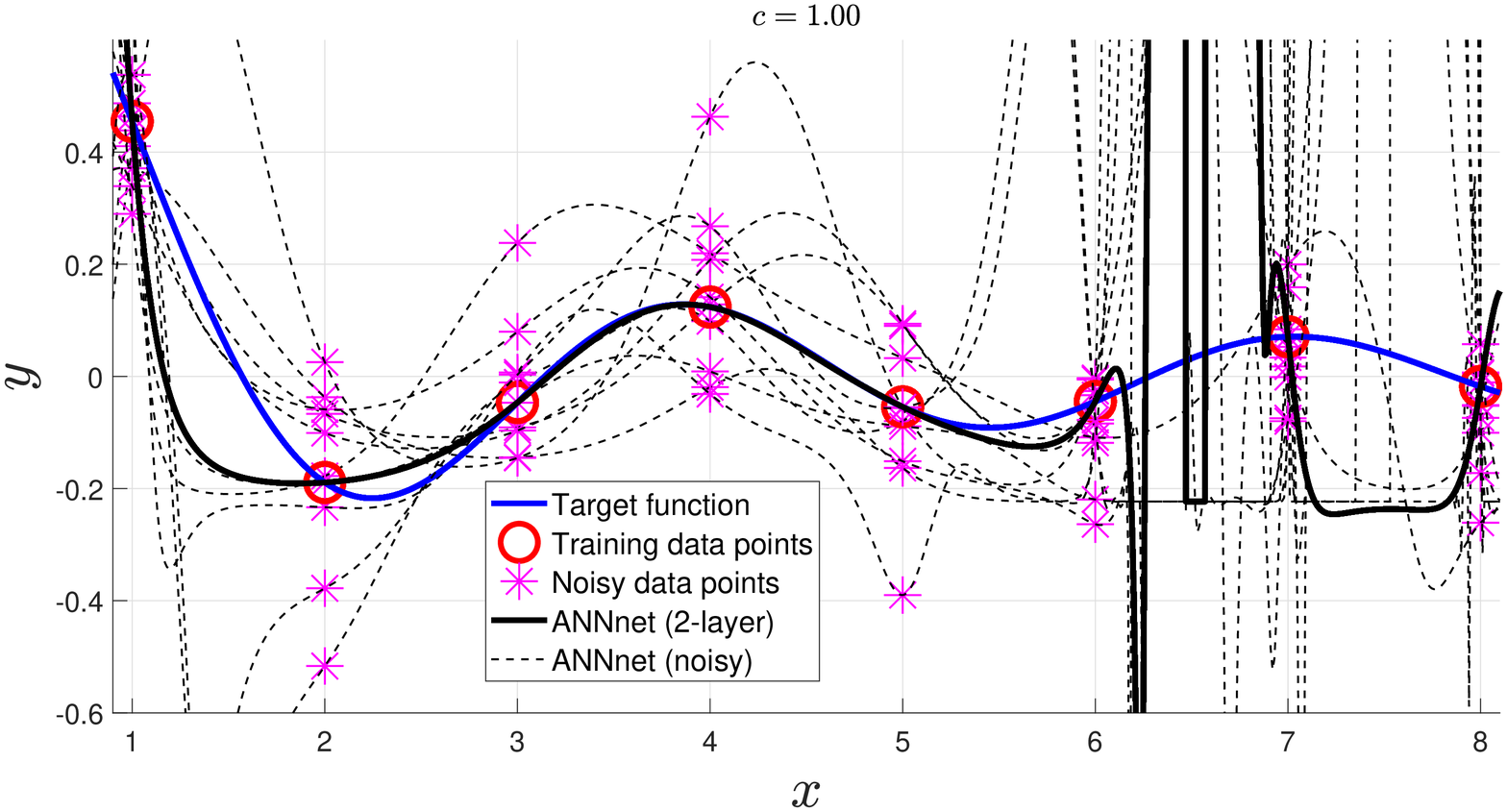}
            & \epsfxsize=6cm
            \hspace{0cm}
            \epsffile[66     9   871   441]{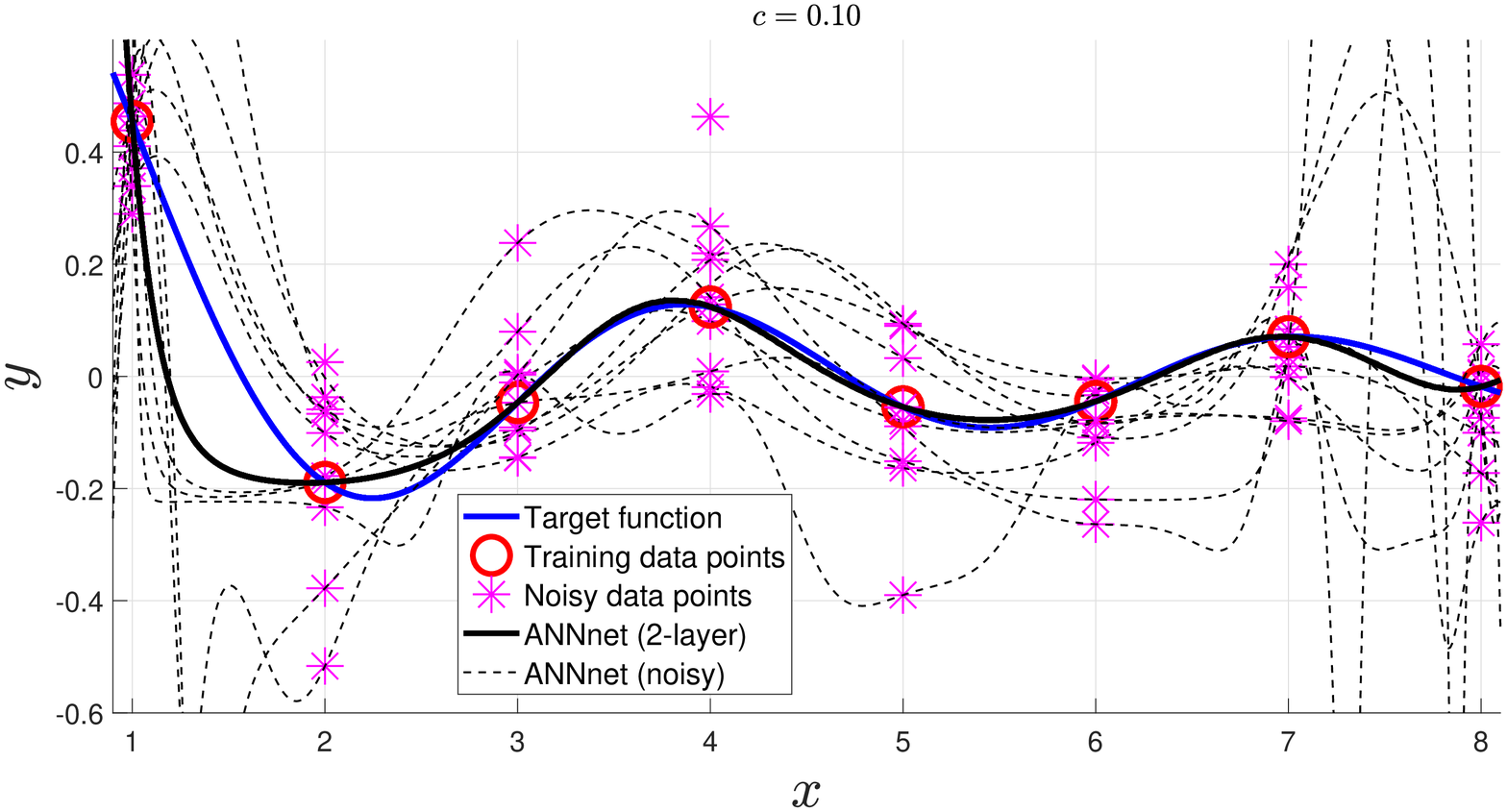}
            \\ \hspace{1cm} (a) & \hspace{1cm} (b) \\*[5mm]
            \epsfxsize=6cm
            \epsffile[66     9   871   441]{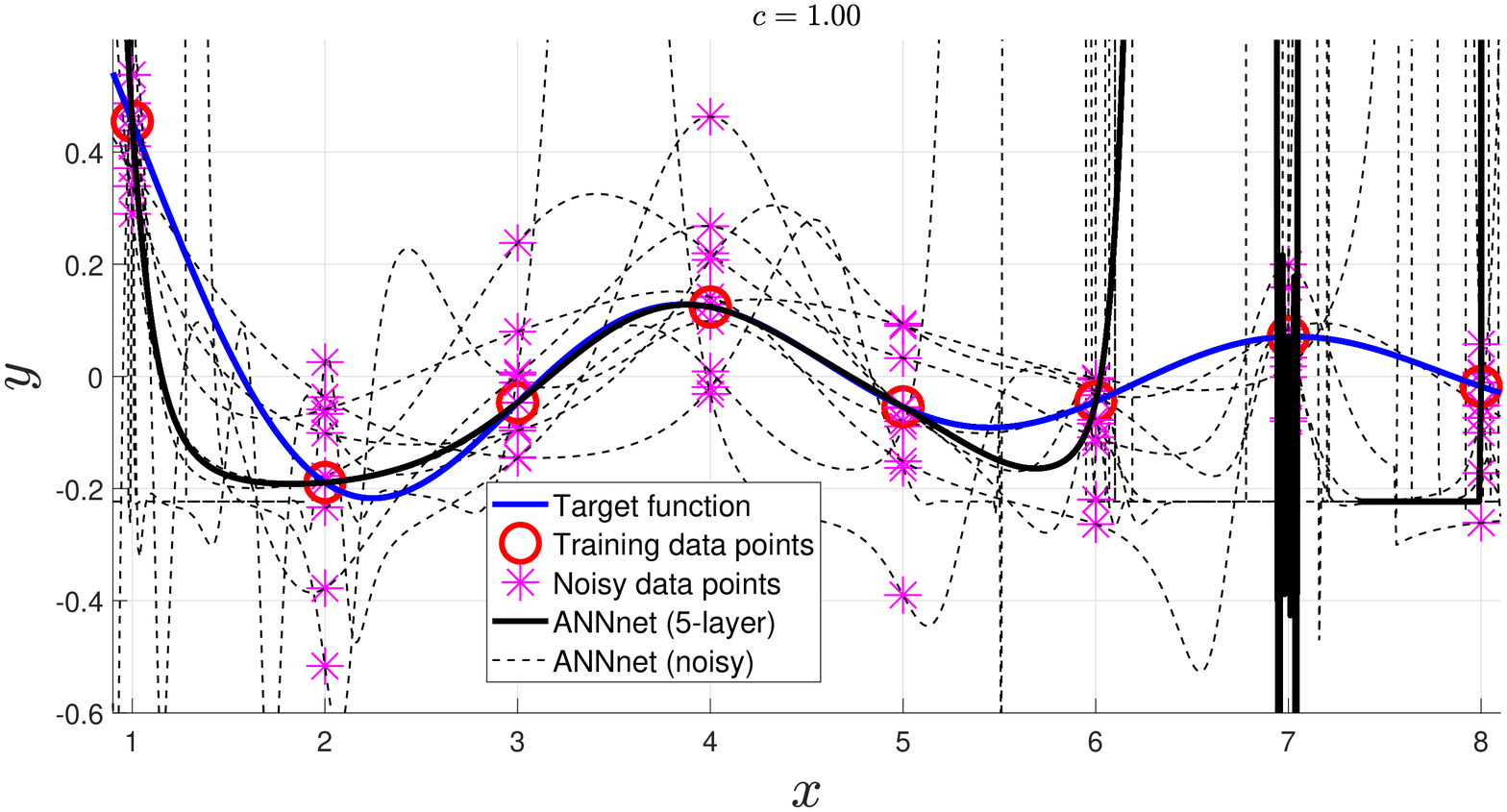}
            & \epsfxsize=6cm
            \hspace{0cm}
            \epsffile[66     9   871   441]{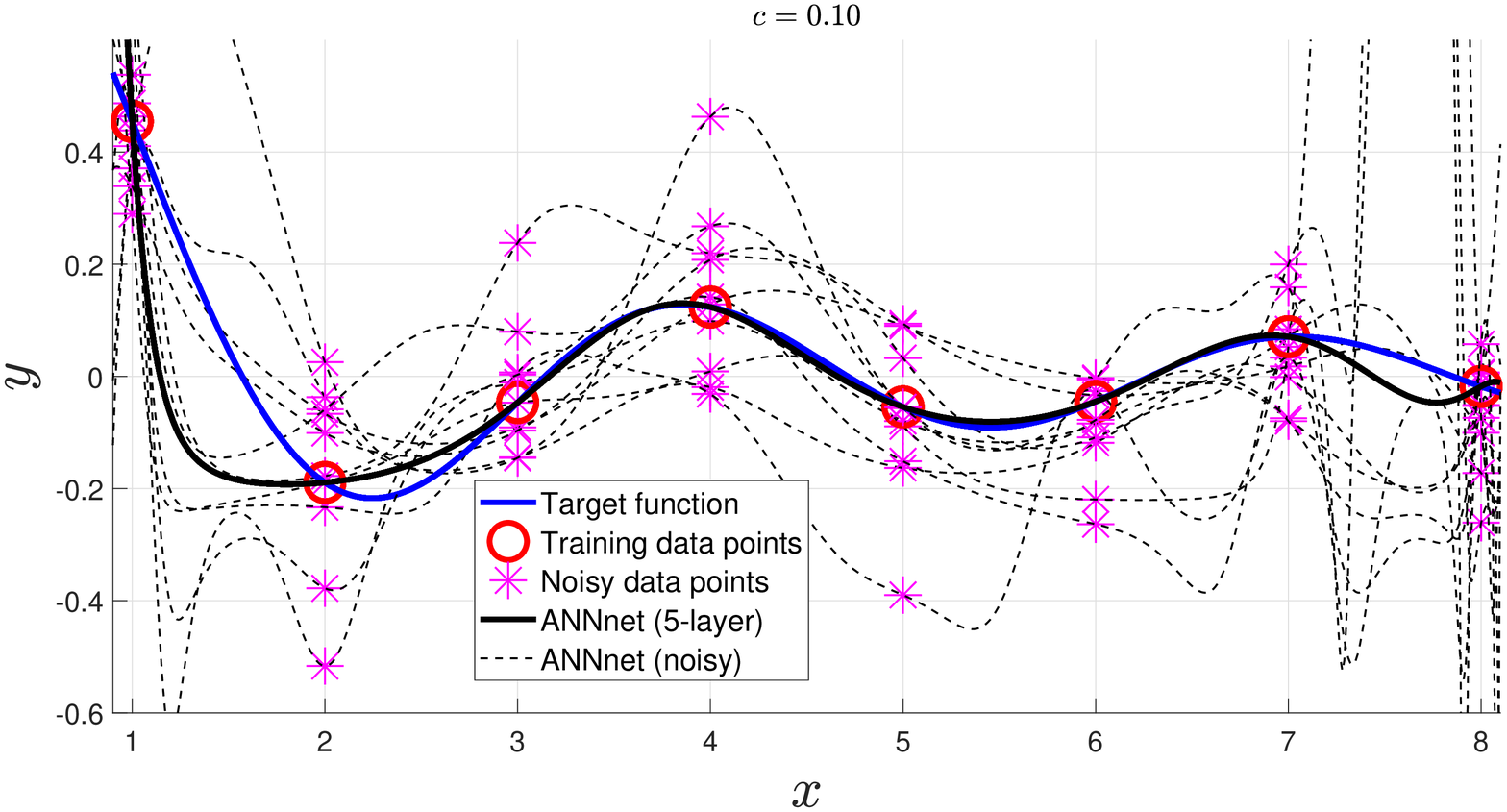}
            \\ \hspace{1cm} (c) & \hspace{1cm} (d) \\*[5mm]
        \end{tabular}
        \caption{Learned outputs of the \texttt{ANNnet} at different scaling $c$ settings for 2-layer
            and 5-layer networks.}
        \label{fig_Regression_ex}
    \end{center}
\end{figure}

\begin{table}[hhh]{\normalsize
        \begin{center}
            \caption{Sum of Squared Errors (training/test)} \label{table_effects}
            \begin{tabular}{|l|c|c|} \hline
                Methods                 &  $c=1$,   $s=0$                                 & $c=0.1$, $s=0$ \\ \hline
                2-layer \texttt{ANNnet} &  1.8421$\times 10^{-19}$/1.3512$\times 10^{5}$  & 1.1500$\times 10^{-2}$/1.1143$\times 10^{2}$ \\
                5-layer \texttt{ANNnet} &  9.0015$\times 10^{-13}$/1.2507$\times 10^{6}$  & 7.6328$\times 10^{-14}$/1.2426$\times 10^{2}$  \\
                \hline
            \end{tabular}\\*[2mm]
    \end{center} }
\end{table}

\subsection{The Spiral problem}

The spiral problem is well known in studying the mapping capability of neural networks.
It has been shown to be intrinsically connected to the inside-outside relations
\cite{ChenKe3}. In this example, a 6-arm spiral pattern has been generated with each arm
being perturbed by random noise (i.e., \texttt{rotation\_angle +
0.3}$\times$\texttt{rand}). Each arm contains a total of 500 samples with half being used
for training and the remaining half for test. In other words, the training set and the
test set each contains 1500 samples for the 6 arms ($1500=250\times 6$). A 4-layer
network of 150-250-150-6 structure has been used to learn the data using the training
set. The classification accuracy performance is measured by counting the number of test
samples which fall out of the decision boundary. By using the \texttt{softplus8}
activation function with $c=0.5$ for scaling the initial weights, an error free testing
result is achieved. Fig.~\ref{fig_Spiral_4layer} shows the learned decision boundary with
the test samples.

\begin{figure}[hhh]
    \begin{center}
        \epsfxsize=12cm
        \epsffile[62    29   524   379]{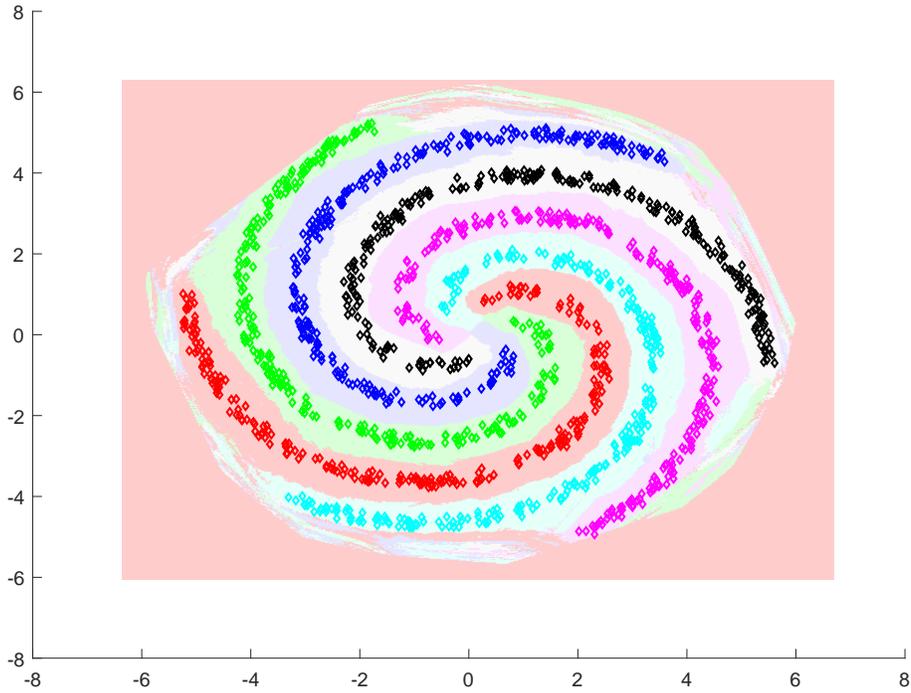}
        \caption{Testing data points and the decision boundary of a trained \texttt{ANNnet}.}
        \label{fig_Spiral_4layer}
    \end{center}
\end{figure}

Fig.~\ref{fig_Spiral_4layer_SSE} shows the Sum of Squared Errors (SSE) of training
results for 10 trials using different random seeds for the weights ${\bf R}_2,\cdots,{\bf
    R}_n$ initialization. The adopted network structure is similar to that of the above
except for the number of hidden nodes $h_{3}$ which is varied from 1250 to 1600 at an
interval of 50. In a nutshell, a network structure of 150-250-$h_{3}$-6 has been adopted
with $h_{3}$ varies within $\{1250,1300,\cdots,1550,1600\}$ in order to observe its
effect on network representation. The purpose of choosing $h_{3}$ in this range is to
cover the training sample size (1500) for observing the accuracy of fit. The results in
Fig.~\ref{fig_Spiral_4layer_SSE} show zero SSE for $h_{3}\geq 1500$ which reflect the
representation capability of the network for the 1500 training samples.

\begin{figure}[hhh]
    \begin{center}
        \epsfxsize=8cm
        \epsffile[22     4   391   295]{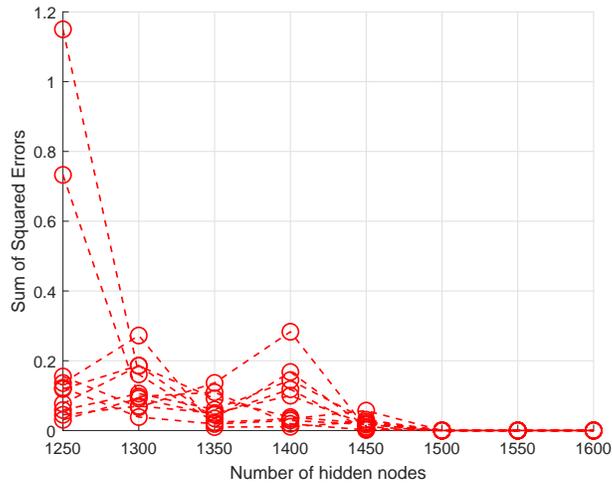}
        \caption{Variation of the training Sum of Squared Errors (SSE) over the number of hidden nodes at layer $h_3$ for 10 trails based on different random seeds.}
        \label{fig_Spiral_4layer_SSE}
    \end{center}
\end{figure}

Fig.~\ref{fig_6Spiral_h1h2}(a) and (b) show respectively the SSEs for variation of $h_2$
and $h_1$ nodes while fixing the other two layers at 150 nodes. These results show a
variation of the minimum number of hidden nodes needed for training data representation
(in order to arrive at an almost zero training SSE) towards the input layer.

\begin{figure}[hhh]
    \begin{center}
        \begin{tabular}{cc}
            \epsfxsize=6cm
            \epsffile[30     4   391   305]{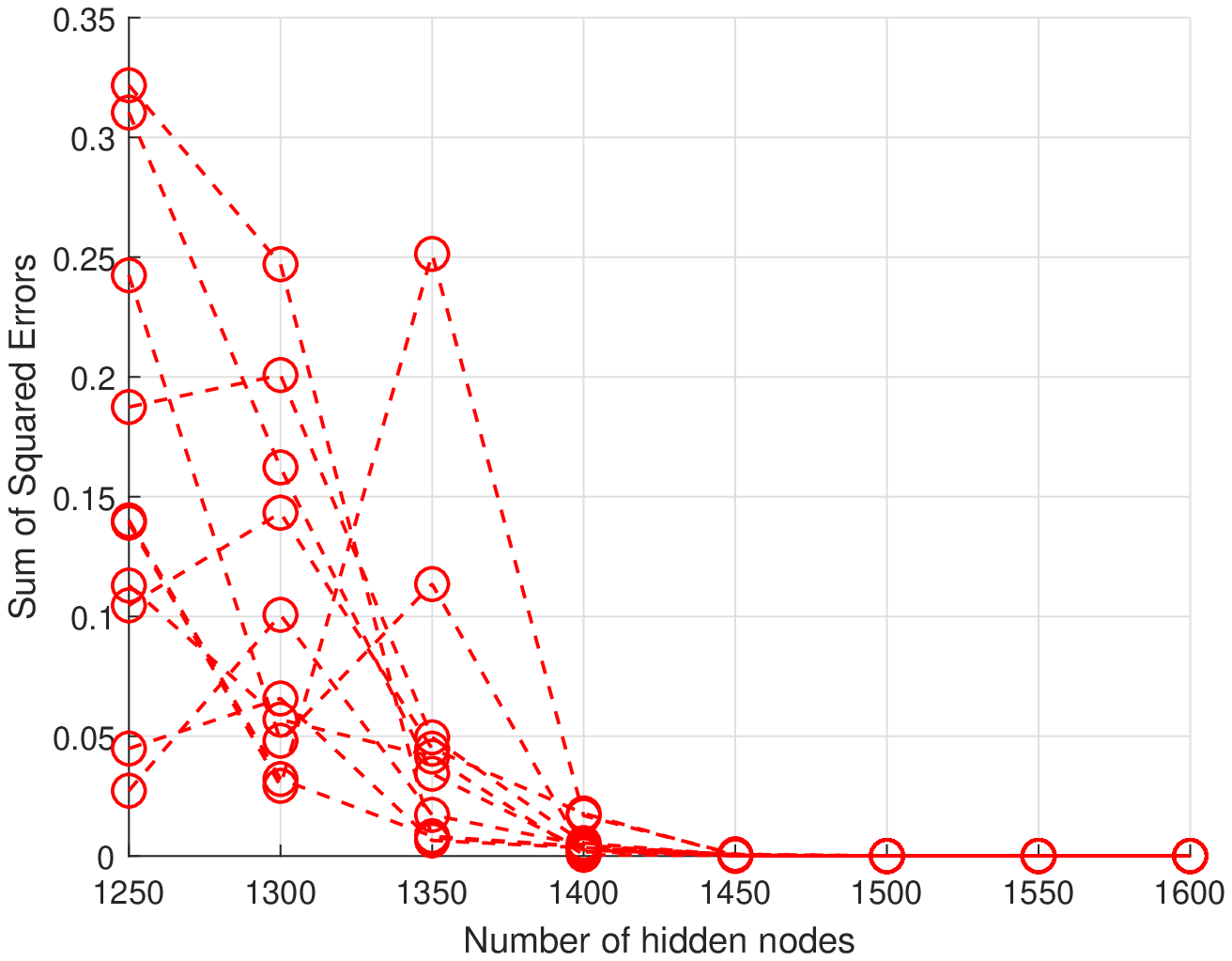}
            & \epsfxsize=6cm
            \hspace{0cm}
            \epsffile[17     4   391   295]{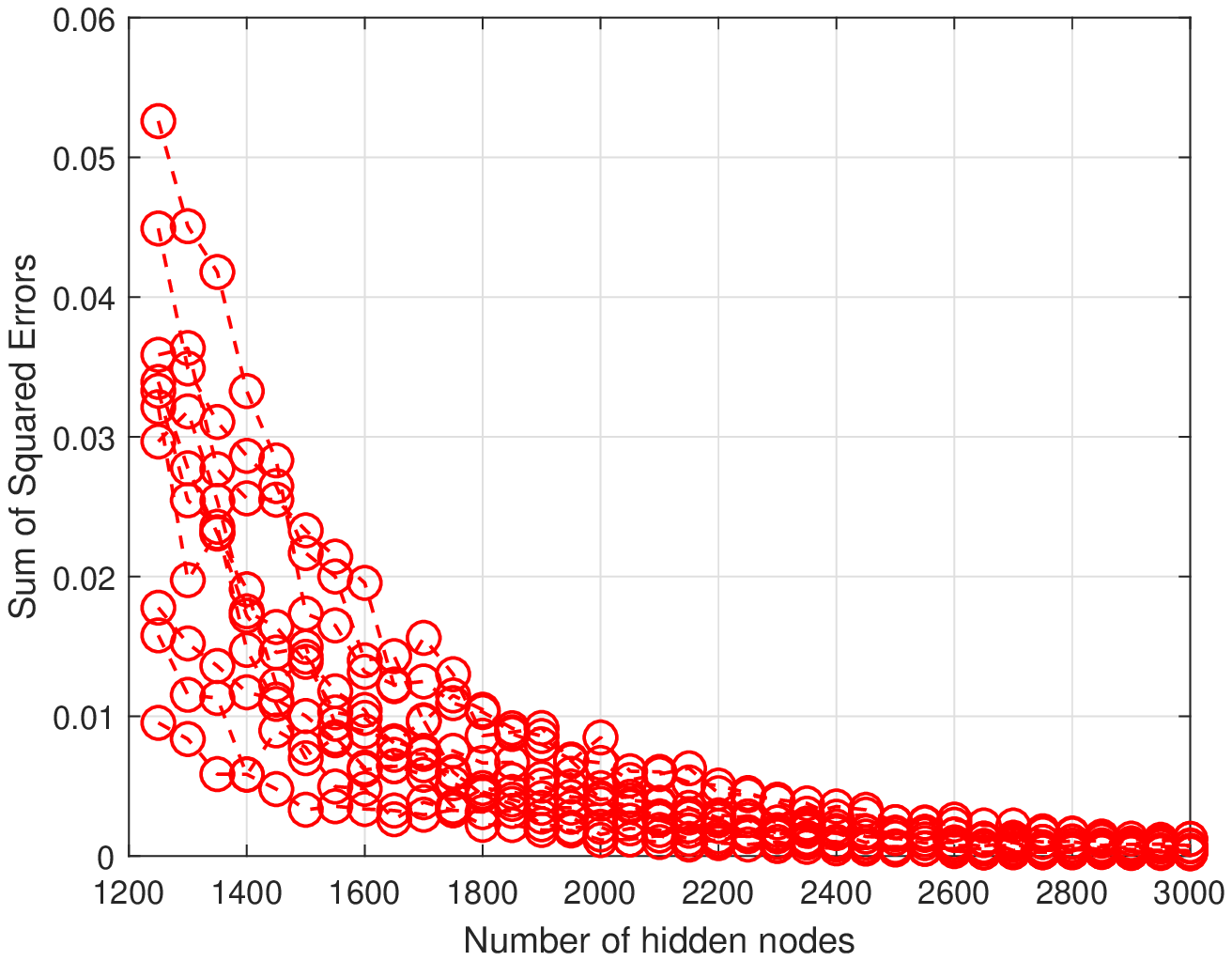}
            \\  (a) Number of $h_2$ hidden nodes &  (b) Number of $h_1$ hidden nodes\\*[5mm]
        \end{tabular}
        \caption{Variation of the training Sum of Squared Errors (SSE) over the number of hidden nodes ((a) at layer $h_2$ and (b) at layer $h_1$) for 10 trails based on different random seeds.}
        \label{fig_6Spiral_h1h2}
    \end{center}
\end{figure}

\section{Experiments} \label{sec_expts}

In this section, the proposed learning algorithm is evaluated using real-world data sets
taken from the UCI Machine Learning Repository \cite{UCI1b}.  Our goals are (i) to
observe whether the gradient-free learning is numerically feasible for real-world data of
small to medium size on a computer with 8 GB of RAM and without GPU? If feasible, how is
the accuracy comparing with that of the conventional network learning? (ii) the
computational CPU performance, (iii) Finally, the effects of network depth and sparseness
are evaluated using 2-, 3- and 5-layer networks on the relatively humble computer
platform.

\subsection{UCI Data Sets}

The UCI data sets are selected according to \cite{Toh39} and are summarized in
Table~\ref{table_summary_attributes} with their pattern attributes. The experimental
goals (i) and (iii) are evaluated by observing the prediction accuracy of the algorithm.
The accuracy is defined as the percentage of samples being classified correctly. The
experimental goal (ii) is observed by recording the training CPU processing time of the
compared algorithms on an Intel Core i7-6500U CPU at 2.59GHz with 8GB of RAM memory. The
training CPU time is measured using the Matlab's function \texttt{cputime} which
corresponds to the total computational times from each of the 2 cores in the Processor.
In other words, the physical time experienced is about 1/2 of this clocked
\texttt{cputime}.

\begin{table}
    \begin{center}\vspace{-6mm}
        \caption{Summary of UCI \cite{UCI1b} data sets and chosen hidden layer sizes ($h$) based
            on cross-validation within the training set} \label{table_summary_attributes}{\tiny
            \begin{tabular}{|cl|cccc|cc|c|c|} \hline
                &                     & (i)       & (ii)      & (iii)     & (iv)    &
                \multicolumn{2}{c}{2-layer} \vline & 3-layer    & 5-layer \\
                &                     &           &           &           &         &
                ( $h$ )           & ( $h$ ) & ( $h$ ) & ( $h$ ) \\
                & Database name       & {\#}cases & {\#}feat  & {\#}class & {\#}miss & FFnet & \texttt{ANNnet} & \texttt{ANNnet} & \texttt{ANNnet} \\ \hline
                1.  & Shuttle-l-control   & 279(15)   & 6         & 2         & no       &  3   &  3 &   5 &   5  \\
                2.  & BUPA-liver-disorder & 345       & 6         & 2         & no       &  2   & 500&   3 &   2  \\
                3.  & Monks-1             & 124(432)  & 6         & 2         & no       &  5   &  1 &   2 &   2  \\
                4.  & Monks-2             & 169(432)  & 6         & 2         & no       &  10  & 200& 500 &  80  \\
                5.  & Monks-3             & 122(432)  & 6         & 2         & no       &  5   &  1 &   3 &   5 \\
                6.  & Pima-diabetes       & 768       & 8         & 2         & no       &  3   &  2 &   3 &  10 \\
                7.  & Tic-tac-toe         & 958       & 9         & 2         & no       &  20  & 30 & 500 &  10 \\
                8.  & Breast-cancer-Wiscn & 683(699)  & 9(10)     & 2         & 16       &  10  &  3 &   3 &   3 \\
                9.  & StatLog-heart       & 270       & 13        & 2         & no       &  2   & 20 &   3 &   3 \\
                10. & Credit-app          & 653(690)  & 15        & 2         & 37       &  1   &  3 &   1 &   5 \\
                11. & Votes               & 435       & 16        & 2         & yes      &  10  &  2 &   3 &   3 \\
                12. & Mushroom            & 5644(8124)& 22        & 2         & attr{\#}11 & 5  & 80 &  50 &  10 \\
                13. & Wdbc                & 569       & 30        & 2         & no       &  2   & 10 &   2 &   5 \\
                14. & Wpbc                & 194(198)  & 33        & 2         & 4        &  1   &  5 & 500 &  80 \\
                15. & Ionosphere          & 351       & 34        & 2         & no       &  10  &  1 &   2 &   2 \\
                16. & Sonar               & 208       & 60        & 2         & no       &  30  &  5 &   3 &   3 \\
                \hline
                17. & Iris                & 150       & 4         & 3         & no       &  20  & 10 &  10 &   5 \\
                18. & Balance-scale       & 625       & 4         & 3         & no       &  20  & 50 &  20 &   5 \\
                19. & Teaching-assistant  & 151       & 5         & 3         & no       &  50  & 500&  80 &  80 \\
                20. & New-thyroid         & 215       & 5         & 3         & no       &  5   & 20 &  10 &   5 \\
                21. & Abalone             & 4177      & 8         & 3(29)     & no       &  50  & 30 &  20 &  10 \\
                22. & Contraceptive-methd & 1473      & 9         & 3         & no       &  20  & 50 &  20 &   3 \\
                23. & Boston-housing      & 506       & 12(13)    & 3(cont)   & no       &  50  & 50 &  10 &   5 \\
                24. & Wine                & 178       & 13        & 3         & no       &  50  & 30 &  10 &   5 \\
                25. & Attitude-smoking$^+$& 2855      & 13        & 3         & no       &  1   &  1 &   1 &   1 \\
                26. & Waveform$^+$        & 3600      & 21        & 3         & no       &  20  &  5 &  20 &   5 \\
                27. & Thyroid$^+$         & 7200      & 21        & 3         & no       &   3  & 80 &  20 &  30 \\
                28. & StatLog-DNA$^+$     & 3186      & 60        & 3         & no       &  10  & 20 &  10 &  20 \\
                \hline
                29. & Car                 & 2782      & 6         & 4         & no       &  200 & 500& 200 &  80 \\
                30. & StatLog-vehicle     & 846       & 18        & 4         & no       &  10  & 50 &  20 &  10 \\
                31. & Soybean-small       & 47        & 35        & 4         & no       &   1  &  3 &   3 &   3 \\
                32. & Nursery             & 12960     & 8         & 4(5)      & no       & 100  & 500&  80 & 100 \\
                33. & StatLog-satimage$^+$& 6435      & 36        & 6         & no       &   5  & 500& 200 &  20 \\
                34. & Glass               & 214       & 9(10)     & 6         & no       &  80  & 10 &  20 &   5 \\
                35. & Zoo                 & 101       & 17(18)    & 7         & no       &  20  & 20 &  50 & 100 \\
                36. & StatLog-image-seg   & 2310      & 19        & 7         & no       & 100  & 500& 200 &  -- \\
                37. & Ecoli               & 336       & 7         & 8         & no       &  20  & 50 &  20 &   5 \\
                38. & LED-display$^+$     & 6000      & 7         & 10        & no       & 100  & 50 &  20 & 100 \\
                39. & Yeast               & 1484      & 8(9)      & 10        & no       & 100  & 100&  30 &  10 \\
                40. & Pendigit            & 10992     & 16        & 10        & no       & 200  & 500& 500 &  50 \\
                41. & Optdigit            & 5620      & 64        & 10        & no       & 200  & 500& 500 &  30 \\
                42. & Letter              & 20000     & 16        & 26        & no       & --   & 500& 500 & 200 \\
                \hline
        \end{tabular} } {\tiny
            \begin{tabular}{llp{12.5cm}}
                (i-iv) &: & (i) Total number of instances, i.e. examples, data points, observations
                (given number of instances). Note: the number of instances used is larger than the given
                number of instances when we expand those ``don't care'' kind of attributes in some data
                sets; (ii) Number of features used, i.e. dimensions, attributes (total number of features
                given); (iii) Number of classes (assuming a discrete class variable);
                (iv)  Missing features; \\
                $+$ &:   & Accuracy measured from the given training and test set instead of 10-fold
                validation (for large data cases with test set containing at least 1,000 samples); \\
                FFnet &: & The \texttt{feedforwardnet} from the Matlab's toolbox using default settings; \\
                $h$  &: & The number of hidden nodes for 2layer, 3layer, and 5layer networks are set
                as $h$-$q$, $2h$-$h$-$q$, and $8h$-$4h$-$2h$-$h$-$q$ respectively; \\
                Note &:  & Data from the Attitudes Towards Smoking Legislation Survey - Metropolitan
                Toronto 1988, which was funded by NHRDP (Health and Welfare Canada), were collected by
                the Institute for Social Research at York University for Dr. Linda Pederson and Dr.
                Shelley Bull.
        \end{tabular} }
    \end{center}
\end{table}

\subsection*{(i) Feasibility and prediction accuracy of two-layer networks}

In this experiment, the prediction accuracy of \texttt{ANNnet} is recorded and compared
with that of the \texttt{feedforwardnet} (abbreviated as \texttt{FFnet}) from the
Matlab's toolbox \cite{Matlab} based on the fully connected network structure of
two-layers. The activation function adopted for \texttt{ANNnet} is \texttt{softplus} with
random weights initialization, whereas \texttt{FFnet} adopted the default
`\texttt{tansig}' activation function (since \texttt{softplus} does not converge well
enough in this case) with the `\texttt{trainlm}' learning search. The test performance is
evaluated based on 10 trails of 10-fold stratified cross-validation tests for each of the
data set. The selection of the number of hidden nodes for each network is based on
another 10-fold cross-validation within the training set. The search for the number of
hidden nodes $h$ is conducted within the set $\{$1, 2, 3, 5, 10, 20, 30, 50, 80, 100,
200, 500$\}$. Table~\ref{table_summary_attributes} summarizes the chosen hidden node
sizes for the experimented networks for each data set. Apparently, there seems to have no
strong correlation between the choice of hidden node sizes for the two networks of
two-layer.

Fig.~\ref{fig_compare1} shows the average prediction accuracy recorded from the 10 trails
of stratified 10-fold cross-validation tests. The plot verifies (i) the feasibility of
gradient-free computation for real-world data with none of the results running into
computational ill-conditioning. Moreover, the results show a comparable average
prediction accuracy for \texttt{ANNnet} (with grand average accuracy of 82.03\% on the
first 41 data sets) and \texttt{FFnet} (with grand average accuracy of 82.36\% on the
first 41 data sets) for most data sets. The result for the 42nd data set for
\texttt{FFnet} was not available due to insufficient memory in our computing platform. As
indicated by the shaded regions, these results show a significantly larger fluctuation of
prediction accuracies for \texttt{FFnet} (where its results vary within the green region)
than that of \texttt{ANNnet} (red region) over the 10 trials of 10-fold cross-validation
tests.

\begin{figure}[hhh]
    \begin{center}
        \epsfxsize=12cm
        \epsfysize=6cm
        \epsffile[93    19   870   428]{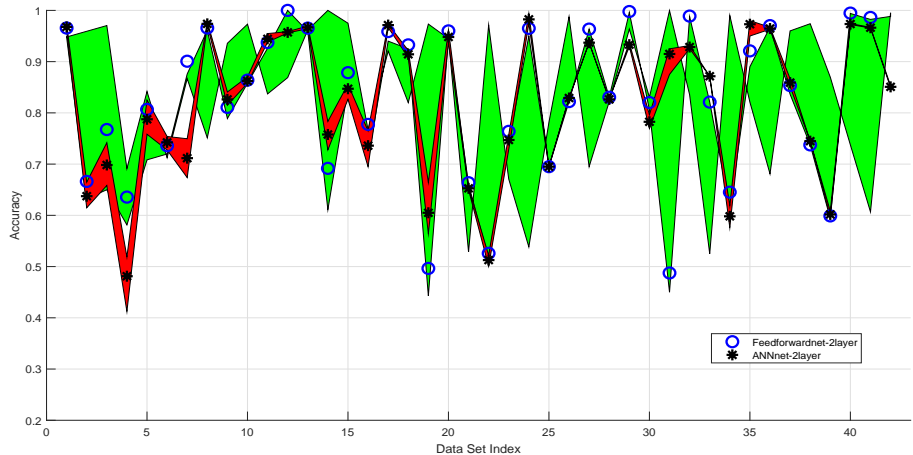}
        \caption{Prediction accuracy based on the average of 10 trials of 10-fold cross-validations.
            The shaded region depicts the maximum and minimum accuracy bounds and the mean accuracies for
            \texttt{FFnet} (green) and \texttt{ANNnet} (red) are respectively indicated by circles and asterisks.}
        \label{fig_compare1}
    \end{center}
\end{figure}

\subsection*{(ii) The training CPU processing time}

Fig.~\ref{fig_compare_CPU1} shows the training CPU processing times (which are measured
using the \texttt{cputime} function that accounts for per core processing time) of
\texttt{FFnet} and \texttt{ANNnet} based on the same hidden node size for each data set.
These results show at least an order (10 times) of speed-up for \texttt{ANNnet} over
\texttt{FFnet} in terms of the training time. The maximum ratio of training time speed-up
(CPU(\texttt{FFnet})/CPU(\texttt{ANNnet})) is $3.47\times 10^7$. The main reason for the
speed up in training time is the gradient-free analytic training solution. Although the
training time can be much faster than the conventional search mechanism, it is noted that
the proposed method requires a relatively large size of RAM memory to compute the
pseudoinverse of the entire data matrix.

\begin{figure}[hhh]
    \begin{center}
        \epsfxsize=12cm
        \epsfysize=6cm
        \epsffile[93    19   870   428]{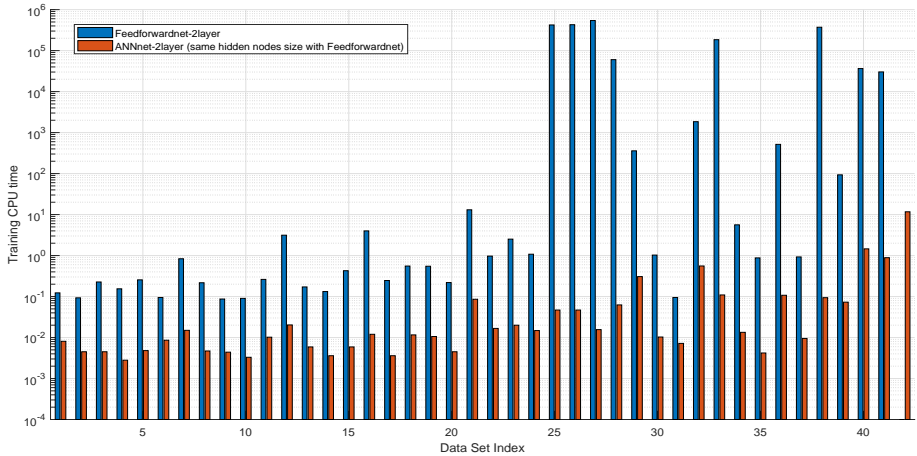}
        \caption{Comparing the training CPU time of \texttt{ANNnet} with that of \texttt{Feedforwardnet} adopting the
            same number of hidden nodes.}
        \label{fig_compare_CPU1}
    \end{center}
\end{figure}

\subsection*{(iii) The effect of deep layers and sparseness}

\noindent\textbf{Effect of deep layers}: Fig.~\ref{fig_compare_deeplayers} shows the
average accuracy of \texttt{ANNnet} with 2-layer, 3-layer and 5 layer structures plotted
respectively for each data set. The 2-layer network uses a structure of $h$-$q$ with
hidden layer size $h$ and output layer size $q$. The 3-layer network uses a $2h$-$h$-$q$
structure and the 5-layer network uses a $8h$-$4h$-$2h$-$h$-$q$ structure. The size of
$h$ is determined based on a cross-validation search within the training set for
$h\in\{$1, 2, 3, 5, 10, 20, 30, 50, 80, 100, 200, 500$\}$. The results in
Fig.~\ref{fig_compare_deeplayers} shows that the 5-layer \texttt{ANNnet} outperformed the
other two networks for many data sets. The grand average results for \texttt{ANNnet} of
2-layer, 3-layer and 5 layer are respectively 81.75\%, 81.28\% and 84.13\%. These results
appear to support good generalization for the network of 5 layers.

\begin{figure}[hhh]
    \begin{center}
        \epsfxsize=12cm
        \epsfysize=6cm
        \epsffile[93    19   870   428]{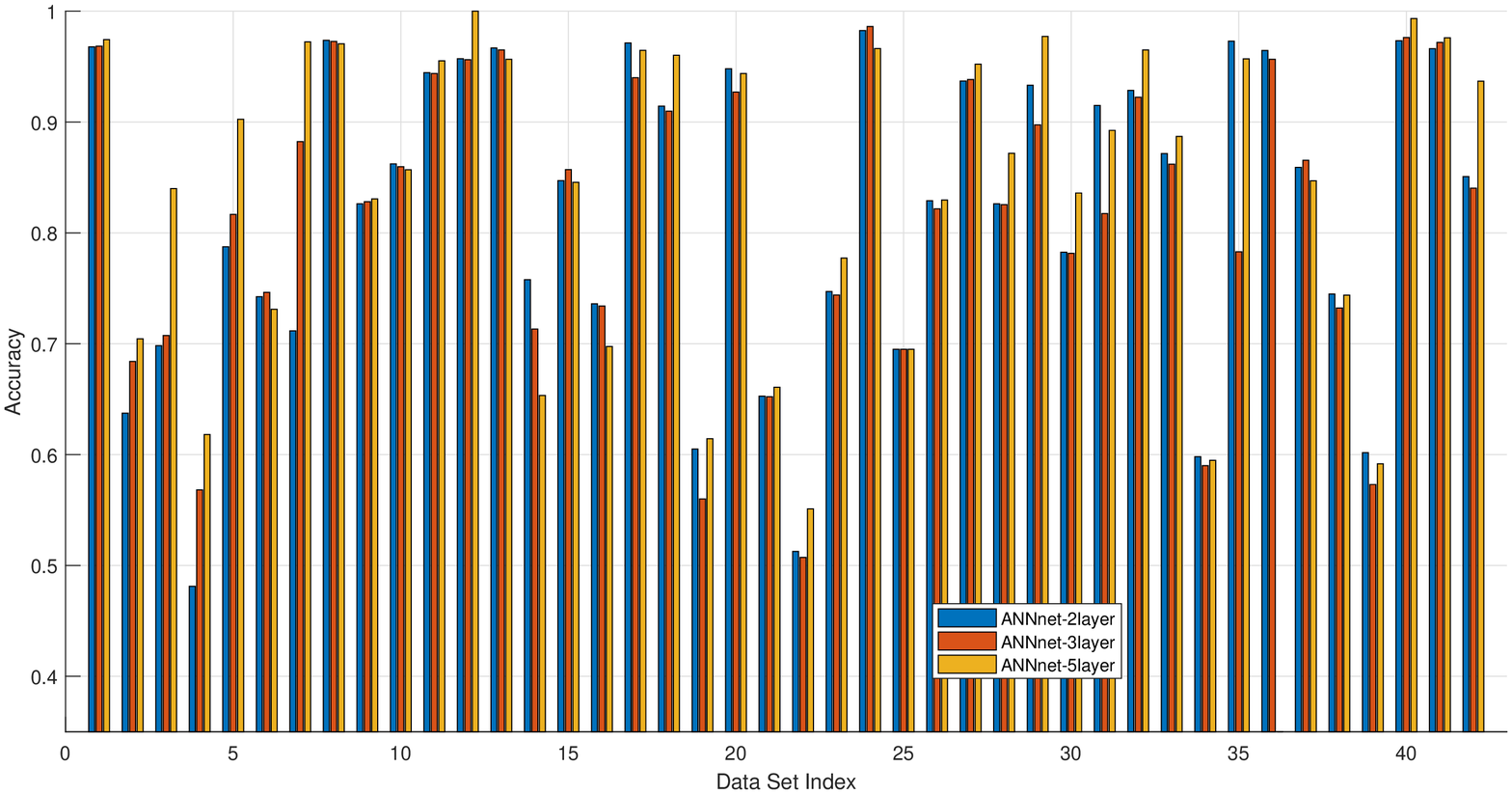}
        \caption{The effect of layers on accuracy performance.}
        \label{fig_compare_deeplayers}
    \end{center}
\end{figure}

\noindent\textbf{Effects of weight scaling and sparseness}: When the scaling factor, $c$,
for random initialization was set at 0.1, the grand average accuracy for \texttt{ANNnet}
was slightly improved (82.21\% on the first 41 data sets according to the results in (i))
for the two-layer network. As for the sparseness setting (receptive field $r$) the gross
average accuracies are observed to be 82.30\% and 83.02\% respectively for the
fully-connected and sparsely-connected (with receptive field of $r=3$ units in the first
layer) two-layer \texttt{ANNnet}s. In other words, the fully connected network and the
sparse network are with structures $h_1$-$h_2$ and $h_1^{r_3}$-$h_2$ respectively. For
the three-layer \texttt{ANNnet}, the observed gross average accuracies are 81.82\% and
81.93\% respectively for the fully-connected ($h_1$-$h_2$-$h_3$) and the
sparsely-connected ($h_1^{r_3}$-$h_2$-$h_3$) cases. These results (in
Fig.~\ref{fig_scaling_sparseness}) show a more significant impact of the weight scaling
than the receptive field (sparseness) on the generalization performance.

\begin{figure}[hhh]
    \begin{center}
        \begin{tabular}{cc}
            \epsfxsize=6.8cm
            \epsffile[46     8   544   329]{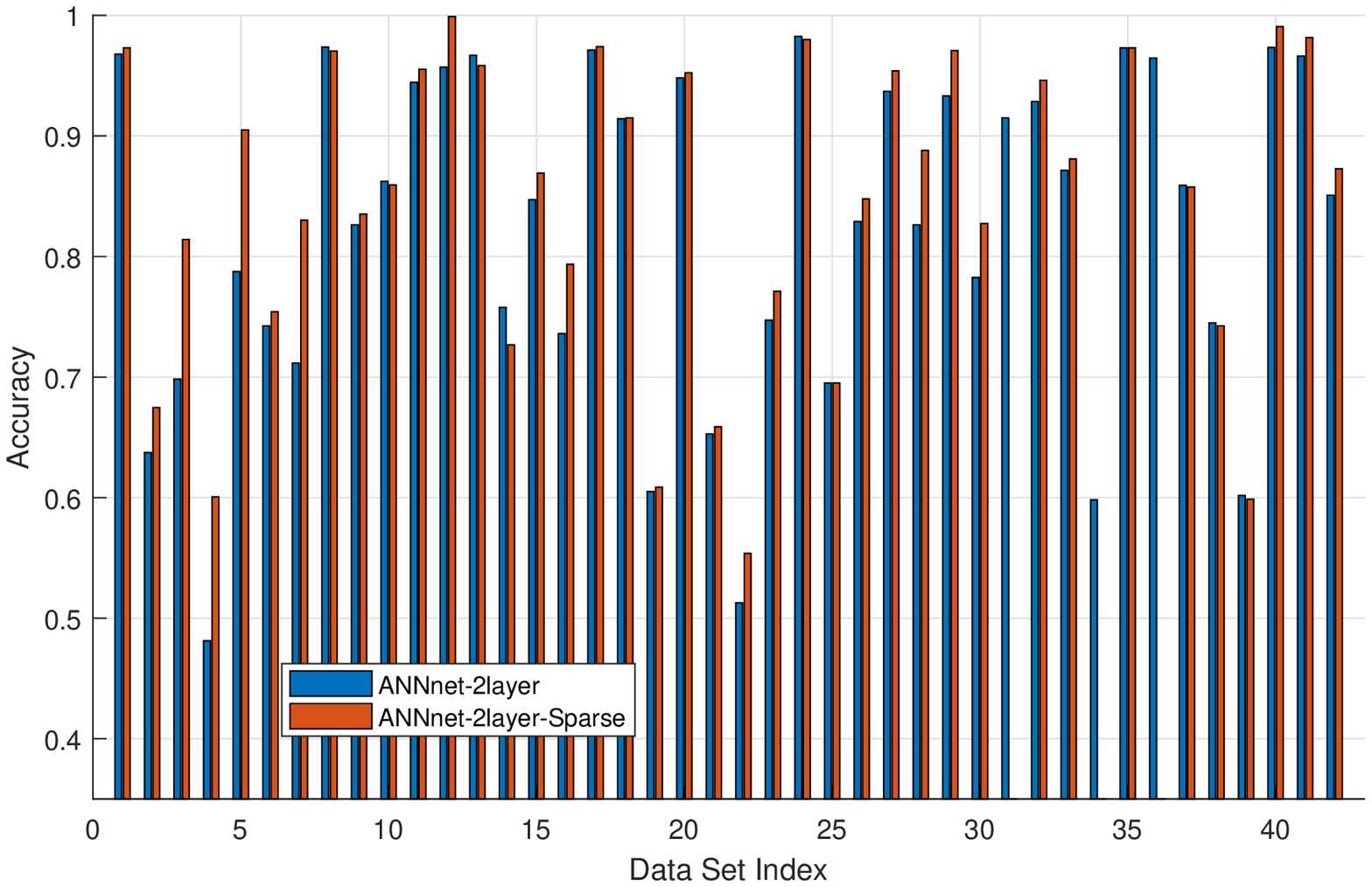}
            & \epsfxsize=6.8cm
            \hspace{0cm}
            \epsffile[46     8   544   329]{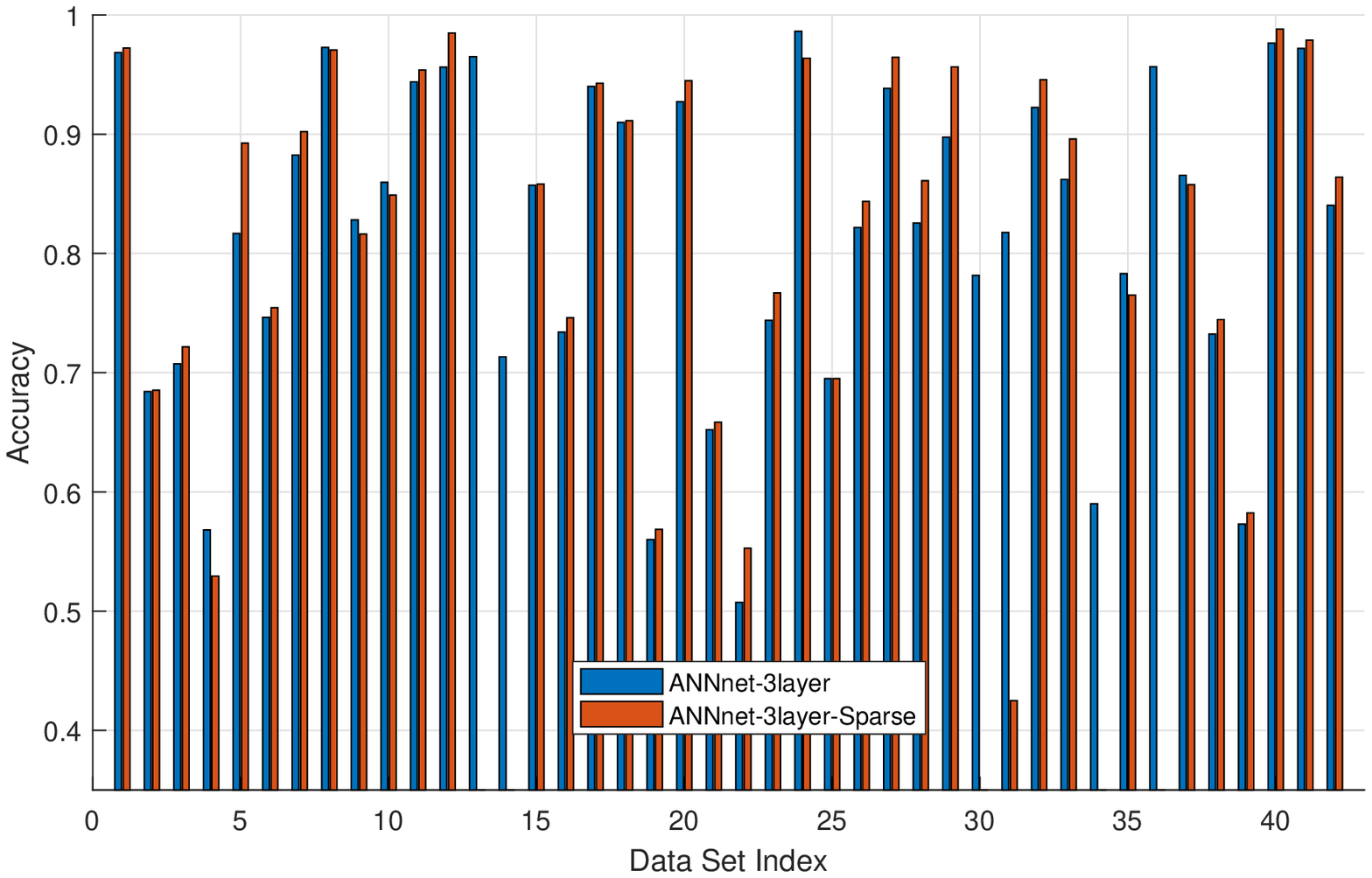}
            \\  (a) Sparse 2-layer \texttt{ANNnet} &  (b) Sparse 3-layer \texttt{ANNnet}\\*[5mm]
        \end{tabular}
        \caption{Comparing a non-sparse network (blue bars) with a sparse network (red bars):
            (a) 2-layer \texttt{ANNnet}, (b) 3-layer \texttt{ANNnet}.}
        \label{fig_scaling_sparseness}
    \end{center}
\end{figure}

\subsection{Summary of Results and Discussion}

\subsection*{Summary of Experimental Results}

In summary, the three goals of the experiments have been achieved as follows:
\begin{itemize}
    \item Based on an extensive numerical study utilizing 42 real-world data sets of small to medium
    sizes in terms of their dimension and sample sizes, the numerical feasibility of network
    learning based on the Moore-Penrose inverse is clearly verified. The structures of
    networks being studied included two-, three- and five-layers where few serious
    numerical stability issues are observed.

    \item The prediction accuracy of the proposed learning is observed to be comparable
    with that of the conventional gradient based learning. Attributed to the analytic
    learning formulation, the training processing time is observed to be at least 10
    times faster than that of the conventional learning. For some data sets, the
    maximum learning speed-up can go as high as $3.47\times 10^7$.

    \item Based on the study on networks of two-, three- and five-layers, the five-layer
    network shows more than 2\% better prediction accuracy than that of the other two networks
    in terms of the grand average accuracy over all data sets.

    \item The study shows that both the weights scaling and the sparseness of
    weights affect the generalization performance. This shall be a good topic for further
    research in the future.
\end{itemize}

\subsection*{Discussion and Future Works}

Capitalized on the classical learning theory on Moore-Penrose inverse, an analytical
learning formulation for multilayer neural networks has been proposed. Essentially, the
gradient-free learning is based on the minimization approximation through the
Moore-Penrose inverse projection via the column and row spaces. Based on the projection
manipulation, it turns out that the solutions to the layer weights are interdependent.
However, thanks to the enormous number of feasible solutions available in the network, an
analytical learning can be obtained by having the weights initialized layer by layer.
Though not limited to randomness, the feasibility of such a solution has been
demonstrated by randomly initialized weights in experimentation. Based on these results,
the following possibilities for future research are observed.

\begin{itemize}
    \item The formulation based on the layered full matrix network structure with
    elementwise operated activation functions does not depart from the conventional system of linear equations.
    Thanks to the inherent least squares approximation property of the Moore-Penrose inverse, such a formulation allows
    the network solution to be written in analytical form. This treatment in the linear
    system form not only makes the formulation analytic and transparent but also opens
    up vast research possibilities towards understanding of network learning and
    optimization.

    \item An immediate possibility for such research is regarding the effect of network
    depth towards prediction generalization. Attributed to the layered matrix structure
    that hinged upon the system of linear equations, the number of possible
    alternative solution to the system has been shown to be exponentially increased.
    This opens up the vast possibilities in initializing the network weights
    towards good generalization.

    \item Other possibilities include the structural construction particularly the sparse structure and the scale of the initial weights where they are found to influence the prediction generalization in the numerical experiments.

\end{itemize}

\section{Conclusion} \label{sec_conclusion}

Capitalized on the inherent property of least squares approximation offered by the
Moore-Penrose inverse, a gradient-free approach for solving the learning problem of
multilayer neural networks was proposed. The solution obtained from such an approach
showed that network learning boiled down to solving a set of inter-dependent weight
equations. By initializing the inner weight matrices either by random matrices or by the
data matrix, it turned out that the interdependency of weight equations can be decoupled
where an analytic learning can be achieved. Based on the analytic solution, the network
representation and generalization properties with respect to the layer depth were
subsequently analyzed. Our numerical experiments on a wide range of data sets of small to
medium sizes not only validated the numerical feasibility, but also demonstrated the
generalization capability of the proposed network solution.

\bibliographystyle{c:/ibb/tex1/IEEEtran2003}

\begin{thebibliography}{10}


\bibitem{LeCun3}
Y.~{LeCun} and Y.~Bengio, ``Convolutional networks for images, speech, and time
series,'' \emph{The handbook of brain theory and neural networks}, pp.
255--258, 1998.

\bibitem{AlexK1}
A.~Krizhevsky, I.~Sutskever, and G.~E. Hinton, ``Imagenet classification with
deep convolutional neural networks,'' in \emph{Proceedings of the 25th
    International Conference on Neural Information Processing Systems (NIPS
    2012)}, 2012, pp. 1097--1105.

\bibitem{Szegedy1}
C.~Szegedy, W.~Liu, Y.~Jia, P.~Sermanet, S.~Reed, D.~Anguelov, D.~Erhan,
V.~Vanhoucke, and A.~Rabinovich, ``Going deeper with convolutions,'' in
\emph{Proc. International Conference on Computer Vision and Pattern
    Recognition (CVPR)}, 2015, pp. 1--9.

\bibitem{Simonyan14verydeep}
K.~Simonyan and A.~Zisserman, ``Very deep convolutional networks for
large-scale image recognition,'' 2014. [Online]. Available:
{https://arxiv.org/abs/1409.1556}

\bibitem{HeKaiming1}
K.~He, X.~Zhang, S.~Ren, and J.~Sun, ``Deep residual learning for image
recognition,'' in \emph{Proc. International Conference on Computer Vision and
    Pattern Recognition (CVPR)}, 2016, pp. 770--778.

\bibitem{HuangGao1}
G.~Huang, Z.~Liu, L.~{van der Maaten}, and K.~Q. Weinberger, ``Densely
connected convolutional networks,'' in \emph{Proc. International Conference
    on Computer Vision and Pattern Recognition (CVPR)}, 2017, pp. 4700--4708.

\bibitem{Hornik4}
P.~Baldi and K.~Hornik, ``Neural networks and principal component analysis:
Learning from examples without local minima,'' \emph{Neural Networks},
vol.~2, pp. 53--58, 1989.

\bibitem{Baldi1}
P.~Baldi, ``Linear learning: Landscapes and algorithms,'' in \emph{Advances in
    Neural Information Processing Systems (NIPS 1988)}, 1988, pp. 65--72.

\bibitem{Baldi2}
P.~Baldi and Z.~Lu, ``Complex-valued autoencoders,'' \emph{Neural Networks},
vol.~33, pp. 136--147, 2012.

\bibitem{Kawaguchi1}
K.~Kawaguchi, ``Deep learning without poor local minima,'' in \emph{Proceedings
    of the 29th International Conference on Neural Information Processing Systems
    (NIPS 2016)}, 2016, pp. 586--594.

\bibitem{LeCun4}
A.~Choromanska, M.~Henaff, M.~Mathieu, G.~B. Arous, and Y.~LeCun, ``The loss
surfaces of multilayer networks,'' in \emph{Proceedings of the Eighteenth
    International Conference on Artificial Intelligence and Statistics}, 2015,
pp. 192--204.

\bibitem{LeCun5}
A.~Choromanska, Y.~LeCun, and G.~B. Arous, ``Open problem: The landscape of the
loss surfaces of multilayer networks,'' in \emph{Proceedings of The 28th
    Conference on Learning Theory}, 2015, pp. 1756--1760.

\bibitem{Kawaguchi2}
H.~Lu and K.~Kawaguchi, ``Depth creates no bad local minima,'' pp. 1--10, 2017.
[Online]. Available: {https://arxiv.org/abs/1702.08580}

\bibitem{YunCH1}
C.~Yun, S.~Sra, and A.~Jadbabaie, ``Global optimality conditions for deep
neural networks,'' in \emph{Proceedings of the 6th International Conference
    on Learning Representations (ICLR 2018)}, Vancouver, BC, Canada, 2018, pp.
1--14.

\bibitem{YBengio2}
L.~Dinh, R.~Pascanu, S.~Bengio, and Y.~Bengio, ``Sharp minima can generalize
for deep nets,'' 2017. [Online]. Available:
{https://arxiv.org/pdf/1703.04933.pdf}

\bibitem{YBengio3}
Y.~N. Dauphin, R.~Pascanu, C.~Gulcehre, S.~G. Kyunghyun~Cho, and Y.~Bengio,
``Identifying and attacking the saddle point problemin high-dimensional
non-convex optimization,'' in \emph{Advances in Neural Information Processing
    Systems (NIPS 2014)}, 2014, pp. 2933--2941.

\bibitem{ChenKe2}
K.~Chen, L.~Xu, and H.~Chi, ``Improved learning algorithms for mixture of
experts in multiclass classification,'' \emph{Neural Networks}, vol.~12,
no.~9, pp. 1229--1252, 1999.

\bibitem{LiHao1}
H.~Li, Z.~Xu, G.~Taylor, and T.~Goldstein, ``Visualizing the loss landscape of
neural nets,'' in \emph{Proceedings of the 6th International Conference on
    Learning Representations (ICLR 2018)}, Vancouver, BC, Canada, 2018, pp.
1--17.

\bibitem{Poggio7}
Q.~Liao and T.~Poggio, ``Theory of deep learning {II}: Landscape of the
empirical risk in deep learning,'' \emph{CBMM Memo No. 066}, pp. 1--45, June
2017.

\bibitem{Gunasekar1}
S.~Gunasekar, J.~D. Lee, D.~Soudry, and N.~Srebro, ``Implicit bias of gradient
descent on linear convolutional networks,'' in \emph{Proceedings of the 29th
    International Conference on Neural Information Processing Systems (NIPS
    2016)}, 2018, pp. 1--22.

\bibitem{Poggio10}
T.~Poggio, Q.~Liao, B.~Miranda, A.~Banburski, X.~Boix, and J.~Hidary, ``Theory
of deep learning {IIIb}: Generalization in deep networks,'' \emph{CBMM Memo
    No. 090}, pp. 1--37, June 2018.

\bibitem{Langford1}
J.~Langford and R.~Caruana, ``{(Not)} {Bounding} the true error,'' in
\emph{Advances in Neural Information Processing Systems 14}, T.~G.
Dietterich, S.~Becker, and Z.~Ghahramani, Eds.\hskip 1em plus 0.5em minus
0.4em\relax MIT Press, 2002, pp. 809--816. [Online]. Available:
{http://papers.nips.cc/paper/1968-not-bounding-the-true-error.pdf}

\bibitem{Dziugaite1}
G.~K. Dziugaite and D.~M.~Roy, ``Computing nonvacuous generalization bounds for
deep (stochastic) neural networks with many more parameters than training
data,'' 2017. [Online]. Available: {https://arxiv.org/pdf/1703.11008.pdf}

\bibitem{Neyshabur1}
B.~Neyshabur, ``Implicit regularization in deep learning,'' Ph.D. dissertation,
Toyota Technological Institute at Chicago, 2017.

\bibitem{Neyshabur2}
B.~Neyshabur, S.~Bhojanapalli, and N.~Srebro, ``A {PAC}-{Bayesian} approach to
spectrally-normalized margin bounds for neural networks,'' in
\emph{Proceedings of the 6th International Conference on Learning
    Representations (ICLR 2018)}, Vancouver, BC, Canada, 2018, pp. 1--9.

\bibitem{Bartlett1}
P.~L. Bartlett, D.~J. Foster, and M.~Telgarsky, ``Spectrally-normalized margin
bounds for neural networks,'' in \emph{Advances in Neural Information
    Processing Systems (NIPS 2017)}, 2017, pp. 6241--6250.

\bibitem{Neyshabur3}
B.~Neyshabur, S.~Bhojanapalli, D.~{McAllester}, and N.~Srebro, ``Exploring
generalization in deep learning,'' in \emph{Advances in Neural Information
    Processing Systems (NIPS 2017)}, 2017, pp. 5947--5956.

\bibitem{Kawaguchi3}
K.~Kawaguchi, L.~P. Kaelbling, and Y.~Bengio, ``Generalization in deep
learning,'' 2018. [Online]. Available:
{https://arxiv.org/pdf/1710.05468.pdf}

\bibitem{Brunelli11}
R.~Brunelli, ``Training neural nets through stochastic minimization,''
\emph{Neural Networks}, vol.~7, no.~9, pp. 1405--1412, 1994.

\bibitem{Patrick11}
{P. Patrick van der Smagt}, ``Minimisation methods for training feedforward
neural networks,'' \emph{Neural Networks}, vol.~7, no.~1, pp. 1--11, 1994.

\bibitem{ChenKe1}
K.~Chen, ``Deep and modular neural networks,'' \emph{Handbook on Computational
    Intelligence}, pp. 473--494, 2015, (Chapter 28).

\bibitem{Toh97}
K.-A. Toh, ``Learning from the kernel and the range space,'' in
\emph{Proceedings of the 17th IEEE/ACIS International Conference on Computer
    and Information Science}, Singapore, June 2018, pp. 417--422.

\bibitem{Toh98}
K.-A. Toh, Z.~Lin, Z.~Li, B.~Oh, and L.~Sun, ``Gradient-free learning based on
the kernel and the range space,'' \emph{https://arxiv.org/abs/1810.11581},
pp. 1--27, 27th October 2018.

\bibitem{SLCampbell1}
S.~L. Campbell and C.~D. Meyer, \emph{Generalized Inverses of Linear
    Transformations}, ({SIAM} edition of the work published by {Dover
    Publications, Inc.}, 1991)~ed.\hskip 1em plus 0.5em minus 0.4em\relax
Philadelphia, USA: Society for Industrial and Applied Mathematics, 2009.

\bibitem{Albert1}
A.~Albert, \emph{Regression and the Moore-Penrose Pseudoinverse}.\hskip 1em
plus 0.5em minus 0.4em\relax New York: Academic Press, Inc., 1972, vol.~94.

\bibitem{Adi1}
{Adi Ben-Israel} and {Thomas N.E. Greville}, \emph{Generalized Inverses: Theory
    and Applications}, 2nd~ed.\hskip 1em plus 0.5em minus 0.4em\relax New York:
Springer-Verlag, 2003.

\bibitem{MacAusland1}
R.~{MacAusland}, ``The {Moore-Penrose} inverse and least squares,'' April 2014,
lecture notes in Advanced Topics in Linear Algebra (MATH 420). [Online].
Available:
{http://buzzard.ups.edu/courses/2014spring/420projects/math420-UPS-sprin%
    g-2014-macausland-pseudo-inverse.pdf}

\bibitem{Duda1}
R.~O. Duda, P.~E. Hart, and D.~G. Stork, \emph{Pattern Classification},
2nd~ed.\hskip 1em plus 0.5em minus 0.4em\relax New York: John Wiley \& Sons,
Inc, 2001.

\bibitem{Davis1}
P.~J. Davis, \emph{Circulant Matrices}.\hskip 1em plus 0.5em minus 0.4em\relax
New York: Wiley, 1970, {ISBN} 0471057711.

\bibitem{Golub1}
G.~H. Golub and C.~F. {Van Loan}, \emph{Matrix Computations}.\hskip 1em plus
0.5em minus 0.4em\relax Johns Hopkins, 1996, {ISBN} 978-0-8018-5414-9.

\bibitem{Funa1}
K.-I. Funahashi, ``On the approximate realization of continuous mappings by
neural networks,'' \emph{Neural Networks}, vol.~2, no.~3, pp. 183--192,
1989.

\bibitem{Hornik1}
K.~Hornik, M.~Stinchcombe, and H.~White, ``Multi-layer feedforward networks are
universal approximators,'' \emph{Neural Networks}, vol.~2, no.~5, pp.
359--366, 1989.

\bibitem{Cybenko1}
G.~Cybenko, ``Approximations by superpositions of a sigmoidal function,''
\emph{Math. Cont. Signal \& Systems}, vol.~2, pp. 303--314, 1989.

\bibitem{HechNiel3}
R.~Hecht-Nielsen, ``{Kolmogorov}'s mapping neural network existence theorem,''
in \emph{Proceedings of IEEE First International Conference on Neural
    Networks (ICNN)}, vol. III, 1987, pp. 11--14.

\bibitem{Kurkova2}
V.~K{\.u}rkov{\'a}, ``{Kolmogorov}'s theorem and multilayer neural networks,''
\emph{Neural Networks}, vol.~5, no.~3, pp. 501--506, 1992.

\bibitem{Leshno1}
M.~Leshno, V.~Y. Lin, A.~Pinkus, and S.~Schocken, ``Multilayer feedforward
networks with a nonpolynomial activation function can approximate any
function,'' \emph{Neural Networks}, vol.~6, pp. 861--867, 1993.

\bibitem{ZhangCY1}
C.~Zhang, S.~Bengio, M.~Hardt, B.~Recht, and O.~Vinyals, ``Understanding deep
learning requires rethinking generalization,'' in \emph{Proceedings of the
    5th International Conference on Learning Representations (ICLR 2017)},
Toulon, France, 2017, pp. 1--15.

\bibitem{Hastie01}
T.~Hastie, R.~Tibshirani, and J.~Friedman, \emph{The Elements of Statistical
    Learning: Data Mining, Inference, and Prediction}.\hskip 1em plus 0.5em minus
0.4em\relax Canada: Springer, 2001.

\bibitem{ChenKe3}
K.~Chen and D.~L. Wang, ``Perceiving geometric patterns: from spirals to
inside/outside relations,'' \emph{IEEE Transactions on Neural Networks},
vol.~12, no.~5, pp. 1084--1102, 2001.

\bibitem{UCI1b}

M.~Lichman, ``{UCI} machine learning repository,'' 2013. [Online]. Available:
{http://archive.ics.uci.edu/ml}

\bibitem{Toh39}
K.-A. Toh, Q.-L. Tran, and D.~Srinivasan, ``Benchmarking a reduced multivariate
polynomial pattern classifier,'' \emph{IEEE Trans. on Pattern Analysis and
    Machine Intelligence}, vol.~26, no.~6, pp. 740--755, 2004.

\bibitem{Matlab}
{The MathWorks}, ``Matlab and simulink,'' in
\emph{[http://www.mathworks.com/]}, 2017.


\end{thebibliography}

\end{document}